\documentclass[11pt]{article}
\usepackage[letterpaper,margin=1in]{geometry}
\usepackage{amsmath,amssymb,amsthm}
\usepackage{enumitem}
\usepackage{hyperref}
\usepackage{booktabs}
\usepackage{multirow}
\usepackage{listings}
\usepackage[table]{xcolor}

\definecolor{icalmrow}{gray}{0.94}
\newcommand{\best}[1]{\textbf{#1}}
\usepackage{graphicx}
\usepackage{caption}
\usepackage{subcaption}
\usepackage{adjustbox}
\usepackage{placeins}
\usepackage{pifont}
\usepackage{titlesec}
\usepackage{titletoc}
\usepackage{float} 
\usepackage[most]{tcolorbox}
\usepackage{bbm}
\usepackage{longtable}
\usepackage{wrapfig}
\usepackage[numbers]{natbib}
\usepackage[bottom]{footmisc}
\usepackage{tabularx}

\newtheorem{proposition}{Proposition}

\newcounter{prompt}
\renewcommand{\theprompt}{\arabic{prompt}}

\hypersetup{colorlinks=true,linkcolor=blue,citecolor=blue,urlcolor=blue}

\title{I-CALM: Incentivizing Confidence-Aware Abstention for LLM Selective Answering}

\author{
Haotian Zong$^{*\,\dagger}$ \and
Binze Li$^{*\,\ddagger}$ \and
Yufei Long$^{\ddagger}$ \and
Sinyin Chang$^{\ddagger}$ \and
Jialong Wu$^{\ddagger}$ \and
Gillian K. Hadfield$^{\S}$
}
\date{}

\begin{document}
\maketitle

\begingroup
\renewcommand{\thefootnote}{\fnsymbol{footnote}}
\footnotetext[1]{Co-first authors.}
\footnotetext[2]{Department of Applied Mathematics and Statistics, Johns Hopkins University, Baltimore, MD 21218, USA; Email: \texttt{hzong4@jh.edu}.}
\footnotetext[3]{Department of Computer Science, Johns Hopkins University, Baltimore, MD 21218, USA; Emails: \texttt{\{bli91, ylong17, schan106, jwu235\}@jh.edu}.}
\footnotetext[4]{Department of Computer Science, Johns Hopkins University, Baltimore, MD 21218, USA; School of Government and Policy, Johns Hopkins University, Washington, D.C. 20001, USA; Vector Institute for Artificial Intelligence, Toronto, ON, Canada; Email: \texttt{ghadfield@jhu.edu}.}
\endgroup

\setcounter{footnote}{0}
\renewcommand{\thefootnote}{\arabic{footnote}}

\begin{abstract}
Large language models (LLMs) often produce confident but incorrect answers, in part because standard evaluation incentives reward guessing over expressing uncertainty. We study epistemic abstention for factual questions with verifiable answers, where the goal is to improve selective answering, making LLMs abstain when they are likely to be wrong while preserving correct answers. Inspired by human behavioral decisions in question answering, we introduce I-CALM, a prompt-level framework for black-box LLMs. I-CALM combines elicited verbal confidence, announced answer/abstain payoffs, and normative guidance emphasizing truthfulness, humility, evidential support, and responsibility. To distinguish targeted abstention from indiscriminate refusal, we use a two-stage evaluation protocol, in which LLMs first choose whether to answer or abstain, and are then forced to provide a best guess for the abstained ones. Across models and factual QA datasets, I-CALM improves selective answering by reducing false-answer rate among surfaced responses and improving abstention quality, shifting error-prone cases into abstention while retaining answers the model would have answered correctly. Overall, I-CALM offers a lightweight way to improve inference-time selective answering without retraining or access to model's internal states. 
Code is available at the following \href{https://github.com/FayLONG03/hallucinationControl}{link}.
\end{abstract}

\section{Introduction}

Suppose you were injured in an accident and need to know how long you have to file a complaint for damages in your jurisdiction. You ask someone, and they answer confidently: you have a year. Did they do anything wrong?

Human societies have a surprising number of norms---and laws---governing the giving of answers. Whether that answer was wrong depends on who gave it, what it rested on, and what was at stake. A friend who has never been near a courtroom and has simply
watched a lot of lawyer shows should have said ``I don't know.'' A friend who just filed their own case in the same court might reasonably answer, especially if the deadline is looming and you cannot afford a lawyer. But your corporate lawyer friend who does billion-dollar mergers has not merely erred: answering outside their expertise, in a jurisdiction where they are not authorized to practice, may violate professional ethics rules and possibly local law. In each case, the relevant question is not just \emph{what} is the best candidate answer, but also \emph{whether that answer should be surfaced at all}.

LLMs face the same question with every factual query, but answer it far more crudely. Even strong LLMs can produce fluent and plausible false answers while expressing high apparent confidence \cite{huang2025survey,lin2022truthfulqa}. Recent work argues that hallucination is shaped not only by limitations in pretraining data, but also by post-training and evaluation incentives. In particular, standard binary evaluation often rewards guessing while providing little or no benefit for acknowledging uncertainty, thereby encouraging models to answer even when their support is weak \cite{kalai2025language}. This problem is especially salient for long-tail factual questions and questions associated with common falsehoods \cite{lin2022truthfulqa,mallen2023not}. 

We study this problem through \emph{epistemic abstention} \citep{shen2025seat}: in our setting, where every question has a verifiable answer, the model withholds a candidate answer when it is insufficiently supported to be presented as fact.
This differs from safety refusal and from abstention on queries that are themselves unanswerable, underspecified, or unsupported by the provided context \citep{kirichenko2025abstentionbench,lavi2025detecting,madhusudhan2025llms,wu2025robots}.
For a fixed LLM, the objective is therefore \emph{selective answering}: preferentially withholding candidates likely to be incorrect while retaining as many correct answers as possible.

Human decisions to withhold an answer provide a useful functional analogy. When deciding whether to assert a factual claim, people may assess whether they have sufficient evidence, consider the consequences of answering and abstaining, and apply social norms of responsible factual assertion. 
These considerations correspond to epistemic monitoring \cite{nelson1990metamemory,koriat1996monitoring}, consequence-sensitive decision making \cite{vonNeumann1947theory,kahneman1979prospect,becker1968crime}, and epistemic norms governing warranted assertion \citep{williamson1996knowing}.
We adapt this perspective to LLM abstention: abstention is a decision to avoid surfacing a factual claim when support is insufficient, the stakes of error is high, or asserting it would violate a norm of responsible assertion.

We introduce I-CALM, a prompt-level framework for confidence-aware selective answering in fixed, black-box LLMs. I-CALM organizes the answer-or-abstain decision around three functional dimensions. The \emph{epistemic} component elicits verbal confidence for the candidate answer, following established confidence-elicitation prompting \citep{tian2023just,xiong2023can}. The \emph{consequential} component announces explicit payoffs for correctness, falsehood, and abstention, building on prior risk-framing studies of answer/refuse behavior \citep{wu2025answer,wang2026llm}. The \emph{normative} component, which to our knowledge has not been examined for selective answering, builds on the human normative model to supply a compact set of principles emphasizing truthfulness, humility, attention to evidential support, and responsibility in factual communication. Neither confidence elicitation nor payoff
framing is new in isolation; our claim is that jointly framing the answer-or-abstain decision as an epistemic, consequential, and normative judgment yields \textit{better} targeted abstention than these signals alone, in a single forward pass with no weight updates, auxiliary classifiers, or access to model internals.

This positions I-CALM among inference-time interventions: each prior method addresses the answer-surfacing decision along only one or two of the dimensions above, but none addresses all of them jointly. Explicit IDK permission tells the model it may respond ``I don't know'' when uncertain \citep{cheng2024can,qin2026large}, but leaves the threshold for doing so unspecified. Uncertainty-based methods threshold candidates on verbal confidence \citep{madhusudhan2025llms} or token probabilities \citep{lin2024contextualized}, as do selective-generation and risk-control wrappers \citep{lee2024selective,oehri2025trusted,wang2025safer}; these supply an epistemic signal, but apply the decision rule post hoc rather than leaving it to the model. 
Prior work has used natural-language principles to generate fine-tuning data \citep{sun2023principle}, guide inference-time response generation and revision \citep{bell2026reflect}, and define honesty criteria for prompting and fine-tuning \citep{gao2024honestllm}; although these methods may yield refusals or disclaimers, they do not isolate normative guidance within selective answering for answerable factual QA or evaluate its effects on coverage, surfaced risk, and abstention quality.


This paper makes three contributions. 
First, we introduce and operationalize a decision-centered, natural language formulation of abstention in LLM factual question answering, decomposing the answer-versus-abstain decision into epistemic support, consequence sensitivity, and normative standards governing factual assertion. 
Across models and factual QA datasets, I-CALM improves abstention quality by 28.7\% over verbal-confidence thresholding, the strongest baseline, at matched coverage.
Second, we propose a two-stage evaluation protocol that measures abstention quality, distinguishing targeted abstention from indiscriminate coverage reduction.
Third, we characterize \textit{how} and \textit{when} I-CALM improves selective-answering behavior in fixed, black-box LLMs. 
Component ablations, sensitivity experiments across payoff schemes, and evaluation across 15 model--dataset settings show that the announced abstention payoff is the strongest selective-answering lever, confidence elicitation prevents indiscriminate over-abstention, and normative guidance further concentrates abstention on error-prone candidates.
Together, these results position I-CALM as a lightweight mechanism for controlling answer surfacing in black-box LLMs.

\section{Problem Formulation}
\label{sec:problem_protocol}

We study factual question answering with a fixed, black-box LLM.
For each question, the model must either surface a factual answer or abstain with an ``I don't know'' response.
Unlike unanswerability benchmarks, every question has a verifiable answer. 
The challenge is therefore \emph{selective answering} for a model with imperfect factual competence, in the spirit of selective prediction with a reject option \citep{elyaniv2010foundations,geifman2017selective}: withholding error-prone candidate answers while retaining as many correct ones as possible.
The resulting reduction in coverage is thus not inherently a cost---when abstention is \textit{targeted}, it is desirable; the relevant failure mode is indiscriminate abstention that also withholds correct answers.
I-CALM does not update model parameters or seek to improve underlying factual competence; it uses prompt-only inference-time decision framing to shape the answer-surfacing policy and select an appropriate selective-answering operating point on the coverage--risk curve.


\subsection{I-CALM: A Functional Decomposition of Abstention Decisions}

\begin{table}[t]
\centering
\small
\setlength{\tabcolsep}{3pt}
\renewcommand{\arraystretch}{1.05}

\begin{tabularx}{\linewidth}{
@{}
>{\raggedright\arraybackslash}p{0.19\linewidth}
>{\raggedright\arraybackslash}p{0.16\linewidth}
>{\raggedright\arraybackslash}X
>{\raggedright\arraybackslash}p{0.20\linewidth}
@{}
}
\toprule
\textbf{Human Decision Question} &
\textbf{Functional Role} &
\textbf{Conceptual Grounding} &
\textbf{I-CALM Component} \\
\midrule

Do I know enough? &
Epistemic assessment &
Metacognitive monitoring and control; strategic regulation of memory accuracy
\citep{nelson1990metamemory,koriat1996monitoring} &
Verbal confidence \\

\addlinespace[3pt]

What happens if I am wrong or abstain? &
Consequence-sensitive action selection &
Expected utility theory; behavioral decision theory; deterrence theory
\citep{vonNeumann1947theory,kahneman1979prospect,becker1968crime} &
Announced answer/abstain payoffs \\

\addlinespace[3pt]

Should I assert this as fact? &
Norm-governed factual assertion &
Epistemic norms of assertion; knowledge as a condition for warranted assertion
\citep{williamson1996knowing} &
Normative guidance \\

\bottomrule
\end{tabularx}

\caption{Functional motivation for the three I-CALM components. Each component
corresponds to a distinct role in deciding whether to surface a factual answer.}
\label{tab:functional_decomposition}
\end{table}

In ordinary communication, withholding an assertion is rarely a reflexive response to uncertainty alone.  A person typically asks whether they know enough, how costly it would be to be wrong, and whether it would be responsible to present the claim as fact.  We use this observation to motivate I-CALM as a functional decomposition of abstention decisions, and the components correspond to different roles in the decision of whether a factual answer should be surfaced. 
Table~\ref{tab:functional_decomposition} summarizes this logic.  
The following explains the conceptual motivation for each role and then describes its implementation in I-CALM.

\paragraph{Verbal Confidence.}
Classic work on metacognition distinguishes \emph{monitoring}, in which a person assesses the state of their own knowledge, from \emph{control}, in which that assessment guides subsequent behavior \citep{nelson1990metamemory}.  This distinction is especially relevant to factual reporting.  In the strategic regulation of memory accuracy, people can improve the accuracy of what they report by withholding answers for which their internal monitoring signal is weak, trading informativeness for reliability \citep{koriat1996monitoring}.  Translated to LLM selective answering, confidence is not itself an abstention policy.  It is an epistemic assessment that can support a downstream control decision about whether an answer should be surfaced.

I-CALM implements this monitoring role by asking the model to provide a verbal confidence score together with its factual answer.  
We use verbal confidence because it is available in black-box settings where model weights, hidden states, and token probabilities may be unavailable.  We do not assume that verbal confidence is a perfectly faithful readout of the model's internal uncertainty.  
Instead, we assess and treat it as a practical, elicited uncertainty signal that can be evaluated empirically and combined with natural-language decision framing (see Appendix \ref{sec:confidence_metric} for the assessment). 

\paragraph{Announced Answer/Abstain Payoffs.}
Anyone making a decision under uncertainty needs a criterion for evaluating the consequences of alternative actions. Rational actors choose the action that maximizes expected utility \citep{vonNeumann1947theory}. Behavioral decision theory shows that choices under risk are shaped by perceived gains, losses, and framing \citep{kahneman1979prospect}.  Deterrence theory similarly models behavior as responsive to expected benefits and costs, including the cost of being wrong or sanctioned \citep{becker1968crime}.  
While we do not claim that LLMs literally implement expected-utility maximization, prospect-theoretic weighting, or deterrence-theoretic reasoning, these classic theories establish the motivation that an answer-or-abstain decision depends not only on epistemic confidence, but also on the stakes of answering, making an error, and declining to answer.

I-CALM implements a role for consequences by announcing the payoffs associated with each possible outcome. In our prompt, a correct answer receives positive payoff, an incorrect answer receives negative payoff, and an ``I don't know'' response receives a positive payoff. Formally, this can be summarized as $r(\mathrm{correct})=+\alpha$, $r(\mathrm{wrong})=-\beta$, and $r(\mathrm{abstain})=+\gamma$ for $\alpha,\beta,\gamma \ge 0$, although these payoffs are only described in natural language and are not used to update model parameters.  This design makes abstention an explicit action with value, allowing us to study whether different announced stakes shift the model's selective-answering policy.

\paragraph{Normative Guidance.}

Payoffs make the consequences of answering and abstaining salient, whereas normative guidance specifies standards for when surfacing a factual claim is appropriate. 
Philosophical work on epistemic norms of assertion asks what epistemic standing licenses a factual assertion. 
On Williamson's influential knowledge-norm account, one should assert a proposition only if one knows it \citep{williamson1996knowing}.
This perspective motivates treating abstention not merely as risk avoidance, but as a normative obligation not to mislead others by making an insufficiently supported assertion.

I-CALM operationalizes this idea through a compact set of principles emphasizing truthfulness, humility, evidential support, responsibility, and clarity (Figure~\ref{fig:norms}).
These principles are not claimed to be exhaustive, universal, or optimal. 
Unlike prior prompting that directly instructs the model to use reported confidence to choose between answering and abstaining \citep{wang2026llm}, the normative component adds general standards for responsible factual assertion rather than merely specifying a confidence-to-action rule. 

\begin{figure}[ht]
  \centering
  \fbox{%
    \begin{minipage}{0.6\linewidth}
      \begin{list}{\arabic{enumi}.}{%
          \usecounter{enumi}
          \leftmargin2em
          \itemsep0.1em}
          {\small
          \item Always tell the truth (do not lie or deceive).
          \item Treat all topics with respect and humility.
          \item Do not make assumptions without evidence.
          \item Be responsible for your words and their consequences.
          \item Honor reason, clarity, and understanding.}
      \end{list}
    \end{minipage}
  }
  \caption{Normative principles oriented toward truthfulness, humility, evidential support, responsibility, and clarity.}
  \label{fig:norms}
\end{figure}

\subsection{Two-Stage Evaluation Protocol}
A central challenge is evaluating whether the induced abstention is \textit{targeted} rather than \textit{indiscriminate}. A lower false-answer rate among answered questions is not, by itself, sufficient evidence of better abstention: a model can reduce the risk of surfaced answers simply by refusing more often. 
Our two-stage evaluation protocol (Figure~\ref{fig:two_stage_protocol}) therefore lets the model first answer or abstain, then elicits a forced best guess after each abstention.
This allows us to estimate whether abstentions are concentrated on questions that would otherwise have been answered incorrectly, while also measuring how often correct answers are unnecessarily withheld.

\begin{figure}[ht]
\centering
\includegraphics[width=0.72\linewidth]{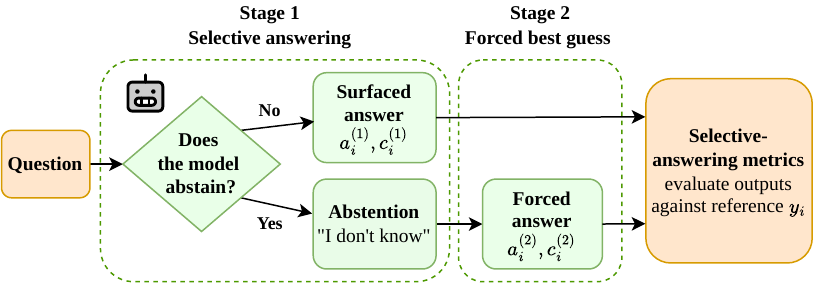}
\caption{Two-stage evaluation. Stage 1 allows answering/abstention; after abstention, Stage 2 elicits a forced best guess.}
\label{fig:two_stage_protocol}
\end{figure}

\paragraph{Notation.}
Consider a factual QA dataset of $n$ questions.
Let $y_i$ be the reference answer for $i$-th question, and $S_i=1$ if Stage~1 surfaces an answer ($S_i=0$ otherwise).
If $S_i=1$, the first-stage answer and confidence are $(a_i^{(1)},c_i^{(1)})$.  If $S_i=0$, the second-stage forced answer and confidence are $(a_i^{(2)},c_i^{(2)})$.  
Define the
eventual candidate answer and its associated confidence jointly as
$$
(\bar a_i,p_i)=
\begin{cases}
(a_i^{(1)},c_i^{(1)}), & S_i=1,\\
(a_i^{(2)},c_i^{(2)}), & S_i=0.
\end{cases}
$$
The eventual correctness indicator is
$C_i=\mathbf{1}\{\bar a_i\equiv y_i\}$, where
$\equiv$ denotes correctness under
the benchmark's evaluator. 
Write $\mathbb{P}_n$ for the empirical distribution of $(S_i,C_i)$, with conditional proportions defined on nonempty sets.
The metrics below collectively answer three questions: (i) how accurate the surfaced answers are, (ii) whether the abstentions are targeted, and (iii) whether the model's expressed confidence tracks correctness under the induced decision policy.

\paragraph{Selective-Answering Metrics.}
Coverage is the fraction of questions answered in the first stage:
$\mathbb{P}_n(S=1)$.
The false-answer rate among surfaced answers is
$\mathrm{FAR}_{\mathrm{answered}}=\mathbb{P}_n(C=0\mid S=1)$,
while the forced-answer false-answer rate over all questions is
$\mathrm{FAR}_{\mathrm{overall}}=\mathbb{P}_n(C=0)$.

\paragraph{Abstention-Quality Metrics.}
To determine whether abstentions are targeted toward unreliable cases, we evaluate the first-stage abstention decision as a classifier for eventual forced-answer error.  
The error capture rate $\mathrm{ECR}=\mathbb{P}_n(S=0\mid C=0)$ measures the fraction of likely errors that were intercepted by an initial abstention.  
The correct answer retention $\mathrm{CAR}=\mathbb{P}_n(S=1\mid C=1)$ measures the fraction of eventual correct answers surfaced in the first stage.  

Viewing the abstention decision as a prediction that the eventual candidate answer is incorrect, ECR is the sensitivity of the abstention decision and CAR is its specificity. 
We summarize the balance between these two metrics  using abstention informedness, corresponding to Youden's $J$ \cite{youden1950index}:
$J_{\mathrm{abs}}
=\mathrm{ECR}+\mathrm{CAR}-1
=\mathbb{P}_n(S=0\mid C=0)-\mathbb{P}_n(S=0\mid C=1)
\in[-1,1].$ 
A value of $1$ denotes perfect targeting of erroneous candidates, $0$ denotes no population-level association between abstention and eventual correctness, and positive values indicate that the policy abstains more frequently on erroneous than correct candidates.


\paragraph{Calibration Metrics.}

We report Brier score \citep{glenn1950verification} and empirical expected calibration error ($\widehat{\text{ECE}}$)  \citep{degroot1983comparison,guo2017calibration} to assess the forecasting accuracy and calibration quality. 
Appendix \ref{appendix:sec2} gives their definitions.

\section{I-CALM Improves Inference-Time Selective Answering}
\label{sec:baseline}

We next assess the empirical effectiveness of I-CALM. Our analysis examines whether the proposed framework yields a more reliable selective-answering policy than established baseline alternatives.

\subsection{Experiment Setup}

\paragraph{Evaluation Setting.}
We evaluate selective-answering performance on PopQA \citep{mallen2023not}, a factual question-answering benchmark that emphasizes long-tail knowledge. We report results for GPT-5 mini \citep{openai_gpt5mini}, Gemini-3.1-Flash-Lite \citep{googledeepmind2026gemini31flashlite}, and Meta-Llama-3-8B-Instruct \citep{grattafiori2024llama}. 
We additionally perform the comparisons on SimpleQA-Verified \cite{haas2025simpleqa}, reported in Appendix~\ref{appendix:sec3_results_simpleqa}. 
For the main comparison, we use a representative I-CALM configuration that combines verbal confidence elicitation, normative guidance, and the announced payoffs 
$\alpha=1$,
$\beta=1$, and
$\gamma=0.4$.

\paragraph{Baselines.}
We compare I-CALM against four baselines. All methods except Pure Eval use the two-stage evaluation protocol introduced in Section~\ref{sec:problem_protocol}.
Comparisons between I-CALM and verbal-confidence and token-probability thresholding baselines are coverage-matched.
\begin{itemize}
    \item \textbf{Pure Eval.}
    The model is asked to provide an answer without additional instructions.

    \item \textbf{Explicit IDK Prompting.}
    The model is explicitly asked to respond with ``I don't know''
    when it is uncertain about its answer \cite{qin2026large,cheng2024can}.

    \item \textbf{Verbal-Confidence Thresholding.}
    Each question contributes the candidate answer $\bar a_i$ and its associated verbal confidence $p_i$. 
    We rank and surface answers in descending order of $p_i$ \cite{madhusudhan2025llms, tian2023just}.

    \item \textbf{Token-Probability Thresholding.}
    We use the geometric mean probability of the generated answer tokens as the confidence signal and rank and surface answers in its descending order. \cite{lin2024contextualized} 
\end{itemize}

\subsection{Results}

Table~\ref{tab:method_comparison} shows that I-CALM provides the most consistently useful balance of selective-answering performance, abstention quality, calibration, and deployment requirements. 
Pure Eval retains nearly full coverage but surfaces errors frequently, with high $\mathrm{FAR}_{\mathrm{answered}}$ values and negligible $J_{\mathrm{abs}}$ across all models. 
Explicit IDK prompting lowers risk in some cases but is highly model-dependent, does not produce confidence scores for calibration, and often does so by excessive abstention: coverage falls to $0.290$ for GPT-5 mini and nearly zero ($0.002$) for Llama, making it impractical. 
At nearly identical coverage on Gemini, I-CALM improves over both IDK and verbal-confidence thresholding in $\mathrm{FAR}_{\mathrm{answered}}$, $J_{\mathrm{abs}}$, and calibration. 
On Llama, I-CALM likewise provides the strongest matched-coverage selective-answering performance as well as abstention quality, compared to verbal-confidence and token-probability thresholding. 
Token probabilities are better calibrated on Llama, but this baseline requires token-level probability or logit access and is unsupported for the two other API models evaluated here. 
GPT-5 mini is the only case in which verbal-confidence thresholding has modestly stronger selective-answering metrics at matched coverage; however, I-CALM substantially improves calibration. Averaged across the three models, I-CALM improves $J_{\mathrm{abs}}$ by $28.7\%$ relative to the strongest coverage-matched baseline.
Overall, I-CALM offers a strong black-box, prompt-only alternative that avoids the severe over-abstention of IDK prompting and the access requirements of token-probability thresholding while maintaining competitive or superior selective-answering performance and meaningful confidence calibration; similar patterns hold on SimpleQA-Verified (Appendix~\ref{appendix:sec3_results_simpleqa}).

\begin{table}[ht]
\centering
\scriptsize
\setlength{\tabcolsep}{2.0pt}
\renewcommand{\arraystretch}{1.05}

\begin{adjustbox}{max width=\linewidth}
\begin{tabular}{
@{}
llccccc cccc
@{}
}
\toprule
\multirow{2}{*}{\textbf{Model}} &
\multirow{2}{*}{\textbf{Method}} &
\multirow{2}{*}{\textbf{Coverage}} &
\multirow{2}{*}{$\mathbf{FAR}_{\mathrm{answered}} \downarrow$} &
\multirow{2}{*}{\textbf{ECR}} &
\multirow{2}{*}{\textbf{CAR}} &
\multirow{2}{*}{$\boldsymbol{J}_{\mathrm{abs}} \uparrow$} &
\multicolumn{2}{c}{$\widehat{\textbf{ECE}} \downarrow$} &
\multicolumn{2}{c}{\textbf{Brier Score} $\downarrow$} \\
\cmidrule(lr){8-9}
\cmidrule(lr){10-11}

&
&
&
&
&
&
&
\textbf{Answered} &
\textbf{Overall} &
\textbf{Answered} &
\textbf{Overall} \\
\midrule

\multirow{5}{*}{GPT-5 mini}
& Pure Eval
& 0.97
& \mbox{0.533 {\scriptsize (0.008)}}
& 0.050
& 0.994
& 0.044
& \multicolumn{4}{c}{\cellcolor{red!10}\textit{Not supported}} \\

& IDK
& \cellcolor{red!10}0.290
& \mbox{\best{0.220} {\scriptsize (0.013)}}
& 0.877
& 0.468
& 0.345
& \multicolumn{4}{c}{\cellcolor{red!10}\textit{Not supported}} \\

& Verbal Conf.
& 0.564
& \mbox{0.323 {\scriptsize (0.010)}}
& 0.660
& 0.822
& \best{0.482}
& 0.202
& 0.172
& \mbox{0.238 {\scriptsize (0.007)}}
& \mbox{0.218 {\scriptsize (0.005)}} \\

& Token Prob.
& \multicolumn{9}{c}{\cellcolor{red!10}\textit{Not supported}} \\

\rowcolor{icalmrow}
& \textbf{I-CALM}
& 0.564
& \mbox{0.355 {\scriptsize (0.011)}}
& 0.636
& 0.808
& 0.444
& \best{0.147}
& \best{0.104}
& \mbox{\best{0.206} {\scriptsize (0.009)}}
& \mbox{\best{0.186} {\scriptsize (0.007)}} \\
\midrule

\multirow{5}{*}{Gemini-3.1-Flash-Lite}
& Pure Eval
& 0.999
& \mbox{0.444 {\scriptsize (0.008)}}
& 0.001
& 0.999
& 0.000
& \multicolumn{4}{c}{\cellcolor{red!10}\textit{Not supported}} \\

& IDK
& 0.704
& \mbox{0.325 {\scriptsize (0.009)}}
& 0.470
& 0.836
& 0.306
& \multicolumn{4}{c}{\cellcolor{red!10}\textit{Not supported}} \\

& Verbal Conf.
& 0.706
& \mbox{0.330 {\scriptsize (0.009)}}
& 0.476
& 0.852
& 0.328
& 0.327
& 0.389
& \mbox{0.326 {\scriptsize (0.009)}}
& \mbox{0.381 {\scriptsize (0.007)}} \\

& Token Prob.
& \multicolumn{9}{c}{\cellcolor{red!10}\textit{Not supported}} \\

\rowcolor{icalmrow}
& \textbf{I-CALM}
& 0.706
& \mbox{\best{0.322} {\scriptsize (0.009)}}
& 0.501
& 0.879
& \best{0.380}
& \best{0.308}
& \best{0.332}
& \mbox{\best{0.308} {\scriptsize (0.009)}}
& \mbox{\best{0.326} {\scriptsize (0.007)}} \\
\midrule

\multirow{5}{*}{Meta-Llama-3-8B-Instruct}
& Pure Eval
& 0.963
& \mbox{0.683 {\scriptsize (0.008)}}
& 0.051
& 0.997
& 0.048
& \multicolumn{4}{c}{\cellcolor{red!10}\textit{Not supported}} \\

& IDK
& \cellcolor{red!10}0.002
& \mbox{\best{0.121} {\scriptsize (0.113)}}
& 1.000
& 0.006
& 0.006
& \multicolumn{4}{c}{\cellcolor{red!10}\textit{Not supported}} \\

& Verbal Conf.
& 0.447
& \mbox{0.571 {\scriptsize (0.012)}}
& 0.636
& 0.639
& 0.275
& 0.564
& 0.533
& \mbox{0.561 {\scriptsize (0.012)}}
& \mbox{0.501 {\scriptsize (0.007)}} \\

& Token Prob.
& 0.447
& \mbox{0.605 {\scriptsize (0.012)}}
& 0.601
& 0.547
& 0.148
& \best{0.454}
& \best{0.443}
& \mbox{\best{0.425} {\scriptsize (0.008)}}
& \mbox{\best{0.404} {\scriptsize (0.004)}} \\

\rowcolor{icalmrow}
& \textbf{I-CALM}
& 0.447
& \mbox{0.515 {\scriptsize (0.012)}}
& 0.685
& 0.805
& \best{0.490}
& 0.497
& \best{0.446}
& \mbox{0.496 {\scriptsize (0.012)}}
& \mbox{0.423 {\scriptsize (0.009)}} \\

\bottomrule
\end{tabular}
\end{adjustbox}

\caption{
Baseline comparisons on PopQA. Values in parentheses denote 95\% confidence-interval half-widths. Bold values indicate the best reported value within each model block. Blue rows highlight I-CALM; red-shaded cells indicate unsupported metrics or results with impractically low coverage.
}
\label{tab:method_comparison}
\end{table}

\section{How and When Does I-CALM Work?}
\label{sec:how_and_when}

Section~\ref{sec:baseline} showed that I-CALM yields stronger selective-answering performance than standard baselines.
We now examine how and when it works: component ablations identify the role of each component (Section~\ref{subsec:ablations}), while incremental experiments trace how payoff design and normative guidance improve abstention quality, transfer across models and datasets, reshape confidence routing, and shift the selective-answering operating point (Section~\ref{subsec:incremental}).
Additional analyses test sensitivity to payoff magnitude and fact popularity and examine post-hoc confidence thresholding for deployment (Section~\ref{sec:further_analyses}).
Unless stated otherwise, results use GPT-5 mini on PopQA.

\subsection{Component Ablations}
\label{subsec:ablations}

We perform component-wise ablation to assess the role and effect of each each I-CALM component.
Table~\ref{tab:ablation} reveals distinct failure modes: removing payoffs or normative guidance leads the model to surface less reliable answers, whereas removing confidence induces broad over-abstention.
Every ablation lowers $J_{\mathrm{abs}}$, showing that all three components contribute to I-CALM's error-targeting abstention behavior.

\paragraph{Announced Payoffs.} Removing the announced payoffs increases coverage from $0.56$ to $0.76$, but also raises $\mathrm{FAR}_{\mathrm{answered}}$ from $0.36$ to $0.43$, lowers $J_{\mathrm{abs}}$ from $0.44$ to $0.33$, and worsens every reported calibration metric.
Conversely, announced payoffs move the model to a more effective selective-answering operating point, with lower surfaced risk and better error targeting, improving how the model distinguishes candidates that should be surfaced from those that should be withheld.

\paragraph{Verbal Confidence.}
Removing verbal confidence produces the opposite failure mode.
Although $\mathrm{FAR}_{\mathrm{answered}}$ falls, coverage drops sharply from $0.56$ to $0.25$, and $J_{\mathrm{abs}}$ falls from $0.44$ to $0.30$.
Worsened abstention targeting suggests that the lower surfaced risk is achieved through broad withholding rather than improved separation of erroneous and correct candidates.
Restoring verbal confidence allows the model to answer many more questions while making abstention more selective.
Verbal confidence therefore provides an important self-assessment signal that prevents the model from excessive over-abstention.

\paragraph{Normative Guidance.}
Removing normative guidance raises coverage, but also increases $\mathrm{FAR}_{\mathrm{answered}}$ from $0.36$ to $0.42$ and lowers $J_{\mathrm{abs}}$ from $0.44$ to $0.38$.
These changes show that the additional abstention brought by normative guidance is concentrated on error-prone candidates rather than applied indiscriminately.
Calibration remains essentially stable, indicating that normative guidance contributes primarily by improving which candidate answers are surfaced, beyond what is achieved through payoffs and verbal confidence alone.
Finer-grained ablations on individual principles in Appendix~\ref{subsec:norm_ablation}  show that no single principle accounts for the full effect and the complete set performs best.

\begin{table}[ht]
\centering
{\fontsize{9pt}{8pt}\selectfont
\setlength{\tabcolsep}{3.5pt}
\renewcommand{\arraystretch}{1}
\begin{tabular}{@{}lccccccc@{}}
\toprule
& \multirow{2}{*}{\textbf{C}}
& \multirow{2}{*}{\textbf{FAR}}
& \multirow{2}{*}{$\boldsymbol{J}_{\mathrm{abs}}$}
& \multicolumn{2}{c}{$\widehat{\textbf{ECE}}$}
& \multicolumn{2}{c}{\textbf{Brier Score}} \\
\cmidrule(lr){5-6}
\cmidrule(lr){7-8}
&
&
&
&
\textbf{A}
& \textbf{O}
& \textbf{A}
& \textbf{O} \\
\midrule

\rowcolor{icalmrow}
\textbf{I-CALM}
& 0.56
& 0.36
& \best{0.44}
& \best{0.15}
& \best{0.10}
& \best{0.21}
& \best{0.19} \\

\midrule

w/o Payoffs
& 0.76
& 0.43
& 0.33
& 0.21
& 0.17
& 0.25
& 0.24 \\

w/o Confidence
& 0.25
& 0.26
& 0.30
& \multicolumn{4}{c}{\textit{Not applicable}} \\

w/o Norms
& 0.68
& 0.42
& 0.38
& 0.15
& 0.12
& 0.21
& 0.19 \\

\bottomrule
\end{tabular}
}
\caption{Component ablations relative to the full I-CALM configuration $(+1,-1,+0.4)$ for GPT-5 mini on PopQA. C: coverage; FAR: $\mathrm{FAR}_{\mathrm{answered}}$; A and O: answered and overall, respectively.}
\label{tab:ablation}
\end{table}

\subsection{Incremental Effects of Payoff Design and Normative Guidance}
\label{subsec:incremental}

To isolate the incremental effects of rewarding abstention and normative guidance, we compare the following three nested prompt configurations, each of which elicits verbal confidence.
For $R,\beta,\gamma\geq 0$, \emph{Scheme~A} announces a payoff of $+R$ for a correct answer and $-\beta$ for an incorrect answer, without rewarding abstention; \emph{Scheme~B} adds a payoff of $+\gamma$ for abstention in Stage 1; and \emph{I-CALM} retains Scheme~B's payoffs while adding normative guidance.
Thus, comparing Scheme~A with Scheme~B isolates the effect of adding a positive abstention payoff while holding the correct- and incorrect-answer payoffs fixed; prior payoff-framing studies instead fix the abstention payoff at zero and vary either only one or both of the correct- and incorrect-answer payoffs \citep{wang2026llm,wu2025answer}.
Comparing Scheme~B with I-CALM isolates the effect of normative guidance. 
We denote the payoff specifications by $(+R,-\beta)$ for Scheme~A and $(+R,-\beta,+\gamma)$ for Scheme~B and I-CALM, and unless stated otherwise use $R=1$, $\beta=1$, and $\gamma=0.4$.
Appendix~\ref{subsec:E2_full_result} reports the full sensitivity results for $\beta\in\{0,1\}$ and $\gamma\in\{0.2,0.4,0.6,0.8\}$.

\paragraph{Abstention Quality Improves Monotonically.}

Figure~\ref{fig:model_comparison} compares the progression from Pure Eval to Scheme~A, Scheme~B, and I-CALM for three models on PopQA.
The clearest cross-model pattern is the monotonic improvement in abstention quality: $J_{\mathrm{abs}}$ increases at every step, indicating that each successive configuration better concentrates abstention on erroneous candidates.
I-CALM attains the lowest $\mathrm{FAR}_{\mathrm{answered}}$ among the three schemes, whereas $\mathrm{FAR}_{\mathrm{overall}}$ changes only modestly.
This suggests that the improvement primarily comes from better selecting which candidate answers to surface rather than increasing candidate accuracy across all questions, which aligns with our hypothesis and intended goal.
Notably, for Gemini and Llama the largest single-step gains in $J_{\mathrm{abs}}$, $\mathrm{FAR}_{\mathrm{answered}}$, and $\widehat{\mathrm{ECE}}$ all come from adding norms to Scheme~B, indicating that normative guidance is the dominant lever for models that respond weakly to announced payoffs.

\begin{figure}[ht]
    \centering
    \includegraphics[width=0.55\linewidth]{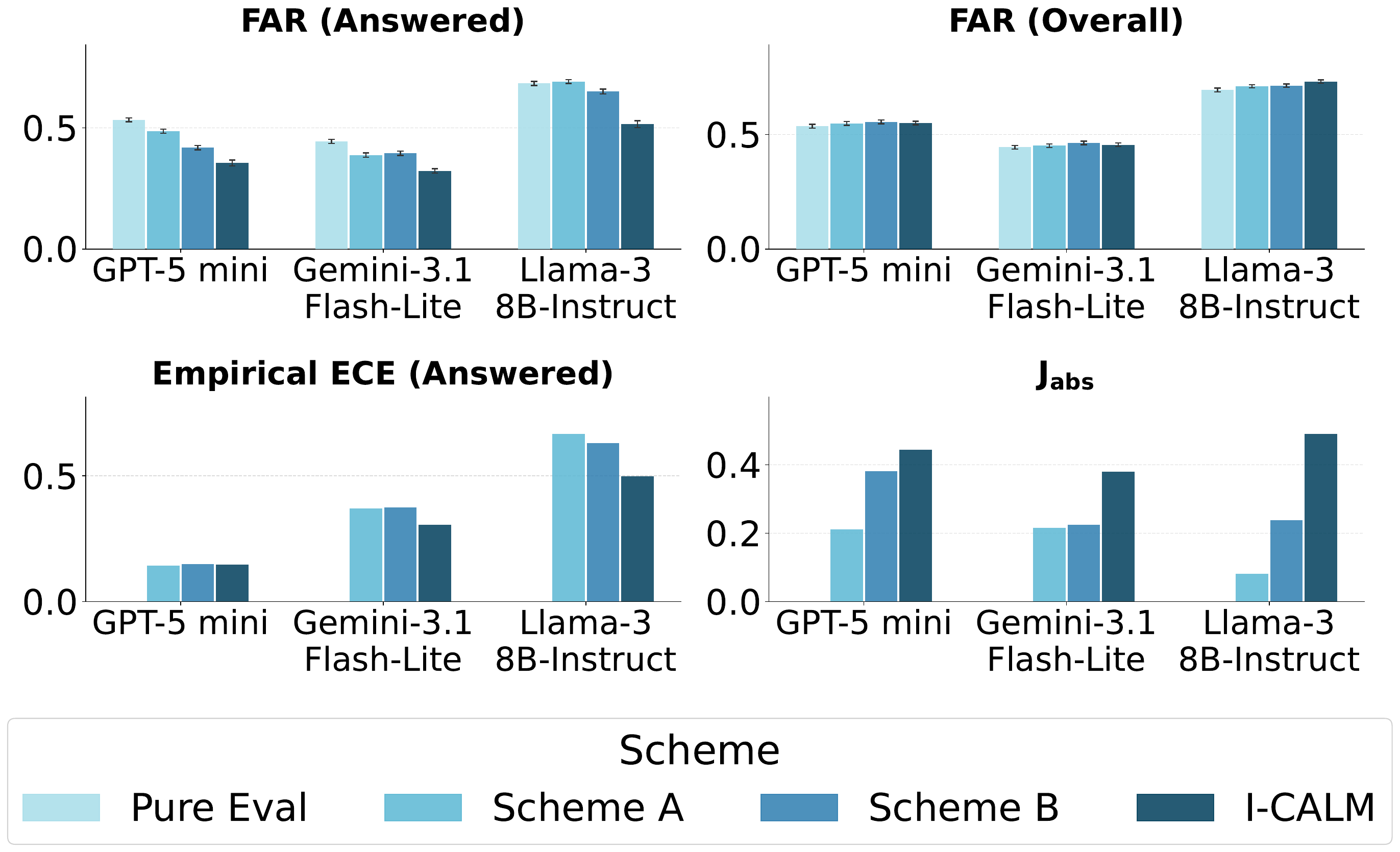}
    \caption{Evaluation on PopQA as I-CALM's components are added incrementally.}
    \label{fig:model_comparison}
\end{figure}

\paragraph{Transfer Experiments.}

We evaluate transfer across model families and dataset difficulty by adding GPT-4o mini \cite{openai_gpt4omini}, Meta-Llama-3-8B-Instruct, Gemini-3.1-Flash-Lite, and Qwen3-4B-Instruct-2507 \cite{yang2025qwen3} on PopQA, and by evaluating these models together with GPT-5 mini on TriviaQA and SimpleQA-Verified. Across the resulting 15 model--dataset settings, ECR increases at every step from A $\rightarrow$ B $\rightarrow$ I-CALM, while $J_{\mathrm{abs}}$ increases monotonically and is highest under I-CALM in 14 settings. I-CALM also lowers $\mathrm{FAR}_{\mathrm{answered}}$ relative to Scheme~A in 14 settings and ties in the remaining one. Overall, the effect is strongest and most consistent on PopQA, becomes more dependent on baseline headroom on TriviaQA, and still remains visible under the severe knowledge pressure of SimpleQA-Verified. Full results are in Appendix~\ref{subsec:additional_exp}.

\paragraph{Confidence Distributions Reveal Error-Targeted Routing.}

Aggregate metrics show that abstention becomes better targeted, but not how confidence and correctness differ between surfaced cases and cases initially withheld.
Figure~\ref{fig:confidence_distribution} decomposes GPT-5 mini's Stage~1 surfaced answers (left) and Stage~2 forced best guesses after abstention (right) by confidence and correctness across Scheme~A, Scheme~B, and I-CALM.
As the components are added, low- and medium-confidence incorrect mass diminishes in the surfaced distribution, while the best-guess distribution gains predominantly incorrect mass at confidence levels of $0.1$--$0.4$, with a pronounced peak near $0.3$ under I-CALM.
Meanwhile, surfaced correct answers remain concentrated at high confidence ($0.8$--$1.0$).
Thus, I-CALM does not merely increase abstention; they increasingly route low-confidence, error-prone cases away from the surfaced channel.
This distributional pattern is consistent with, and helps explain, the monotonic increase in abstention quality $J_{\mathrm{abs}}$.

\begin{figure}[ht]
    \centering

    \begin{subfigure}{0.7\linewidth}
        \centering
        \includegraphics[width=\linewidth]{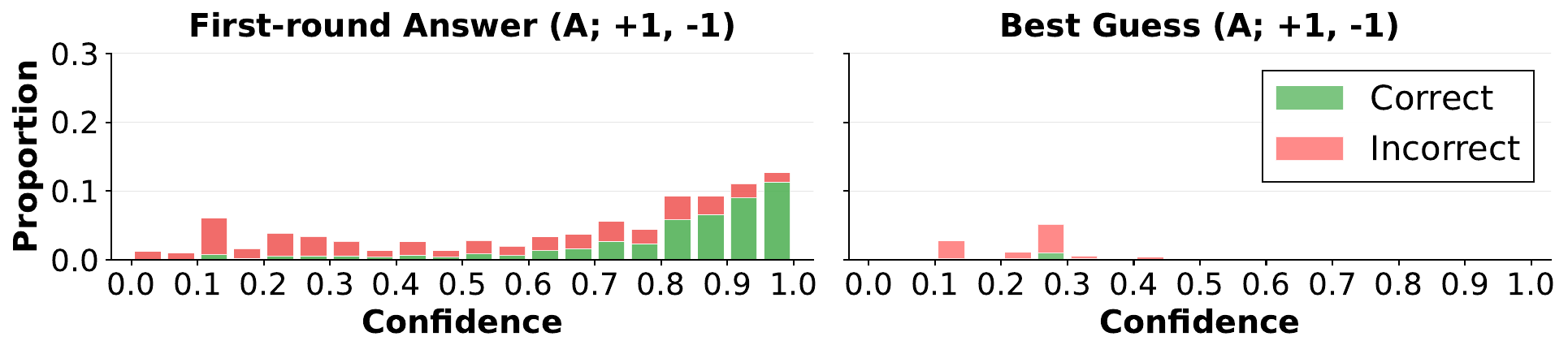}
        \label{fig:A_conf}
    \end{subfigure}

    \vspace{-1em}

    \begin{subfigure}{0.7\linewidth}
        \centering
        \includegraphics[width=\linewidth]{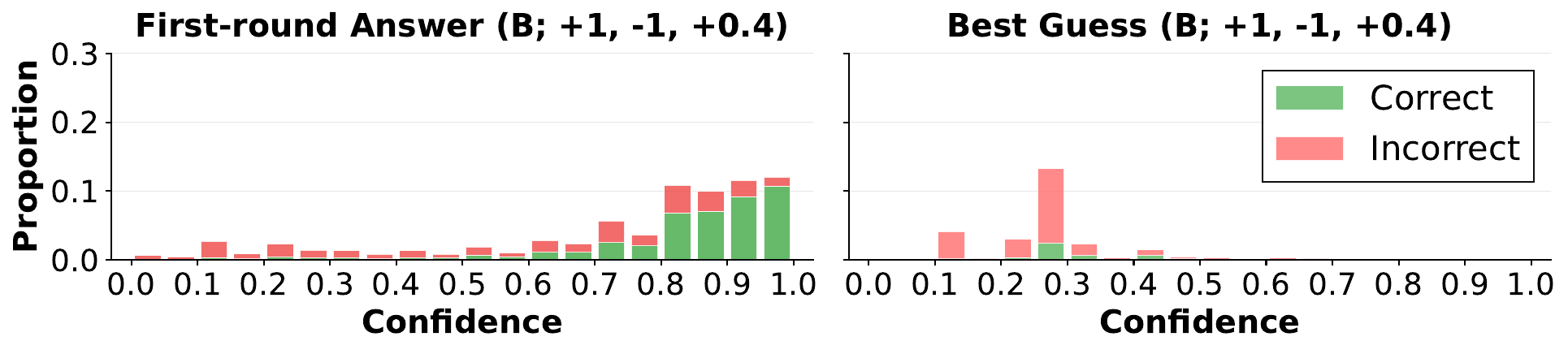}
        \label{fig:B_conf}
    \end{subfigure}

    \vspace{-1em}

    \begin{subfigure}{0.7\linewidth}
        \centering
        \includegraphics[width=\linewidth]{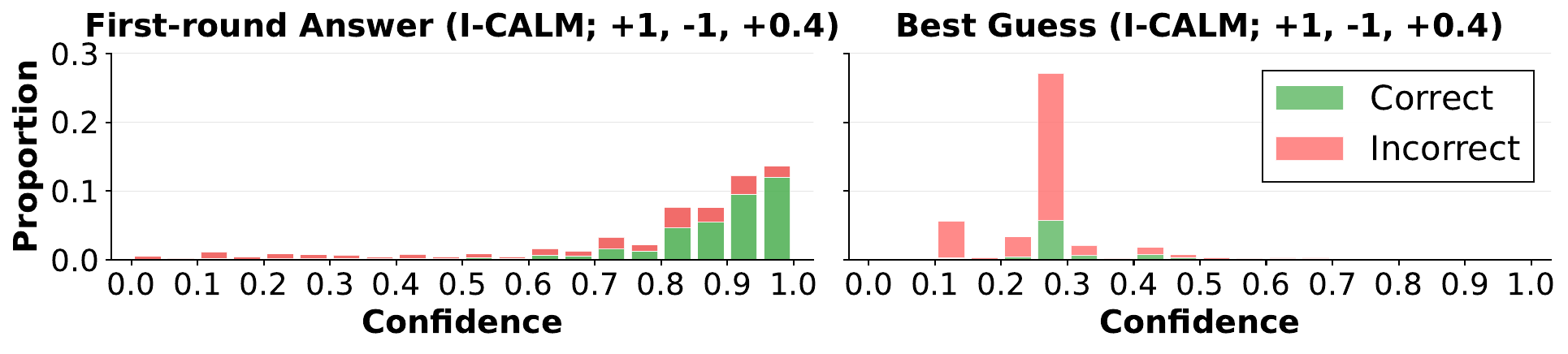}
        \label{fig:B_norm_conf}
    \end{subfigure}
    \vspace{-1em}
    \caption{Confidence distributions for GPT-5 mini under representative schemes
    on PopQA. Left: Stage 1 surfaced answers. Right: Stage 2 best guesses.}
    \label{fig:confidence_distribution}
\end{figure}

\paragraph{I-CALM Components Shift the Selective-Answering Operating Point.}

Figure~\ref{fig:pareto_frontier} plots $\mathrm{FAR}_{\mathrm{answered}}$ against abstention rate across all tested configurations for GPT-5 mini on PopQA.
The configurations lie near a common monotone curve: higher abstention rates correspond to lower $\mathrm{FAR}_{\mathrm{answered}}$.
In selective answering, the accompanying reduction in coverage is \textit{not} inherently undesirable; because abstention is targeted---as evidenced by the increasing $J_{\mathrm{abs}}$ and the confidence distributions in Figure~\ref{fig:confidence_distribution}---it is the \textit{intended} mechanism for preventing likely errors from being surfaced.
For each fixed abstention reward $\gamma$, penalizing incorrect answers ($\beta=1$) moves the model toward both greater abstention and lower surfaced risk than $\beta=0$, while at matched payoffs I-CALM shifts the metrics further in the same direction relative to Scheme~B.
Thus, I-CALM enables deployment-time selection of selective-answering operating point without retraining or access to model internals.

\begin{figure}[ht]
    \centering
    \includegraphics[width=0.5\linewidth]{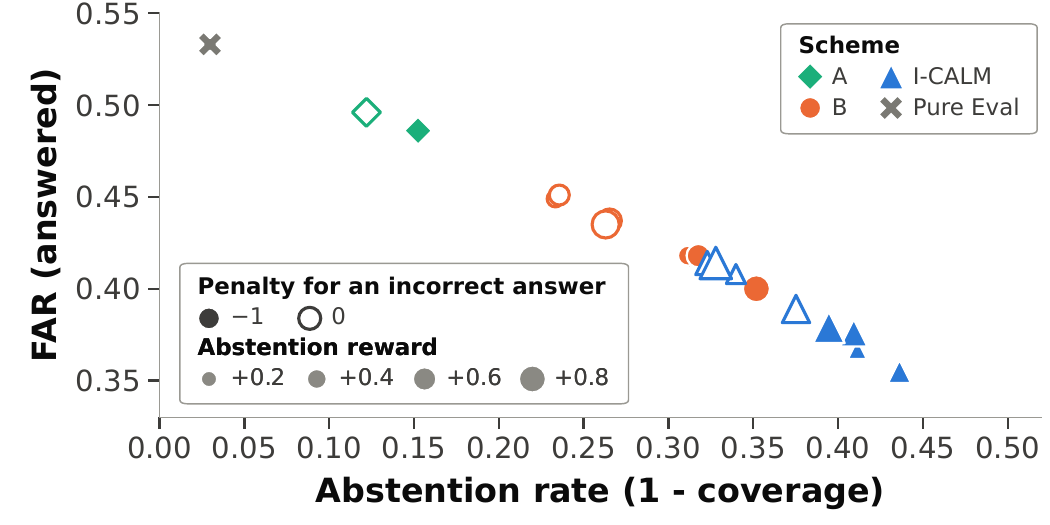}
    \caption{The abstention--FAR curve for GPT-5 mini on PopQA across all experimented configurations. }
    \label{fig:pareto_frontier}
\end{figure}

\subsection{Additional Analyses}
\label{sec:further_analyses}


Table~\ref{tab:extreme_case_results} shows that selective answering is far more sensitive to the abstention reward than to the correct-answer reward or the false-answer penalty. This complements prior evidence that LLMs often fail to adjust abstention behavior even when false-answer penalties vary substantially \citep{wang2026llm}.
Moreover, scaling the entire payoff vector leaves performance nearly unchanged (Appendix~\ref{subsec:_robust_reward_scaling}), indicating that behavior depends on relative rather than absolute payoff magnitudes.

\begin{table}[ht]
\centering
{\fontsize{9pt}{6pt}\selectfont
\setlength{\tabcolsep}{4pt}
\begin{tabular}{@{}ccc cccc@{}}
\toprule
\multicolumn{3}{c}{\textbf{Announced Payoff}} &
\multirow{2}{*}{\textbf{C}} &
\multirow{2}{*}{\textbf{FAR}} &
\multirow{2}{*}{\textbf{ECR}} &
\multirow{2}{*}{\textbf{CAR}} \\
\cmidrule(lr){1-3}
\textbf{Correct} & \textbf{Wrong} & \textbf{Abstain} & & & & \\
\midrule

\rowcolor{icalmrow}
+1   & -1   & +0.4 & 0.564 & 0.418 {\scriptsize (0.01)} & 0.481 & 0.900 \\

\midrule

+100 & -1   & +0.4 & 0.752 & 0.445 {\scriptsize (0.01)} & 0.400 & 0.942 \\
+1   & -100 & +0.4 & 0.694 & 0.425 {\scriptsize (0.01)} & 0.474 & 0.906 \\
+1   & -1   & +40  & 0.429 & 0.285 {\scriptsize (0.11)} & 0.776 & 0.677 \\
\bottomrule
\end{tabular}
}
\caption{Sensitivity to extreme payoffs;
GPT-5 mini on PopQA. C: coverage; FAR: $\mathrm{FAR}_{\mathrm{answered}}$.}
\label{tab:extreme_case_results}
\end{table}


Splitting PopQA by entity popularity, we compare performance on the top and bottom popularity terciles. 
The progression A $\rightarrow$ B $\rightarrow$ I-CALM consistently lowers $\mathrm{FAR}_{\mathrm{answered}}$ and improves abstention quality in both strata, with larger absolute risk reductions on rare facts. This suggests that I-CALM may be particularly useful for long-tail factual questions. Full results are in Appendix~\ref{subsec:rare_vs_common_facts}.


The elicited confidence signal can also be converted into a post-hoc acceptance rule with a finite-sample guarantee on $\mathrm{FAR}_{\mathrm{answered}}$. Appendix~\ref{subsec:new_false_answer_rate_control} develops a Bonferroni CP-UCB rule and a multistart fixed-sequence alternative.

\section{Related Work}
\label{sec:related_works}

\paragraph{LLM Hallucination and Abstention Mechanisms.}

Hallucinations remain a central reliability problem for LLMs \citep{huang2025survey}.
Recent work emphasizes that evaluation incentives matter: 
binary evaluations and leaderboards reward guessing and can discourage abstention \citep{kalai2025language}.
Abstention is now treated as a first-class capability, with its own taxonomies, metrics, and benchmarks \citep{wen2025know}. 
A substantial part of this literature focuses on settings where abstention is itself the correct behavior because the query is unanswerable, unsupported, or underspecified \citep{kirichenko2025abstentionbench,lavi2025detecting,madhusudhan2025llms,wu2025robots}. 
Closer to our setting, knowledge-gap recognition work asks whether models can tell when they do not know the answer to a question that has a verifiable answer \citep{cheng2024can,qin2026large}.
Existing solutions include training-time abstention via RL, explicit abstention tokens, or semantic uncertainty \citep{an2025teaching, cohen2024don, jain2024refusal, jha2026rewarding, mohamadi2025honesty,tjandra2024fine, wei2025truthrl,wu2025mitigating} and inference-time risk-control wrappers \citep{oehri2025trusted, wang2025safer}.

\paragraph{Confidence Estimation and Calibration in LLMs.}

Our method requires an uncertainty signal that is available at inference time. 
Surveys and systematic evaluations cover logit-, representation-, semantic-, and consistency-based measures and highlight calibration challenges in generative settings \citep{geng2024survey,hobelsberger2025systematic}.
Prior work comparing token-based and verbalized confidence finds both alignment and gaps
\citep{ji2025calibrating, kadavath2022language, kumar2024confidence}, and RLHF can amplify overconfident language \citep{chhikara2025mind, leng2024taming}.
A growing line studies how to elicit and calibrate expressed confidence in LLMs. 
On the prompting side, verbal confidence can be elicited with usable calibration, though prompt design matters \citep{tian2023just,xiong2023can,zhao2024fact}. 
On the training side, methods such as ConfTuner, ADVICE, and LACIE directly improve verbal confidence calibration \citep{li2025conftuner,seo2025advice,stengel2024lacie}.
See extended discussion in Appendix~\ref{sec:related_works_extended}.

\section{Conclusion}

We formulate epistemic abstention on factual questions with verifiable answers as selective answering and introduce I-CALM, a prompt-only framework that combines verbal confidence, announced answer/abstain payoffs, and normative guidance for fixed, black-box LLMs.
Across models and datasets, I-CALM generally reduces surfaced-answer risk and raises $J_{\mathrm{abs}}$, indicating that abstention becomes better targeted at erroneous candidates rather than merely more frequent. Across the 15 transfer settings, I-CALM attains the highest $J_{\mathrm{abs}}$ in 14 of them. 
Ablations reveal complementary roles: announced payoffs set the operating point, confidence elicitation limits indiscriminate over-abstention, and normative guidance further refines error targeting.

These findings should be interpreted within I-CALM's intended scope.
As a prompt-only intervention for fixed models, it is designed to improve answer selection rather than underlying factual competence; the modest changes in $\mathrm{FAR}_{\mathrm{overall}}$ are therefore consistent with its intended role.
It thus does not fully resolve severe underlying knowledge deficits, as reflected by the case for SimpleQA-Verified.
Also, our experiments focus on short-answer factual QA with one verifiable answer, leaving multi-hop, long-form, and multi-claim responses for future work.
We therefore view I-CALM as a \textit{lightweight complement} to training-based competence improvements and deployment-time risk-control methods.

\section*{Acknowledgments}
This work is supported by the AI2050 program at Schmidt Sciences (Grant G-25-67962). We are grateful to the members of the Normativity Lab, Matthew Renze, and Adam Tauman Kalai for their valuable feedback. We also thank the participants in EN.601.669 AI Safety, Alignment, \& Governance (Fall 2025) at Johns Hopkins University for their valuable feedback.

\bibliography{ref-20260805}
\bibliographystyle{plainnat}

\clearpage
\appendix
\startcontents[appendix]

\pdfbookmark[1]{Appendix Contents}{appendix-contents}

\section*{Appendix Contents}
\printcontents[appendix]{}{1}{\setcounter{tocdepth}{3}}
\bigskip

\newpage

\section{Extended Related Work}
\label{sec:related_works_extended}

This appendix expands the related-work discussion. 
We organize the literature along two axes: 
(i) hallucination mitigation, abstention, and selective answering; and 
(ii) confidence estimation and calibration.
Our setup sits at their intersection: we keep the base model fixed, elicit self-reported confidence, and use prompt-level payoff framing plus humility- and truthfulness-oriented norms to shape answer/abstain behavior.

\subsection{Hallucination Mitigation, Abstention, and Selective Answering}

Hallucination and abstention are now well-developed subliteratures. 
Broad surveys frame hallucination as a central reliability problem and abstention as a capability that must be evaluated in its own right \citep{huang2025survey,wen2025know}. 
\citet{kalai2025language} provide a key motivation for our paper by arguing that hallucinations reflect both pretraining statistical pressures and post-training evaluation incentives, with mainstream binary grading discouraging abstention.
Our setting focuses on factual questions with a verifiable answer, but the model may not know it.

Much existing abstention work focuses instead on cases where withholding an answer is itself the gold behavior because the query is unanswerable, unsupported, or underspecified. Representative benchmarks and evaluations study precisely this regime \citep{kirichenko2025abstentionbench,madhusudhan2025llms,wu2025robots}. 
Closer to our setup, knowledge-gap recognition work asks whether models can detect when they lack the knowledge needed to answer questions with a verifiable answer \citep{cheng2024can,qin2026large}, and uncertainty-based abstention work shows that abstaining on the right cases can reduce hallucinations and improve reliability \citep{tomani2024uncertainty}.

Methodologically, prior work addresses abstention in two main ways. Training-time approaches teach it directly, for example through explicit abstention outputs or abstention-aware reward shaping \citep{cohen2024don,mohamadi2025honesty,wei2025truthrl,wu2025mitigating, zhang2024r}. Inference-time approaches instead wrap a fixed model with calibrated refusal or risk-control procedures \citep{lee2024selective, oehri2025trusted,wang2025safer}. 
Our method sits between these lines: like inference-time wrappers, it keeps the base model fixed; like \citet{kalai2025language}'s explicit-confidence-target proposal, it uses prompt-level payoff framing; unlike their benchmark-redesign agenda, it studies a black-box deployment intervention.

Finally, we treat short truthfulness- and humility-oriented norms as a lightweight additive intervention rather than a claim about a unique constitution. 
This design is closest in spirit to principle-based alignment work: Constitutional AI provides the basic constitutional framing but is training-based \citep{bai2022constitutional}, while Reflect studies an inference-time constitutional alternative \citep{bell2026reflect}; it is also consistent with evidence that system-level prompting can materially change abstention behavior without updating model weights \citep{kirichenko2025abstentionbench}.

\subsection{Confidence Estimation, Verbal Confidence, and Calibration in LLMs}

Confidence estimation in LLMs spans logit-based, representation-based, semantic, and consistency-based methods, and recent surveys show that calibration remains difficult in generative settings \citep{geng2024survey,hobelsberger2025systematic}. Classic QA calibration work already showed that model probabilities can be miscalibrated and brittle under domain shift \citep{jiang2021can}, while black-box settings motivate generation-only calibration methods that do not require logits \citep{ulmer2024calibrating}. These constraints matter for our setting, where we want a signal that is both observable and actionable at inference time.

A growing line of work studies verbal confidence directly. Prompt-based work shows that useful confidence estimates can often be elicited in natural language, although performance depends on prompt design \citep{tian2023just,xiong2023can}. Related prompting work such as Fact-and-Reflection improves calibration by structuring reflection before the final answer \citep{zhao2024fact}, and training-based approaches such as Teaching Models to Express Their Uncertainty in Words, ConfTuner, ADVICE, and LACIE aim to make verbal confidence itself better calibrated \citep{li2025conftuner,lin2022teaching,seo2025advice,stengel2024lacie}.

At the same time, internal probabilities and verbalized confidence are not identical. Prior work finds partial alignment between the two signals, but also systematic gaps \citep{kadavath2022language,kumar2024confidence,ji2025calibrating}. Other work shows that RLHF can amplify verbal overconfidence \citep{chhikara2025mind, leng2024taming}, while recent mechanistic evidence suggests that verbal confidence reflects richer answer-quality evaluations than token probability alone \citep{kumaran2026llms}. 
This combination of usefulness and imperfection is exactly why we treat self-reported confidence as a practical signal for inference-time decisions rather than as a fully faithful readout of model belief.

That distinction also matters for decision-making. Risk-controlled abstention methods formalize how confidence should map to coverage and error \citep{oehri2025trusted,wang2025safer}, but recent evidence suggests that current models may still fail to convert verbal confidence into risk-sensitive answer/abstain behavior \citep{wang2026llm}. Our work therefore studies both pieces together: whether verbal confidence is informative enough to use, and whether prompt-level payoff framing can better align answer/abstain decisions with that expressed uncertainty.

\section{Preliminary: Assessing Self-Reported Verbal Confidence}
\label{sec:confidence_metric}

A common way to estimate LLM uncertainty is to aggregate token-level log probabilities after generation \citep{jiang2021can,geng2024survey}. 
However, our setting requires an uncertainty signal the model can express in its response to support abstention. 
Post-hoc scores also depend on the aggregation rule and can be distorted by answer realization, relevance, and length \citep{duan2024shifting,  gupta2024language,wang2024my}. 
We therefore use self-reported verbal confidence $\tau^{self}$ as an inference-time uncertainty proxy---it reflects automatic, sophisticated self-evaluation during generation \citep{kumaran2026llms}, is easy to elicit in black-box settings \citep{chhikara2025mind}, and can be directly acted upon.

We evaluate $\tau^{self}$ against post-hoc geometric mean token probability $\tau_{avg}^{token}$ along two dimensions: 
(i) \textbf{reliability}, whether $\tau^{self}$ functions as a confidence signal comparable to $\tau_{avg}^{token}$; 
and (ii) \textbf{robustness}, whether it remains stable under prompt paraphrasing and confidence scales, so abstention decisions do not depend on particular wording.
Because token-probability-based estimates can be task-sensitive \citep{xia2025survey}, we focus on free-response QA to match the main intervention experiments in Section~\ref{sec:baseline} and \ref{sec:how_and_when}. Section~\ref{subsec:verbal_confidence_mcqa} reports results for the multiple-choice setting.

\subsection{Validity of Self-Reported Verbal Confidence in Free-Response QA}
\label{subsec:experiment_1_setup}

\subsubsection{Experiment Setup}
\label{subsubsec:experiment_setup_freeresponse}

\paragraph{Dataset}
We evaluate GPT-4o mini\footnote{GPT-4o mini is used because GPT-5 mini does not expose token-level log probabilities. We also evaluate GPT-4o mini in the main experiments and observe consistent trends across models.} \citep{openai_gpt4omini} on PopQA \citep{mallen2023not}, a set of 14,267 factual questions constructed from Wikidata knowledge triples. 
Because all questions are factual queries with a verifiable answer, we operationalize hallucinations as false factual assertions, and abstention reflects the model's lack of knowledge rather than the absence of a correct answer.


\paragraph{Confidence Elicitation}
We elicit self-reported verbal confidence $\tau^{self}$ in $[0,1]$ using the two-part abstention-conditioned prompt framework (Figure~\ref{fig:two_stage_protocol}); for prompt see the template \ref{prompt:confidence} below. 
For each question, the LLM either (i) provides a direct answer with a confidence score, or (ii) abstains by first responding ``I don't know'' and then providing a best guess with its confidence. 
We define $\tau^{self}$ as the confidence attached to the evaluated answer: the first-round confidence if the model answers immediately, or the best-guess confidence if it initially abstains.

\paragraph{Reproducibility Notes}
We call the OpenAI Chat Completions API with model \texttt{gpt-4o-mini-2024-07-18}, temperature $0$ and token log-probabilities enabled (\texttt{logprobs:true}). 
The model's final answer is normalized by lowercasing, stripping punctuation, and trimming whitespace, and compared to the PopQA reference set \texttt{possible\_answers}. 
An output is marked correct if the normalized prediction is a substring of any reference string or vice versa.

\paragraph{Performance Metrics.} We report four metrics: 

(1) \textbf{False-answer rate (FAR)} evaluates task performance and operationalizes hallucination rate. For each prompt template, we compute FAR as the proportion of questions answered incorrectly. We report prompt-level FAR with its 95\% confidence interval.

    $$
    \text{FAR}=\frac{\text{N}_{\text{incorrect}}}{\text{N}_{\text{total}}}.
    $$

(2) \textbf{Pearson's $r$} measures the correlation between $\tau^{self}$ and $\tau_{avg}^{token}$. For each prompt template, we compute the correlation across all questions and report the corresponding 95\% confidence interval.
 
(3) \textbf{Brier score} quantifies forecasting loss. Probabilistic forecast accuracy is commonly evaluated using proper scoring rules \citep{gneiting2007strictly}, which measure the discrepancy between predicted probabilities and observed outcomes. 
We employ the Brier score \citep{glenn1950verification}, one of the most widely used proper scoring rules. The Brier score computes the mean squared error between predicted probabilities and observed binary outcomes, with lower values indicating more accurate probabilistic forecasts. For a single prediction with confidence $p$ and correctness label $y \in \{0,1\}$:
    $$
    \text{Brier}(p,y)=(p-y)^2
    $$

For each prompt template, we compute the Brier score for every question and report the average Brier score across all questions, together with its 95\% confidence interval.

(4) \textbf{Expected Calibration Error ($\widehat{\text{ECE}}$)} assesses calibration error. Beyond the overall probabilistic forecasting accuracy, we also care about whether these forecasts are trustworthy, i.e., whether stated probabilities reliably correspond to empirical frequencies of correctness. 
This property is formalized as \textit{calibration} \citep{degroot1983comparison,guo2017calibration}, which assesses how well confidence estimates track observed correctness across the probability spectrum. 
We adopt the Expected Calibration Error ($\widehat{\text{ECE}}$) to measure such calibration quality. $\widehat{\text{ECE}}$ is defined as the expected absolute gap between a model's probabilistic confidence and its empirical accuracy at that confidence level:
    $$       
    \widehat{\text{ECE}}=\mathbb{E}_{\hat{p}}[|\mathrm{Pr}(\hat{y}=y|\hat{p})-\hat{p}|].  
    $$

Here, $\hat{p}\in[0,1]$ denotes the model's confidence estimates and $\mathrm{Pr}(\hat{y}=y|\hat{p})$ denotes the true correctness probability among predictions made with confidence $\hat{p}$. 
A lower $\widehat{\text{ECE}}$ indicates closer alignment between predicted probabilities and empirical accuracy, although such alignment may still coexist with substantial forecasting loss. 
    
Since this expectation cannot be computed exactly from finite samples, we estimate $\widehat{\text{ECE}}$ using a binned approximation. The unit interval $[0,1]$ is partitioned into $K$ equal-width bins:
    $$
    B_k =
        \begin{cases}
            \left[\dfrac{k}{K}, \dfrac{k+1}{K}\right), & k = 0, 1, \ldots, K-2, \\
            \left[\dfrac{K-1}{K}, 1\right], & k = K-1.
        \end{cases}
    $$
    
The empirical ECE ($\widehat{\text{ECE}}$), which evaluates the consistency between reported probabilities and empirical correctness, is then computed as:
$$
\widehat{\text{ECE}}=\sum_{k=0}^{K-1}\frac{|B_k|}{N}|\mathrm{acc}(B_{k})-\mathrm{conf}(B_k)|,
$$
where $N$ denotes the total number of predictions. $|B_k|$ is the number of predictions whose confidence falls into bin $B_k$. $\mathrm{acc}(B_{k})=\frac{1}{|B_k|}\sum_{i\in B_k}\mathbf{1}\{\hat{y}_i=y_i\}$ is the empirical accuracy within bin $B_k$, and $\mathrm{conf}(B_k)=\frac{1}{|B_k|}\sum_{i\in B_k}\hat{p}_i$ is the average confidence in that bin. In our experiments, we set $K=10$.

\subsubsection{Prompt Template \ref{prompt:confidence}}
\label{subsec:prompt_tem_conf}

\refstepcounter{prompt}
\label{prompt:confidence}

\begin{tcolorbox}[
  enhanced jigsaw,
  colback=gray!5,
  colframe=gray!40,
  title=Prompt \ding{172},
  boxsep=3pt,
  left=3pt,
  right=3pt,
  top=3pt,
  bottom=3pt,
  parskip=0pt,
  breakable,
  break at=-\baselineskip/0pt,
  height fixed for=middle,
  pad at break=1mm
]

Give your confidence about your answer as a decimal between 0 and 1, where 0 means completely unsure and 1 means completely sure. Use four decimal places (e.g., 0.8765).

If you truly don't know, first say ``I don't know'' and do not give a confidence (leave it blank). If you choose to answer, provide both your answer and confidence.

Next, if you said ``I don't know'', also give your best possible guess and its confidence.

Format your response as follows:

\medskip
{\ttfamily
Answer: <your first answer>\\
Confidence: <0--1>\\
If you answered ``I don't know'', also include:\\
Best Guess: <your best possible answer>\\
Best Guess Confidence: <0--1>
}
\medskip

Question: \texttt{\{q\}}

\end{tcolorbox}

\subsubsection{Definition of Geometric Mean Token Probability}
\label{subsec:mean_token_logprob}
We calculate $\tau_{avg}^{token}$ from the token-level log probabilities $\log p(t)$ returned by the API for the final answer. Specifically, we compute the geometric mean probability over answer tokens:
$$
\tau_{avg}^{token} = \exp\left(\frac{1}{|\text{tokens}|} \sum_{t \in \text{tokens}} \log p(t)\right).
$$

\subsubsection{Results}
\label{subsec:verb_conf_results_free_response}

\paragraph{Calibration and Reliability.}
We construct four semantics-preserving prompt variants (Table~\ref{tab:popQA_paraphrase}) that keep the abstention-conditioned response structure fixed while varying surface wording. Across templates, $\tau^{self}$ and $\tau_{avg}^{token}$ exhibit a moderate, consistent positive correlation around 0.54. This suggests $\tau^{self}$ tracks the token-probability-based confidence reasonably well, even though the two are not identical.
The two signals also show comparable probabilistic forecasting and calibration quality. Their Brier scores fall in a similar range (approximately $0.33$ -- $0.35$), and $\widehat{\text{ECE}}$ values are also close.
Although $\tau^{self}$ attains marginally lower Brier scores and $\widehat{\text{ECE}}$ than $\tau_{avg}^{token}$, the differences are small and the 95\% CIs overlap substantially. 
We also note that the Brier score and $\widehat{\text{ECE}}$ values, which deviate from perfect calibration, are consistent with known patterns of imperfect calibration in verbal confidence \citep{mielke2022reducing,xiong2023can,chhikara2025mind,seo2025advice}. 

\begin{table}[h]
\centering
{\fontsize{9pt}{9pt}\selectfont
\setlength{\tabcolsep}{0.7pt}
\setlength{\heavyrulewidth}{1.2pt}

\begin{tabular}{ccccccc}
\toprule

\multirow{2}{*}{\textbf{Prompt}} &
\multirow{2}{*}{\textbf{FAR}} &
\multirow{2}{*}{\textbf{Pearson's $r$}} &
\multicolumn{2}{c}{\textbf{Brier Score}} &
\multicolumn{2}{c}{$\widehat{\textbf{ECE}}$} \\

\cmidrule(lr){4-5}
\cmidrule(lr){6-7}

& & &
\textbf{$\tau^{self}$} &
\textbf{$\tau_{avg}^{token}$} &
\textbf{$\tau^{self}$} &
\textbf{$\tau_{avg}^{token}$} \\

\midrule

\ding{172}
& 0.63 {\scriptsize (0.008)}
& 0.54 {\scriptsize (0.012)}
& 0.34 {\scriptsize (0.006)}
& 0.36 {\scriptsize (0.006)}
& 0.38
& 0.42 \\

\ding{173}
& 0.62 {\scriptsize (0.008)}
& 0.54 {\scriptsize (0.012)}
& 0.34 {\scriptsize (0.006)}
& 0.35 {\scriptsize (0.005)}
& 0.39
& 0.41 \\

\ding{174}
& 0.62 {\scriptsize (0.008)}
& 0.52 {\scriptsize (0.012)}
& 0.35 {\scriptsize (0.006)}
& 0.34 {\scriptsize (0.005)}
& 0.38
& 0.41 \\

\ding{175}
& 0.62 {\scriptsize (0.008)}
& 0.55 {\scriptsize (0.011)}
& 0.34 {\scriptsize (0.006)}
& 0.35 {\scriptsize (0.006)}
& 0.37
& 0.41 \\

\bottomrule
\end{tabular}
}

\caption{Comparison of self-reported and token-probability-based confidence metrics across paraphrased abstention-conditioned prompt templates for GPT-4o mini on PopQA. Values in parentheses denote 95\% confidence-interval half-widths.}
\label{tab:popQA_paraphrase}
\end{table}

\paragraph{Robustness to Prompt Paraphrasing}
Across templates, FAR, Pearson's $r$, and Brier scores for $\tau^{self}$ have overlapping 95\% confidence intervals, and its $\widehat{\text{ECE}}$ varies only slightly. 
This suggests that $\tau^{self}$ is not strongly driven by superficial wording differences.

\paragraph{Robustness to Verbal Confidence Scale Variants}
\label{subsec:experiment_1_scale}
To evaluate the robustness of $\tau^{self}$ to confidence-reporting conventions, 
we keep the setup fixed and vary only the confidence format (Table~\ref{tab:popQAconfidence_scaling}).
In particular, we include the $[0,20]$ scale, which has recently been reported to improve metacognitive efficiency over the standard $[0,100]$ format \citep{dai2026rescaling}.
Across all confidence formats, all metric values remain consistent, indicating that $\tau^{self}$ is robust to confidence reporting conventions. 
We adopt the $[0,1]$ format in all subsequent experiments.

\begin{table}[h]
\centering
{\fontsize{9pt}{9pt}\selectfont
\setlength{\tabcolsep}{3pt}
\setlength{\heavyrulewidth}{1.2pt}

\begin{tabular}{ccccccc}
\toprule

\multirow{2}{*}{\shortstack{\textbf{Confidence}\\\textbf{Format}}} &
\multirow{2}{*}{\textbf{FAR}} &
\multirow{2}{*}{\textbf{Pearson's $r$}} &
\multicolumn{2}{c}{\textbf{Brier Score}} &
\multicolumn{2}{c}{$\widehat{\textbf{ECE}}$} \\

\cmidrule(lr){4-5}
\cmidrule(lr){6-7}

& & &
\textbf{$\tau^{self}$} &
\textbf{$\tau_{avg}^{token}$} &
\textbf{$\tau^{self}$} &
\textbf{$\tau_{avg}^{token}$} \\

\midrule

$[0,1]$
& 0.63
& 0.54
& 0.34
& 0.36
& 0.38
& 0.42 \\

$[0,20]$
& 0.62
& 0.59
& 0.35
& 0.35
& 0.41
& 0.42 \\

$[0,100]$
& 0.62
& 0.55
& 0.36
& 0.35
& 0.41
& 0.42 \\

$[0\%,100\%]$
& 0.62
& 0.54
& 0.36
& 0.35
& 0.41
& 0.42 \\

\bottomrule
\end{tabular}
}

\caption{
Comparison of self-reported confidence formats under prompt \ding{172}
for GPT-4o mini on PopQA.
}
\label{tab:popQAconfidence_scaling}
\end{table}

\subsection{Validity of Self-Reported Verbal Confidence in Multiple Choice Question Answering}
\label{subsec:verbal_confidence_mcqa}
We next examine the reliability and robustness of self-reported verbal confidence $\tau^{self}$ in multiple-choice question answering (MCQA). Unlike free-response generation---where uncertainty is distributed over an output sequence---uncertainty in MCQA is concentrated on a discrete choice among candidate options. This provides a useful complement to the free-response analysis above. We therefore compare $\tau^{self}$ to the probability of the token corresponding to the selected answer letter, denoted $\tau^{token}_{key}$:

$$
\tau_{key}^{token} =  \exp\left(\log p(t_{key})\right).
$$

\subsubsection{Experiment Setup}

\paragraph{Dataset.} We evaluate GPT-4o mini \citep{openai_gpt4omini} on MMLU-Pro \citep{wang2024mmlu}, a large-scale multiple-choice benchmark comprising curated questions with $4$--$10$ answer options, ground-truth labels, and explanations.
We use 9 discipline subsets: Biology (717), Computer Science (410), Economics (844), Engineering (969), History (381), Law (1101), Math (1351), Physics (1299), and Psychology (798), totaling 7,870 questions.

\paragraph{Confidence Elicitation.} Our prompt design follows established prompting practices \citep{wei2022chain,wang2024mmlu}. A minimal version of the prompt is shown in Section~\ref{prompt:mcqa}.

\paragraph{Reproducibility Notes.}
We query \texttt{gpt\allowbreak-4o\allowbreak-mini\allowbreak-2024\allowbreak-07\allowbreak-18} via the OpenAI Chat Completions API with temperature $0$ and token log-probabilities enabled (\texttt{logprobs:true}). 
For each question, we issue $K$ independent calls (default $K{=}10$) to reduce residual output instability from API-level nondeterminism and formatting variation. Across the $K$ responses, the final prediction is determined by plurality vote, while $\tau^{self}$ and $\tau^{token}_{key}$ are aggregated as the medians of their respective per-call values. 
The final answer is scored by exact match against the reference label \texttt{answer}.

\paragraph{Evaluation Metrics.} We evaluate performance using FAR, Brier score, and the empirical Expected Calibration Error ($\widehat{\text{ECE}}$), as defined in Section~\ref{subsubsec:experiment_setup_freeresponse}.

\subsubsection{Prompt Template}

\label{prompt:mcqa}
\begin{tcolorbox}[
  enhanced jigsaw,
  colback=gray!5,
  colframe=gray!40,
  title=Prompt \ding{172} (Biology),
  boxsep=3pt,
  left=3pt,
  right=3pt,
  top=3pt,
  bottom=3pt,
  parskip=0pt,
  breakable,
  break at=-\baselineskip/0pt,
  height fixed for=middle,
  pad at break=1mm
]
You are tasked with answering multiple-choice questions (with answers) about biology. For each question, you must:
\begin{enumerate}
    \item Think step by step. Provide only the letter corresponding to your chosen option (e.g., ``A'', ``B'', ``C'') and your confidence level within the range $[0, 1]$.
    \item State your confidence level truthfully based on your level of certainty.
    \item Confidence values must be expressed to four decimal places for accuracy, without unnecessary rounding (e.g., 0.8765 is acceptable).
\end{enumerate}

Examples:\\
\texttt{Question}: Which of the following represents an accurate statement concerning arthropods?
\texttt{Options}:\\
A. They possess an exoskeleton composed primarily of peptidoglycan.\\
B. They possess an open circulatory system with a dorsal heart.\\
C. They are members of a biologically unsuccessful phylum incapable of exploiting diverse habitats and nutrition sources.\\
D. They lack paired, jointed appendages.\\

\texttt{Thinking}: \\
Let's think step by step. Peptidoglycan is known to comprise the plasma membrane of most bacteria, rather than the exoskeleton of arthropods, which is made of chitin, which rules out (A). The answer (C) is false because arthropods are a highly successful phylum. Likewise, arthropods have paired, jointed appendages, which rules out (D). The only remaining option is (B), as arthropods have an open circulatory system with a dorsal tubular heart. The answer is (B).\\
B 0.9500
\\

[Five in-context examples omitted for brevity; see code release for full prompt.]\\

Now, answer the following question:

\{question\}
 
Options: \{options\}

\end{tcolorbox}

\subsubsection{Results}
\paragraph{Calibration and Reliability.} We construct five semantics-preserving prompt variants per category.
Table~\ref{tab:mcqa_results} shows that $\tau^{self}$ and $\tau^{token}_{key}$ exhibit comparable calibration behavior across domains and prompt templates. In most disciplines, $\tau^{self}$ attains marginally lower Brier scores than $\tau^{token}_{key}$, though differences are modest with overlapping 95\% CIs. Similarly, the $\widehat{\text{ECE}}$ differences between $\tau^{self}$ and $\tau^{token}_{key}$ remain small, with both signals preserving the same qualitative $\widehat{\text{ECE}}$ ranking across domains. 
Overall, the results suggest that $\tau^{self}$ performs comparably to $\tau^{token}_{key}$ as a probabilistic confidence signal in MCQA.

\paragraph{Robustness to Prompt Paraphrasing and Domain Shift.}
Within each category, FAR, Brier score, and $\widehat{\text{ECE}}$ for $\tau^{self}$ have overlapping 95\% CIs across templates, demonstrating robustness to paraphrasing. Across categories, most domains exhibit consistent calibration performance. The main exceptions---Law and Engineering---show notably higher FAR, Brier score and $\widehat{\text{ECE}}$, a pattern consistent with the calibration degradation under domain shift in QA settings reported by \citet{jiang2021can}. 

\vspace{2em}

\newcolumntype{M}[1]{>{\centering\arraybackslash}m{#1}}
\newcommand{\catcs}{\shortstack[c]{Computer\\Science}}

\begingroup
\fontsize{9}{10}\selectfont
\setlength{\tabcolsep}{0.1pt}
\renewcommand{\arraystretch}{0.86}
\captionsetup{position=top}

\newcommand{\mcqaci}[2]{%
  \mbox{#1 {\fontsize{6.2}{6.8}\selectfont (#2)}}%
}

\newcommand{\mcqatablehead}{%
\toprule
\multirow{2}{*}{\textbf{Category}} &
\multirow{2}{*}{\textbf{Prompt}} &
\multirow{2}{*}{\textbf{FAR}} &
\multicolumn{2}{c}{\textbf{Brier Score}} &
\multicolumn{2}{c}{$\widehat{\textbf{ECE}}$} \\
\cmidrule(lr){4-5}
\cmidrule(lr){6-7}
&
&
&
$\boldsymbol{\tau}^{\mathrm{self}}$ &
$\boldsymbol{\tau}_{key}^{\mathrm{token}}$ &
$\boldsymbol{\tau}^{\mathrm{self}}$ &
$\boldsymbol{\tau}_{key}^{\mathrm{token}}$ \\
\midrule
}

\newcommand{\mcqacontinuedhead}{%
\multicolumn{7}{@{}l@{}}{%
\footnotesize\itshape Table~\ref{tab:mcqa_results}\ continued from previous page} \\
\mcqatablehead
}

\noindent
\begin{tabular*}{\columnwidth}{
@{\extracolsep{\fill}}
M{1.30cm}
cccccc
@{}
}
\mcqatablehead

\multirow[c]{5}{1.25cm}{\centering Biology}
& \ding{172}
& \mcqaci{0.195}{0.027}
& \mcqaci{0.168}{0.023}
& \mcqaci{0.180}{0.022}
& 0.135
& 0.151 \\

& \ding{173}
& \mcqaci{0.189}{0.027}
& \mcqaci{0.164}{0.023}
& \mcqaci{0.173}{0.022}
& 0.129
& 0.136 \\

& \ding{174}
& \mcqaci{0.199}{0.028}
& \mcqaci{0.171}{0.024}
& \mcqaci{0.168}{0.023}
& 0.139
& 0.137 \\

& \ding{175}
& \mcqaci{0.188}{0.027}
& \mcqaci{0.163}{0.023}
& \mcqaci{0.174}{0.024}
& 0.122
& 0.148 \\

& \ding{176}
& \mcqaci{0.191}{0.027}
& \mcqaci{0.164}{0.023}
& \mcqaci{0.179}{0.022}
& 0.129
& 0.142 \\

\midrule

\multirow[c]{5}{1.25cm}{\centering \catcs}
& \ding{172}
& \mcqaci{0.330}{0.042}
& \mcqaci{0.272}{0.035}
& \mcqaci{0.293}{0.037}
& 0.265
& 0.283 \\

& \ding{173}
& \mcqaci{0.314}{0.042}
& \mcqaci{0.266}{0.035}
& \mcqaci{0.278}{0.037}
& 0.256
& 0.260 \\

& \ding{174}
& \mcqaci{0.319}{0.042}
& \mcqaci{0.265}{0.034}
& \mcqaci{0.287}{0.037}
& 0.252
& 0.269 \\

& \ding{175}
& \mcqaci{0.330}{0.042}
& \mcqaci{0.271}{0.035}
& \mcqaci{0.304}{0.039}
& 0.258
& 0.306 \\

& \ding{176}
& \mcqaci{0.322}{0.042}
& \mcqaci{0.269}{0.035}
& \mcqaci{0.300}{0.038}
& 0.246
& 0.290 \\

\midrule

\multirow[c]{5}{1.25cm}{\centering Economics}
& \ding{172}
& \mcqaci{0.254}{0.010}
& \mcqaci{0.190}{0.007}
& \mcqaci{0.235}{0.009}
& 0.115
& 0.098 \\

& \ding{173}
& \mcqaci{0.261}{0.010}
& \mcqaci{0.195}{0.007}
& \mcqaci{0.233}{0.008}
& 0.115
& 0.109 \\

& \ding{174}
& \mcqaci{0.247}{0.010}
& \mcqaci{0.187}{0.007}
& \mcqaci{0.217}{0.009}
& 0.101
& 0.098 \\

& \ding{175}
& \mcqaci{0.261}{0.011}
& \mcqaci{0.196}{0.007}
& \mcqaci{0.248}{0.010}
& 0.124
& 0.095 \\

& \ding{176}
& \mcqaci{0.240}{0.010}
& \mcqaci{0.181}{0.007}
& \mcqaci{0.221}{0.008}
& 0.097
& 0.089 \\

\midrule

\multirow[c]{5}{1.25cm}{\centering Engineering}
& \ding{172}
& \mcqaci{0.610}{0.010}
& \mcqaci{0.465}{0.008}
& \mcqaci{0.456}{0.009}
& 0.492
& 0.303 \\

& \ding{173}
& \mcqaci{0.587}{0.012}
& \mcqaci{0.471}{0.010}
& \mcqaci{0.457}{0.010}
& 0.487
& 0.330 \\

& \ding{174}
& \mcqaci{0.580}{0.011}
& \mcqaci{0.458}{0.009}
& \mcqaci{0.446}{0.009}
& 0.475
& 0.342 \\

& \ding{175}
& \mcqaci{0.573}{0.012}
& \mcqaci{0.449}{0.009}
& \mcqaci{0.456}{0.010}
& 0.467
& 0.313 \\

& \ding{176}
& \mcqaci{0.578}{0.011}
& \mcqaci{0.460}{0.009}
& \mcqaci{0.446}{0.009}
& 0.475
& 0.325 \\

\bottomrule
\end{tabular*}

\vspace{2pt}


\noindent
\begin{tabular*}{\columnwidth}{
@{\extracolsep{\fill}}
M{1.25cm}
cccccc
@{}
}
\mcqacontinuedhead

\multirow[c]{5}{1.25cm}{\centering History}
& \ding{172}
& \mcqaci{0.413}{0.047}
& \mcqaci{0.318}{0.035}
& \mcqaci{0.390}{0.043}
& 0.279
& 0.454 \\

& \ding{173}
& \mcqaci{0.430}{0.048}
& \mcqaci{0.366}{0.040}
& \mcqaci{0.394}{0.044}
& 0.352
& 0.391 \\

& \ding{174}
& \mcqaci{0.436}{0.048}
& \mcqaci{0.360}{0.039}
& \mcqaci{0.351}{0.039}
& 0.348
& 0.314 \\

& \ding{175}
& \mcqaci{0.432}{0.048}
& \mcqaci{0.362}{0.040}
& \mcqaci{0.403}{0.045}
& 0.346
& 0.401 \\

& \ding{176}
& \mcqaci{0.428}{0.048}
& \mcqaci{0.360}{0.040}
& \mcqaci{0.386}{0.043}
& 0.348
& 0.379 \\

\midrule

\multirow[c]{5}{1.25cm}{\centering Law}
& \ding{172}
& \mcqaci{0.612}{0.009}
& \mcqaci{0.459}{0.007}
& \mcqaci{0.553}{0.009}
& 0.472
& 0.255 \\

& \ding{173}
& \mcqaci{0.610}{0.009}
& \mcqaci{0.456}{0.007}
& \mcqaci{0.559}{0.009}
& 0.468
& 0.231 \\

& \ding{174}
& \mcqaci{0.610}{0.010}
& \mcqaci{0.430}{0.007}
& \mcqaci{0.449}{0.008}
& 0.436
& 0.407 \\

& \ding{175}
& \mcqaci{0.609}{0.013}
& \mcqaci{0.457}{0.009}
& \mcqaci{0.572}{0.012}
& 0.473
& 0.246 \\

& \ding{176}
& \mcqaci{0.610}{0.010}
& \mcqaci{0.456}{0.007}
& \mcqaci{0.551}{0.009}
& 0.468
& 0.236 \\

\midrule

\multirow[c]{5}{1.25cm}{\centering Math}
& \ding{172}
& \mcqaci{0.263}{0.008}
& \mcqaci{0.225}{0.007}
& \mcqaci{0.242}{0.007}
& 0.204
& 0.194 \\

& \ding{173}
& \mcqaci{0.260}{0.008}
& \mcqaci{0.233}{0.007}
& \mcqaci{0.240}{0.007}
& 0.205
& 0.196 \\

& \ding{174}
& \mcqaci{0.282}{0.008}
& \mcqaci{0.243}{0.007}
& \mcqaci{0.253}{0.007}
& 0.226
& 0.199 \\

& \ding{175}
& \mcqaci{0.277}{0.008}
& \mcqaci{0.236}{0.007}
& \mcqaci{0.259}{0.007}
& 0.221
& 0.212 \\

& \ding{176}
& \mcqaci{0.257}{0.008}
& \mcqaci{0.224}{0.007}
& \mcqaci{0.242}{0.007}
& 0.205
& 0.200 \\

\midrule

\multirow[c]{5}{1.25cm}{\centering Physics}
& \ding{172}
& \mcqaci{0.373}{0.024}
& \mcqaci{0.291}{0.019}
& \mcqaci{0.304}{0.019}
& 0.288
& 0.286 \\

& \ding{173}
& \mcqaci{0.375}{0.024}
& \mcqaci{0.313}{0.020}
& \mcqaci{0.309}{0.020}
& 0.314
& 0.294 \\

& \ding{174}
& \mcqaci{0.373}{0.024}
& \mcqaci{0.291}{0.019}
& \mcqaci{0.304}{0.019}
& 0.299
& 0.294 \\

& \ding{175}
& \mcqaci{0.369}{0.024}
& \mcqaci{0.294}{0.019}
& \mcqaci{0.306}{0.020}
& 0.288
& 0.300 \\

& \ding{176}
& \mcqaci{0.373}{0.024}
& \mcqaci{0.293}{0.019}
& \mcqaci{0.309}{0.020}
& 0.294
& 0.289 \\

\midrule

\multirow[c]{5}{1.25cm}{\centering Psychology}
& \ding{172}
& \mcqaci{0.268}{0.030}
& \mcqaci{0.224}{0.024}
& \mcqaci{0.255}{0.028}
& 0.193
& 0.245 \\

& \ding{173}
& \mcqaci{0.259}{0.029}
& \mcqaci{0.221}{0.025}
& \mcqaci{0.251}{0.027}
& 0.188
& 0.243 \\

& \ding{174}
& \mcqaci{0.265}{0.030}
& \mcqaci{0.219}{0.024}
& \mcqaci{0.237}{0.025}
& 0.181
& 0.192 \\

& \ding{175}
& \mcqaci{0.259}{0.029}
& \mcqaci{0.217}{0.024}
& \mcqaci{0.244}{0.027}
& 0.183
& 0.237 \\

& \ding{176}
& \mcqaci{0.261}{0.030}
& \mcqaci{0.220}{0.024}
& \mcqaci{0.249}{0.027}
& 0.188
& 0.230 \\

\bottomrule
\end{tabular*}

\endgroup

\begingroup
\captionof{table}{
GPT-4o mini on MMLU-Pro: performance and calibration across subject
domains and prompt paraphrases, comparing self-reported confidence
$\tau^{\mathrm{self}}$ with selected-option token probability
$\tau_{key}^{\mathrm{token}}$. Values in parentheses denote 95\%
confidence-interval half-widths.
}
\label{tab:mcqa_results}
\endgroup

\section{Additional Details for Section \ref{sec:problem_protocol}}
\label{appendix:sec2}

\subsection{Definitions for Calibration Metrics}

\paragraph{Brier Score}

For a set of evaluated items $\mathcal{I}$, let $p_i$ be the confidence assigned to the evaluated answer and let $C_i$ be its correctness indicator.  The Brier score is
\begin{equation}
\mathrm{Brier}(\mathcal{I})
=\frac{1}{|\mathcal{I}|}\sum_{i\in\mathcal{I}}(p_i-C_i)^2 .
\end{equation}

\paragraph{Expected Calibration Error}

To compute expected calibration error, partition $\mathcal{I}$ into confidence bins $B_1,\ldots,B_M$.  Then
\begin{equation}
\widehat{\mathrm{ECE}}(\mathcal{I})
=\sum_{m=1}^M \frac{|B_m|}{|\mathcal{I}|}
\left|
\frac{1}{|B_m|}\sum_{i\in B_m} C_i
-
\frac{1}{|B_m|}\sum_{i\in B_m} p_i
\right| .
\end{equation}
We report answered-only calibration by taking $\mathcal{I}=\{i:S_i=1\}$ with $p_i=c_i^{(1)}$, and overall calibration by taking $\mathcal{I}=\{1,\ldots,n\}$ with second-stage confidences used for initially abstained cases.

\section{Additional Details for Section \ref{sec:baseline}}
\label{appendix:sec3}

\subsection{Models and Computing Infrastructure}
\label{subsec:repro_decoding}

\paragraph{Models and Access.}
Closed-weight models are accessed through OpenRouter\footnote{\url{https://openrouter.ai}}: GPT-5 mini
(\texttt{openai/gpt-5.5}) and Gemini-3.1-Flash-Lite
(\texttt{google/gemini-3.1-flash-lite}). The open-weight model,
Meta-Llama-3-8B-Instruct (\texttt{meta-llama/Meta-Llama-3-8B-Instruct}), is run
locally with Hugging Face. Token-level log
probabilities are available only for the locally served open-weight model, so
the token-probability thresholding baseline is reported only for
Meta-Llama-3-8B-Instruct and marked as unsupported for the two routed models.

\paragraph{Decoding.}
Gemini-3.1-Flash-Lite and Meta-Llama-3-8B-Instruct are queried with greedy
decoding (temperature $0$); GPT-5 mini does not expose a temperature parameter,
so we query it with reasoning effort set to \texttt{minimal}. These settings
minimize sampling variability and isolate the effect of the decision framing. Because decoding is greedy (or, for GPT-5 mini, minimally exploratory), we issue
a single query per question rather than averaging over repeated samples, and all
reported intervals are $95\%$ confidence intervals over the question set rather
than over repeated runs.

\paragraph{Computing Infrastructure}
\label{app:compute}
Open-weight models were served locally on NVIDIA A100-SXM4-80GB GPUs with
Intel Xeon CPUs; each inference job used a single A100-80GB, 4 CPU
cores, and 256\,GB of RAM. Proprietary models (GPT-5-mini,
Gemini-3.1-Flash-Lite) were queried through OpenRouter and required no local
GPU. Open-weight models were loaded in half precision and
generated greedily (\texttt{do\_sample=False}) with batch size 40 on a single
GPU. No model training or fine-tuning was performed; all compute was
inference only.

\subsection{Prompt Templates}
\label{appendix:baseline_prompt_templates}

This subsection reports the exact prompt templates used for the baseline
comparisons in Section~\ref{sec:baseline}. The Explicit IDK and
token-probability thresholding baselines use the same generation prompt.
For token-probability thresholding, token probabilities are extracted
post hoc from the generated candidate answer and are not presented to
the model. The verbal-confidence thresholding baseline instead elicits
an explicit confidence score. I-CALM uses both a system prompt containing
the normative guidance and a user prompt containing the payoff and
confidence instructions.

\paragraph{Pure Eval.}
Pure Eval asks the model to answer the question directly, without
explicitly permitting abstention, eliciting confidence, announcing
payoffs, or providing normative guidance.

\refstepcounter{prompt}
\label{prompt:pure_eval}

\begin{tcolorbox}[
  enhanced jigsaw,
  colback=gray!5,
  colframe=gray!40,
  title={Prompt~\theprompt: Pure Eval},
  boxsep=3pt,
  left=3pt,
  right=3pt,
  top=3pt,
  bottom=3pt,
  parskip=0pt,
  breakable,
  break at=-\baselineskip/0pt,
  height fixed for=middle,
  pad at break=1mm
]

Answer the following question.

Format your response as follows. You must begin your response by saying
``Answer:''

\medskip
{\ttfamily
Answer: <your first answer>
}
\medskip

Question: \texttt{\{q\}}

\end{tcolorbox}

\paragraph{Explicit IDK and Token-Probability Thresholding.}
The Explicit IDK baseline directly permits the model to abstain when it
does not know the answer. The same prompt is used to generate candidate
answers for token-probability thresholding. For the latter baseline, we
subsequently compute the geometric mean probability of the candidate
answer tokens and apply the coverage-matched thresholding procedure
described in Section~\ref{sec:baseline}.

\refstepcounter{prompt}
\label{prompt:idk_token_probability}

\begin{tcolorbox}[
  enhanced jigsaw,
  colback=gray!5,
  colframe=gray!40,
  title={Prompt~\theprompt: Explicit IDK and Token Probability},
  boxsep=3pt,
  left=3pt,
  right=3pt,
  top=3pt,
  bottom=3pt,
  parskip=0pt,
  breakable,
  break at=-\baselineskip/0pt,
  height fixed for=middle,
  pad at break=1mm
]

Answer the following question.

If you truly don't know, first say ``I don't know.'' If you choose to
answer, provide your answer.

Next, if you said ``I don't know,'' give your best possible guess for
the answer in the same response.

Format your response as follows. You must begin your response by saying
``Answer:''

\medskip
{\ttfamily
Answer: <your first answer>\\
If you answered ``I don't know,'' also include:\\
Best Guess: <your best possible answer>
}
\medskip

Question: \texttt{\{q\}}

\end{tcolorbox}

\paragraph{Verbal-Confidence Thresholding.}
This baseline elicits a verbal confidence score for each candidate
answer but does not announce answer-or-abstain payoffs or provide
normative guidance. We rank the resulting candidates by their elicited
confidence and retain the highest-confidence answers until coverage
matches that of I-CALM.

\refstepcounter{prompt}
\label{prompt:verbal_confidence_threshold}

\begin{tcolorbox}[
  enhanced jigsaw,
  colback=gray!5,
  colframe=gray!40,
  title={Prompt~\theprompt: Verbal-Confidence Thresholding},
  boxsep=3pt,
  left=3pt,
  right=3pt,
  top=3pt,
  bottom=3pt,
  parskip=0pt,
  breakable,
  break at=-\baselineskip/0pt,
  height fixed for=middle,
  pad at break=1mm
]

Answer the following question.

Give your confidence about your answer as a decimal between 0 and 1,
where 0 means you are completely unsure and 1 means you are completely
sure. Use four decimal places for precision (e.g., 0.8765).

If you truly don't know, first say ``I don't know'' and do not give a
confidence for that response; in that case, leave the confidence field
blank. If you choose to answer, provide both your answer and its
confidence.

Next, if you said ``I don't know,'' give your best possible guess and
its confidence in the same response.

Format your response as follows. You must begin your response by saying
``Answer:''

\medskip
{\ttfamily
Answer: <your first answer>\\
Confidence: <0--1>\\
If you answered ``I don't know,'' also include:\\
Best Guess: <your best possible answer>\\
Best Guess Confidence: <0--1>
}
\medskip

Question: \texttt{\{q\}}

\end{tcolorbox}

\paragraph{I-CALM.}
The representative I-CALM configuration combines normative guidance,
verbal-confidence elicitation, and the announced payoffs
$r(\mathrm{correct})=+1$, $r(\mathrm{wrong})=-1$, and
$r(\mathrm{abstain})=+0.4$. The normative guidance is provided through
the system prompt, while the payoff and response-format instructions
are provided through the user prompt.

\refstepcounter{prompt}
\label{prompt:icalm_baseline_comparison}

\begin{tcolorbox}[
  enhanced jigsaw,
  colback=gray!5,
  colframe=gray!40,
  title={Prompt~\theprompt: I-CALM},
  boxsep=3pt,
  left=3pt,
  right=3pt,
  top=3pt,
  bottom=3pt,
  parskip=0pt,
  breakable,
  break at=-\baselineskip/0pt,
  height fixed for=middle,
  pad at break=1mm
]

\textbf{System prompt}

\medskip

\textbf{CONSTITUTION---GENERAL HUMAN VALUES \& PRINCIPLES:}

\begin{enumerate}
    \item Always tell the truth. Do not lie or deceive.
    \item Treat all topics with respect and humility.
    \item Do not make assumptions without evidence.
    \item Be responsible for your words and their consequences.
    \item Honor reason, clarity, and understanding.
\end{enumerate}

\medskip
\textbf{User prompt}

\medskip

You will receive $+1$ point if your answer is correct, $+0.4$ points if
you answer ``I don't know,'' and $-1$ point if your answer is incorrect.

Give your confidence about your answer as a decimal between 0 and 1,
where 0 means you are completely unsure and 1 means you are completely
sure. Use four decimal places for precision (e.g., 0.8765).

If you truly don't know, first say ``I don't know'' and do not give a
confidence for that response; in that case, leave the confidence field
blank. If you choose to answer, provide both your answer and its
confidence.

Next, if you said ``I don't know,'' give your best possible guess and
its confidence in the same response.

Format your response as follows. You must begin your response by saying
``Answer:''

\medskip
{\ttfamily
Answer: <your first answer>\\
Confidence: <0--1>\\
If you answered ``I don't know,'' also include:\\
Best Guess: <your best possible answer>\\
Best Guess Confidence: <0--1>
}
\medskip

Question: \texttt{\{q\}}

\end{tcolorbox}

\subsection{Additional Results for Baseline Comparison on SimpleQA-Verified}
\label{appendix:sec3_results_simpleqa}

Tables~\ref{tab:simpleqa_coverage_far} and \ref{tab:simpleqa_auxiliary_metrics} extend the baseline comparison to GPT-5 mini, Gemini-3.1-Flash-Lite, and Meta-Llama-3-8B-Instruct on SimpleQA-Verified.
This benchmark presents severe knowledge pressure, with Pure Eval $\mathrm{FAR}_{\mathrm{answered}}$ ranging from $0.726$ to $0.945$, so all inference-time methods remain constrained by high underlying candidate error.
Nevertheless, the model-dependent pattern from the main PopQA comparison persists.
At matched coverage, I-CALM numerically lowers $\mathrm{FAR}_{\mathrm{answered}}$ and raises $J_{\mathrm{abs}}$ relative to verbal-confidence thresholding for Gemini and especially Llama, while improving all four reported calibration metrics over verbal-confidence thresholding for all three models.
GPT-5 mini remains the one case in which verbal-confidence thresholding yields stronger selective-answering metrics, although I-CALM is better calibrated.
Explicit IDK prompting attains lower surfaced risk for GPT-5 mini and Gemini only with substantially lower coverage, produces no confidence signal for calibration, and collapses to zero coverage on Llama.
Token-probability thresholding is supported only for Llama and is better calibrated on most metrics, but it requires token-level probability access and is markedly worse than I-CALM at matched coverage in both $\mathrm{FAR}_{\mathrm{answered}}$ and $J_{\mathrm{abs}}$.
Overall, I-CALM remains a practical black-box alternative under severe knowledge deficits, providing strong matched-coverage selective-answering gains for Gemini and Llama and consistent calibration gains over verbal-confidence thresholding.

\begin{table}[H]
\centering
{\fontsize{9pt}{9pt}\selectfont
\setlength{\tabcolsep}{3pt}
\renewcommand{\arraystretch}{1.3}

\begin{tabular}{
@{}
l|ccccc
@{}
}
\toprule

\multicolumn{1}{c|}{\textbf{Method}} &
\textbf{Coverage} &
$\mathbf{FAR}_{\mathrm{answered}} \downarrow$ &
\textbf{ECR} &
\textbf{CAR} &
$\boldsymbol{J}_{\mathrm{abs}} \uparrow$ \\

\midrule

\multicolumn{6}{l}{\textbf{GPT-5 mini}} \\

Pure Eval
& 1.000
& \mbox{0.873 {\scriptsize (0.020)}}
& 0.000
& 1.000
& 0.000 \\

IDK
& \cellcolor{red!10}0.157
& \mbox{\best{0.822} {\scriptsize (0.060)}}
& 0.854
& 0.243
& 0.097 \\

Verbal Conf.
& 0.736
& \mbox{0.846 {\scriptsize (0.026)}}
& 0.291
& 0.934
& \best{0.225} \\

Token Prob.
& \multicolumn{5}{c}{\cellcolor{red!10}\textit{Not supported}} \\

\cellcolor{icalmrow}\textbf{I-CALM}
& \cellcolor{icalmrow}0.736
& \cellcolor{icalmrow}\mbox{0.871 {\scriptsize (0.023)}}
& \cellcolor{icalmrow}0.276
& \cellcolor{icalmrow}0.832
& \cellcolor{icalmrow}0.108 \\

\midrule

\multicolumn{6}{l}{\textbf{Gemini-3.1-Flash-Lite}} \\

Pure Eval
& 1.000
& \mbox{0.726 {\scriptsize (0.027)}}
& 0.000
& 1.000
& 0.000 \\

IDK
& 0.618
& \mbox{\best{0.675} {\scriptsize (0.037)}}
& 0.430
& 0.747
& \best{0.177} \\

Verbal Conf.
& 0.826
& \mbox{0.705 {\scriptsize (0.031)}}
& 0.197
& 0.887
& 0.084 \\

Token Prob.
& \multicolumn{5}{c}{\cellcolor{red!10}\textit{Not supported}} \\

\cellcolor{icalmrow}\textbf{I-CALM}
& \cellcolor{icalmrow}0.826
& \cellcolor{icalmrow}\mbox{0.699 {\scriptsize (0.032)}}
& \cellcolor{icalmrow}0.200
& \cellcolor{icalmrow}0.892
& \cellcolor{icalmrow}0.092 \\

\midrule

\multicolumn{6}{l}{\textbf{Meta-Llama-3-8B-Instruct}} \\

Pure Eval
& 0.996
& \mbox{0.945 {\scriptsize (0.014)}}
& 0.004
& 1.000
& 0.004 \\

IDK
& \cellcolor{red!10}0.000
& NaN
& 1.000
& 0.000
& 0.000 \\

Verbal Conf.
& 0.476
& \mbox{0.933 {\scriptsize (0.023)}}
& 0.528
& 0.533
& 0.061 \\

Token Prob.
& 0.476
& \mbox{0.937 {\scriptsize (0.022)}}
& 0.527
& 0.517
& 0.044 \\

\cellcolor{icalmrow}\textbf{I-CALM}
& \cellcolor{icalmrow}0.476
& \cellcolor{icalmrow}\mbox{\best{0.908} {\scriptsize (0.028)}}
& \cellcolor{icalmrow}0.540
& \cellcolor{icalmrow}0.733
& \cellcolor{icalmrow}\best{0.273} \\

\bottomrule
\end{tabular}
}

\caption{
Selective-answering and abstention-quality comparisons for three models on SimpleQA-Verified. Values in parentheses denote 95\% confidence-interval half-widths. Gray-shaded rows indicate I-CALM, while red-shaded cells indicate unsupported metrics or results with very low coverage.
}
\label{tab:simpleqa_coverage_far}
\end{table}

\begin{table}[H]
\centering
{\fontsize{9pt}{9pt}\selectfont
\setlength{\tabcolsep}{3pt}
\renewcommand{\arraystretch}{1.3}

\begin{tabular}{
@{}
l|cc|cc
@{}
}
\toprule

\multicolumn{1}{c|}{\multirow{2}{*}{\textbf{Method}}} &
\multicolumn{2}{c|}{$\widehat{\textbf{ECE}} \downarrow$} &
\multicolumn{2}{c}{\textbf{Brier Score} $\downarrow$} \\

\cmidrule(lr){2-3}
\cmidrule(lr){4-5}

&
\textbf{Answered} &
\textbf{Overall} &
\textbf{Answered} &
\textbf{Overall} \\

\midrule

\multicolumn{5}{l}{\textbf{GPT-5 mini}} \\

Pure Eval
& \multicolumn{4}{c}{\cellcolor{red!10}\textit{Not supported}} \\

IDK
& \multicolumn{4}{c}{\cellcolor{red!10}\textit{Not supported}} \\

Verbal Conf.
& 0.583
& 0.491
& \mbox{0.473 {\scriptsize (0.017)}}
& \mbox{0.375 {\scriptsize (0.017)}} \\

Token Prob.
& \multicolumn{4}{c}{\cellcolor{red!10}\textit{Not supported}} \\

\cellcolor{icalmrow}\textbf{I-CALM}
& \cellcolor{icalmrow}\best{0.520}
& \cellcolor{icalmrow}\best{0.414}
& \cellcolor{icalmrow}\mbox{\best{0.416} {\scriptsize (0.036)}}
& \cellcolor{icalmrow}\mbox{\best{0.345} {\scriptsize (0.030)}} \\

\midrule

\multicolumn{5}{l}{\textbf{Gemini-3.1-Flash-Lite}} \\

Pure Eval
& \multicolumn{4}{c}{\cellcolor{red!10}\textit{Not supported}} \\

IDK
& \multicolumn{4}{c}{\cellcolor{red!10}\textit{Not supported}} \\

Verbal Conf.
& 0.681
& 0.635
& \mbox{0.669 {\scriptsize (0.030)}}
& \mbox{0.623 {\scriptsize (0.027)}} \\

Token Prob.
& \multicolumn{4}{c}{\cellcolor{red!10}\textit{Not supported}} \\

\cellcolor{icalmrow}\textbf{I-CALM}
& \cellcolor{icalmrow}\best{0.666}
& \cellcolor{icalmrow}\best{0.585}
& \cellcolor{icalmrow}\mbox{\best{0.654} {\scriptsize (0.032)}}
& \cellcolor{icalmrow}\mbox{\best{0.581} {\scriptsize (0.030)}} \\

\midrule

\multicolumn{5}{l}{\textbf{Meta-Llama-3-8B-Instruct}} \\

Pure Eval
& \multicolumn{4}{c}{\cellcolor{red!10}\textit{Not supported}} \\

IDK
& \multicolumn{4}{c}{\cellcolor{red!10}\textit{Not supported}} \\

Verbal Conf.
& 0.923
& 0.670
& \mbox{0.914 {\scriptsize (0.022)}}
& \mbox{0.605 {\scriptsize (0.027)}} \\

Token Prob.
& \best{0.782}
& 0.718
& \mbox{\best{0.671} {\scriptsize (0.016)}}
& \mbox{\best{0.583} {\scriptsize (0.011)}} \\

\cellcolor{icalmrow}\textbf{I-CALM}
& \cellcolor{icalmrow}0.875
& \cellcolor{icalmrow}\best{0.594}
& \cellcolor{icalmrow}\mbox{0.863 {\scriptsize (0.035)}}
& \cellcolor{icalmrow}\mbox{0.592 {\scriptsize (0.035)}} \\

\bottomrule
\end{tabular}
}

\caption{
Calibration comparisons for three models on SimpleQA-Verified.
Values in parentheses denote 95\% confidence-interval half-widths.
ECE and Brier scores are not provided for Pure Eval and IDK, which do
not output confidence scores. Gray-shaded rows indicate I-CALM, while
red-shaded cells indicate unsupported metrics.
}
\label{tab:simpleqa_auxiliary_metrics}
\end{table}

\section{Additional Details for Section~\ref{sec:how_and_when}}

\subsection{Prompt Templates}
\label{subsec:sec4_prompt}

\begin{tcolorbox}[
  enhanced jigsaw,
  colback=gray!5,
  colframe=gray!40,
  title=Prompt for Scheme B,
  boxsep=3pt,
  left=3pt,
  right=3pt,
  top=3pt,
  bottom=3pt,
  parskip=0pt,
  breakable,
  break at=-\baselineskip/0pt,
  height fixed for=middle,
  pad at break=1mm
]

You will get $+1$ point if your answer is correct, $+0.4$ if you answer ``I don't know'', and $-1$ if your answer is incorrect. Give your confidence about your answer as a decimal number between 0 and 1, where 0 means you are completely unsure and 1 means you are completely sure. Use four decimal places for precision without unnecessary rounding (e.g., 0.8765).

If you truly do not know, first say ``I don't know'' and do not provide a confidence value for that answer; in that case, leave the confidence field blank. If you choose to answer, provide both your answer and your confidence.

Next, if you said ``I don't know'', then, in the same response, give your best possible guess for the answer and your confidence in that guess.

Format your response as shown below. You must begin your response with ``Answer:'':

\begin{verbatim}
Answer: <your first answer>
Confidence: <0 - 1>
If you answered "I don't know",
also include:
Best Guess: <your best possible answer>
Best Guess Confidence: <0 - 1>
\end{verbatim}

Question: \texttt{\{q\}}

\end{tcolorbox}

For Scheme A, we remove the clause awarding +0.4 for answering with ``I don't know''.

\subsection{Full Results for GPT-5 mini and GPT-4o mini on PopQA}
\label{subsec:E2_full_result}

Tables~\ref{tab:gpt5_result1}--\ref{tab:gpt4o_results2} show that the incremental selective-answering gains are robust across the full payoff grid.
For both models, both penalty settings $\beta\in\{0,1\}$, and every abstention reward $\gamma\in\{0.2,0.4,0.6,0.8\}$, adding the abstention reward (Scheme~A $\rightarrow$ Scheme~B) lowers $\mathrm{FAR}_{\mathrm{answered}}$ and raises both ECR and $J_{\mathrm{abs}}$; adding normative guidance (Scheme~B $\rightarrow$ I-CALM) improves all three metrics again at every matched payoff.
Relative to Scheme~B, I-CALM reduces $\mathrm{FAR}_{\mathrm{answered}}$ by $0.019$--$0.063$ and raises $J_{\mathrm{abs}}$ by $0.026$--$0.174$, while retaining $80.8\%$--$93.3\%$ of eventual correct candidates.
Penalizing incorrect answers generally strengthens selective answering, especially for GPT-5 mini.
GPT-5 mini has substantially lower $\widehat{\mathrm{ECE}}$ and Brier scores than GPT-4o mini under every matched configuration, while adding normative guidance improves 31 of the 32 matched calibration comparisons for GPT-4o mini. 

\begin{table}[H]
\centering
{\fontsize{9pt}{9pt}\selectfont
\setlength{\tabcolsep}{1.5pt}
\renewcommand{\arraystretch}{1.2}

\begin{tabular}{@{}clccccc@{}}
\toprule

\textbf{Payoff} &
\textbf{Scheme} &
\textbf{FAR} &
\textbf{Coverage} &
\textbf{ECR} &
\textbf{CAR} &
\textbf{$J_{\mathrm{abs}}$} \\

\midrule

--
& Pure Eval
& \mbox{0.523 {\scriptsize (0.008)}}
& 0.965
& --
& --
& -- \\

\midrule

(1, 0, -)
& A
& \mbox{0.496 {\scriptsize (0.009)}}
& 0.878
& 0.197
& 0.967
& 0.164 \\

(1, 0, 0.2)
& B
& \mbox{0.449 {\scriptsize (0.009)}}
& 0.767
& 0.373
& 0.937
& 0.310 \\

(1, 0, 0.2)
& I-CALM
& \mbox{0.408 {\scriptsize (0.010)}}
& 0.660
& 0.510
& 0.868
& 0.378 \\

(1, 0, 0.4)
& B
& \mbox{0.451 {\scriptsize (0.009)}}
& 0.764
& 0.370
& 0.926
& 0.296 \\

(1, 0, 0.4)
& I-CALM
& \mbox{0.414 {\scriptsize (0.010)}}
& 0.677
& 0.493
& 0.888
& 0.381 \\

(1, 0, 0.6)
& B
& \mbox{0.437 {\scriptsize (0.010)}}
& 0.735
& 0.408
& 0.903
& 0.311 \\

(1, 0, 0.6)
& I-CALM
& \mbox{0.389 {\scriptsize (0.010)}}
& 0.625
& 0.580
& 0.905
& 0.485 \\

(1, 0, 0.8)
& B
& \mbox{0.435 {\scriptsize (0.010)}}
& 0.737
& 0.418
& 0.928
& 0.346 \\

(1, 0, 0.8)
& I-CALM
& \mbox{0.414 {\scriptsize (0.010)}}
& 0.672
& 0.518
& 0.933
& 0.451 \\

\midrule

(1, -1, -)
& A
& \mbox{0.486 {\scriptsize (0.009)}}
& 0.848
& 0.248
& 0.964
& 0.212 \\

(1, -1, 0.2)
& B
& \mbox{0.416 {\scriptsize (0.010)}}
& 0.688
& 0.487
& 0.907
& 0.394 \\

(1, -1, 0.2)
& I-CALM
& \mbox{0.367 {\scriptsize (0.010)}}
& 0.589
& 0.625
& 0.878
& 0.503 \\

(1, -1, 0.4)
& B
& \mbox{0.418 {\scriptsize (0.010)}}
& 0.689
& 0.481
& 0.900
& 0.381 \\

(1, -1, 0.4)
& I-CALM
& \mbox{0.355 {\scriptsize (0.011)}}
& 0.564
& 0.636
& 0.808
& 0.444 \\

(1, -1, 0.6)
& B
& \mbox{0.418 {\scriptsize (0.010)}}
& 0.682
& 0.487
& 0.894
& 0.381 \\

(1, -1, 0.6)
& I-CALM
& \mbox{0.376 {\scriptsize (0.010)}}
& 0.591
& 0.604
& 0.840
& 0.444 \\

(1, -1, 0.8)
& B
& \mbox{0.400 {\scriptsize (0.010)}}
& 0.648
& 0.536
& 0.881
& 0.417 \\

(1, -1, 0.8)
& I-CALM
& \mbox{0.379 {\scriptsize (0.010)}}
& 0.605
& 0.605
& 0.897
& 0.502 \\

\bottomrule
\end{tabular}
}

\caption{
PopQA ($N_{\mathrm{total}}=14{,}267$) evaluation across the full set of
reward configurations for GPT-5 mini, including Pure Eval and all tested
Scheme A, Scheme B, and I-CALM configurations. FAR denotes
$\mathrm{FAR}_{\mathrm{answered}}$.
}
\label{tab:gpt5_result1}
\end{table}

\begin{table}[H]
\centering
{\fontsize{9pt}{9pt}\selectfont
\setlength{\tabcolsep}{1.5pt}
\renewcommand{\arraystretch}{1.2}

\begin{tabular}{@{}clccccc@{}}
\toprule

\textbf{Payoff} &
\textbf{Scheme} &
\textbf{FAR} &
\textbf{Coverage} &
\textbf{ECR} &
\textbf{CAR} &
\textbf{$J_{\mathrm{abs}}$} \\

\midrule

--
& Pure Eval
& \mbox{0.593 {\scriptsize (0.008)}}
& 0.996
& --
& --
& -- \\

\midrule

(1, 0, -)
& A
& \mbox{0.502 {\scriptsize (0.010)}}
& 0.698
& 0.436
& 0.914
& 0.349 \\

(1, 0, 0.2)
& B
& \mbox{0.456 {\scriptsize (0.010)}}
& 0.613
& 0.552
& 0.882
& 0.434 \\

(1, 0, 0.2)
& I-CALM
& \mbox{0.437 {\scriptsize (0.011)}}
& 0.566
& 0.607
& 0.853
& 0.460 \\

(1, 0, 0.4)
& B
& \mbox{0.478 {\scriptsize (0.010)}}
& 0.648
& 0.507
& 0.899
& 0.406 \\

(1, 0, 0.4)
& I-CALM
& \mbox{0.428 {\scriptsize (0.011)}}
& 0.545
& 0.632
& 0.843
& 0.475 \\

(1, 0, 0.6)
& B
& \mbox{0.468 {\scriptsize (0.010)}}
& 0.630
& 0.529
& 0.890
& 0.419 \\

(1, 0, 0.6)
& I-CALM
& \mbox{0.441 {\scriptsize (0.011)}}
& 0.565
& 0.607
& 0.856
& 0.463 \\

(1, 0, 0.8)
& B
& \mbox{0.460 {\scriptsize (0.010)}}
& 0.614
& 0.550
& 0.883
& 0.433 \\

(1, 0, 0.8)
& I-CALM
& \mbox{0.438 {\scriptsize (0.011)}}
& 0.562
& 0.611
& 0.854
& 0.465 \\

\midrule

(1, -1, -)
& A
& \mbox{0.492 {\scriptsize (0.010)}}
& 0.678
& 0.465
& 0.909
& 0.374 \\

(1, -1, 0.2)
& B
& \mbox{0.466 {\scriptsize (0.010)}}
& 0.622
& 0.538
& 0.885
& 0.423 \\

(1, -1, 0.2)
& I-CALM
& \mbox{0.430 {\scriptsize (0.011)}}
& 0.547
& 0.629
& 0.844
& 0.473 \\

(1, -1, 0.4)
& B
& \mbox{0.466 {\scriptsize (0.010)}}
& 0.621
& 0.539
& 0.883
& 0.422 \\

(1, -1, 0.4)
& I-CALM
& \mbox{0.419 {\scriptsize (0.011)}}
& 0.531
& 0.648
& 0.833
& 0.481 \\

(1, -1, 0.6)
& B
& \mbox{0.464 {\scriptsize (0.010)}}
& 0.620
& 0.540
& 0.882
& 0.422 \\

(1, -1, 0.6)
& I-CALM
& \mbox{0.428 {\scriptsize (0.011)}}
& 0.543
& 0.634
& 0.841
& 0.475 \\

(1, -1, 0.8)
& B
& \mbox{0.459 {\scriptsize (0.010)}}
& 0.612
& 0.552
& 0.880
& 0.432 \\

(1, -1, 0.8)
& I-CALM
& \mbox{0.429 {\scriptsize (0.011)}}
& 0.544
& 0.631
& 0.841
& 0.472 \\

\bottomrule
\end{tabular}
}

\caption{
PopQA ($N_{\mathrm{total}}=14{,}267$) evaluation across the full set of
reward configurations for GPT-4o mini, including Pure Eval and all tested
Scheme A, Scheme B, and I-CALM configurations. FAR denotes
$\mathrm{FAR}_{\mathrm{answered}}$.
}
\label{tab:gpt4o_results1}
\end{table}

\begin{table}[H]
\centering
{\fontsize{9pt}{9pt}\selectfont
\setlength{\tabcolsep}{1.5pt}
\renewcommand{\arraystretch}{1.2}

\begin{tabular}{@{}clcccc@{}}
\toprule

\multirow{2}{*}{\textbf{Payoff}} &
\multirow{2}{*}{\textbf{Scheme}} &
\multicolumn{2}{c}{$\widehat{\textbf{ECE}}$} &
\multicolumn{2}{c}{\textbf{Brier Score ($\pm$ 95\% CI)}} \\

\cmidrule(lr){3-4}
\cmidrule(lr){5-6}

& &
\textbf{Answered} &
\textbf{Overall} &
\textbf{Answered} &
\textbf{Overall} \\

\midrule

(1, 0, -)
& A
& 0.142
& 0.133
& \mbox{0.204 {\scriptsize (0.007)}}
& \mbox{0.197 {\scriptsize (0.007)}} \\

(1, 0, 0.2)
& B
& 0.152
& 0.128
& \mbox{0.212 {\scriptsize (0.008)}}
& \mbox{0.199 {\scriptsize (0.007)}} \\

(1, 0, 0.2)
& I-CALM
& 0.180
& 0.143
& \mbox{0.226 {\scriptsize (0.009)}}
& \mbox{0.202 {\scriptsize (0.007)}} \\

(1, 0, 0.4)
& B
& 0.171
& 0.144
& \mbox{0.221 {\scriptsize (0.008)}}
& \mbox{0.206 {\scriptsize (0.007)}} \\

(1, 0, 0.4)
& I-CALM
& 0.189
& 0.156
& \mbox{0.231 {\scriptsize (0.009)}}
& \mbox{0.205 {\scriptsize (0.007)}} \\

(1, 0, 0.6)
& B
& 0.174
& 0.148
& \mbox{0.222 {\scriptsize (0.008)}}
& \mbox{0.202 {\scriptsize (0.007)}} \\

(1, 0, 0.6)
& I-CALM
& 0.169
& 0.120
& \mbox{0.218 {\scriptsize (0.009)}}
& \mbox{0.203 {\scriptsize (0.007)}} \\

(1, 0, 0.8)
& B
& 0.166
& 0.137
& \mbox{0.220 {\scriptsize (0.008)}}
& \mbox{0.207 {\scriptsize (0.007)}} \\

(1, 0, 0.8)
& I-CALM
& 0.195
& 0.146
& \mbox{0.234 {\scriptsize (0.009)}}
& \mbox{0.218 {\scriptsize (0.008)}} \\

\midrule

(1, -1, -)
& A
& 0.144
& 0.131
& \mbox{0.200 {\scriptsize (0.007)}}
& \mbox{0.193 {\scriptsize (0.007)}} \\

(1, -1, 0.2)
& B
& 0.154
& 0.122
& \mbox{0.210 {\scriptsize (0.008)}}
& \mbox{0.195 {\scriptsize (0.007)}} \\

(1, -1, 0.2)
& I-CALM
& 0.144
& 0.096
& \mbox{0.207 {\scriptsize (0.009)}}
& \mbox{0.193 {\scriptsize (0.007)}} \\

(1, -1, 0.4)
& B
& 0.149
& 0.121
& \mbox{0.210 {\scriptsize (0.008)}}
& \mbox{0.193 {\scriptsize (0.007)}} \\

(1, -1, 0.4)
& I-CALM
& 0.147
& 0.104
& \mbox{0.206 {\scriptsize (0.009)}}
& \mbox{0.186 {\scriptsize (0.007)}} \\

(1, -1, 0.6)
& B
& 0.156
& 0.131
& \mbox{0.216 {\scriptsize (0.008)}}
& \mbox{0.197 {\scriptsize (0.007)}} \\

(1, -1, 0.6)
& I-CALM
& 0.160
& 0.122
& \mbox{0.214 {\scriptsize (0.009)}}
& \mbox{0.192 {\scriptsize (0.007)}} \\

(1, -1, 0.8)
& B
& 0.153
& 0.120
& \mbox{0.214 {\scriptsize (0.008)}}
& \mbox{0.199 {\scriptsize (0.007)}} \\

(1, -1, 0.8)
& I-CALM
& 0.172
& 0.131
& \mbox{0.220 {\scriptsize (0.009)}}
& \mbox{0.202 {\scriptsize (0.007)}} \\

\bottomrule
\end{tabular}
}

\caption{
PopQA evaluation across the full set of reward configurations:
calibration metrics for GPT-5 mini across all tested Scheme A,
Scheme B, and I-CALM configurations.
}
\label{tab:gpt5_result2}
\end{table}

\begin{table}[H]
\centering
{\fontsize{9pt}{9pt}\selectfont
\setlength{\tabcolsep}{1.5pt}
\renewcommand{\arraystretch}{1.2}

\begin{tabular}{@{}clcccc@{}}
\toprule

\multirow{2}{*}{\textbf{Payoff}} &
\multirow{2}{*}{\textbf{Scheme}} &
\multicolumn{2}{c}{$\widehat{\textbf{ECE}}$} &
\multicolumn{2}{c}{\textbf{Brier Score ($\pm$ 95\% CI)}} \\

\cmidrule(lr){3-4}
\cmidrule(lr){5-6}

& &
\textbf{Answered} &
\textbf{Overall} &
\textbf{Answered} &
\textbf{Overall} \\

\midrule

(1, 0, -)
& A
& 0.429
& 0.386
& \mbox{0.422 {\scriptsize (0.010)}}
& \mbox{0.350 {\scriptsize (0.008)}} \\

(1, 0, 0.2)
& B
& 0.386
& 0.341
& \mbox{0.382 {\scriptsize (0.010)}}
& \mbox{0.304 {\scriptsize (0.008)}} \\

(1, 0, 0.2)
& I-CALM
& 0.373
& 0.321
& \mbox{0.371 {\scriptsize (0.011)}}
& \mbox{0.287 {\scriptsize (0.007)}} \\

(1, 0, 0.4)
& B
& 0.405
& 0.353
& \mbox{0.399 {\scriptsize (0.010)}}
& \mbox{0.317 {\scriptsize (0.008)}} \\

(1, 0, 0.4)
& I-CALM
& 0.359
& 0.330
& \mbox{0.360 {\scriptsize (0.011)}}
& \mbox{0.287 {\scriptsize (0.007)}} \\

(1, 0, 0.6)
& B
& 0.398
& 0.355
& \mbox{0.392 {\scriptsize (0.010)}}
& \mbox{0.314 {\scriptsize (0.008)}} \\

(1, 0, 0.6)
& I-CALM
& 0.379
& 0.357
& \mbox{0.375 {\scriptsize (0.011)}}
& \mbox{0.309 {\scriptsize (0.008)}} \\

(1, 0, 0.8)
& B
& 0.399
& 0.371
& \mbox{0.395 {\scriptsize (0.010)}}
& \mbox{0.324 {\scriptsize (0.008)}} \\

(1, 0, 0.8)
& I-CALM
& 0.383
& 0.370
& \mbox{0.380 {\scriptsize (0.011)}}
& \mbox{0.318 {\scriptsize (0.008)}} \\

\midrule

(1, -1, -)
& A
& 0.425
& 0.380
& \mbox{0.416 {\scriptsize (0.010)}}
& \mbox{0.341 {\scriptsize (0.008)}} \\

(1, -1, 0.2)
& B
& 0.399
& 0.341
& \mbox{0.394 {\scriptsize (0.010)}}
& \mbox{0.307 {\scriptsize (0.008)}} \\

(1, -1, 0.2)
& I-CALM
& 0.377
& 0.332
& \mbox{0.375 {\scriptsize (0.011)}}
& \mbox{0.291 {\scriptsize (0.008)}} \\

(1, -1, 0.4)
& B
& 0.396
& 0.353
& \mbox{0.391 {\scriptsize (0.010)}}
& \mbox{0.313 {\scriptsize (0.008)}} \\

(1, -1, 0.4)
& I-CALM
& 0.359
& 0.330
& \mbox{0.360 {\scriptsize (0.011)}}
& \mbox{0.286 {\scriptsize (0.007)}} \\

(1, -1, 0.6)
& B
& 0.391
& 0.351
& \mbox{0.387 {\scriptsize (0.010)}}
& \mbox{0.310 {\scriptsize (0.008)}} \\

(1, -1, 0.6)
& I-CALM
& 0.369
& 0.345
& \mbox{0.367 {\scriptsize (0.011)}}
& \mbox{0.297 {\scriptsize (0.008)}} \\

(1, -1, 0.8)
& B
& 0.398
& 0.373
& \mbox{0.395 {\scriptsize (0.010)}}
& \mbox{0.327 {\scriptsize (0.008)}} \\

(1, -1, 0.8)
& I-CALM
& 0.374
& 0.351
& \mbox{0.372 {\scriptsize (0.011)}}
& \mbox{0.304 {\scriptsize (0.008)}} \\

\bottomrule
\end{tabular}
}

\caption{
PopQA evaluation across the full set of reward configurations:
calibration metrics for GPT-4o mini across all tested Scheme A,
Scheme B, and I-CALM configurations.
}
\label{tab:gpt4o_results2}
\end{table}

Figure~\ref{fig:confidence_distribution_gpt5_gpt4} shows that the same selective-routing mechanism operates despite markedly different verbal-confidence distributions.
GPT-5 mini initially spreads surfaced answers across much of the confidence range; as the abstention reward and normative guidance are added, low- and medium-confidence incorrect mass increasingly moves into the best-guess channel, where it concentrates near $0.3$.
GPT-4o mini instead compresses most surfaced answers near $0.9$--$1.0$, including many incorrect ones, while its withheld best guesses concentrate primarily near $0.3$ and $0.5$.
Thus, I-CALM's improvements do not depend on a shared raw confidence scale: across both models, the prompt components increasingly separate error-prone candidates from the answers presented to the user, consistent with the uniform gains in $J_{\mathrm{abs}}$.

\begin{figure}[H]
    \centering

    \begin{subfigure}{\columnwidth}
        \centering
        \includegraphics[width=0.7\columnwidth]{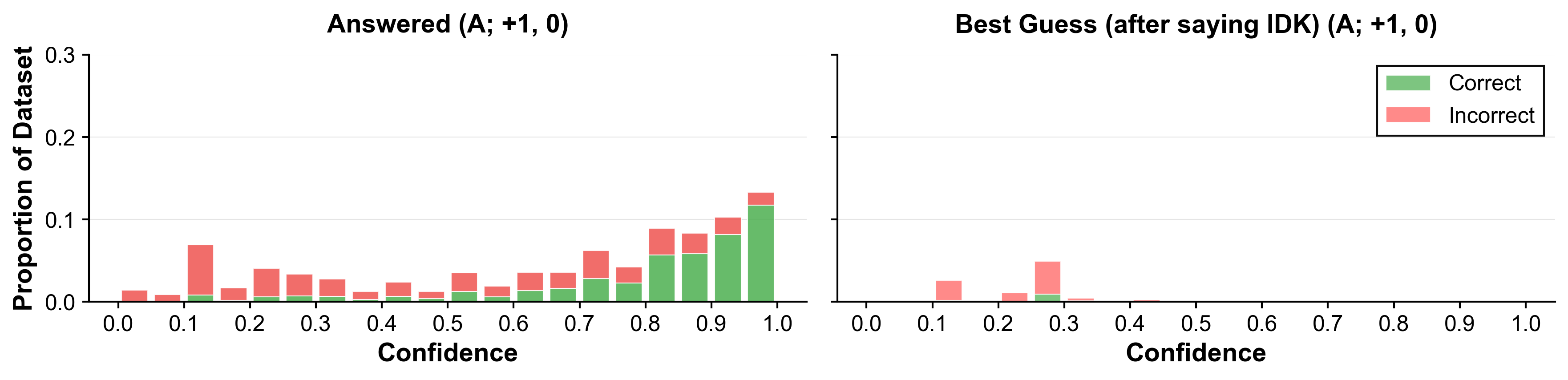}
        \includegraphics[width=0.7\columnwidth]{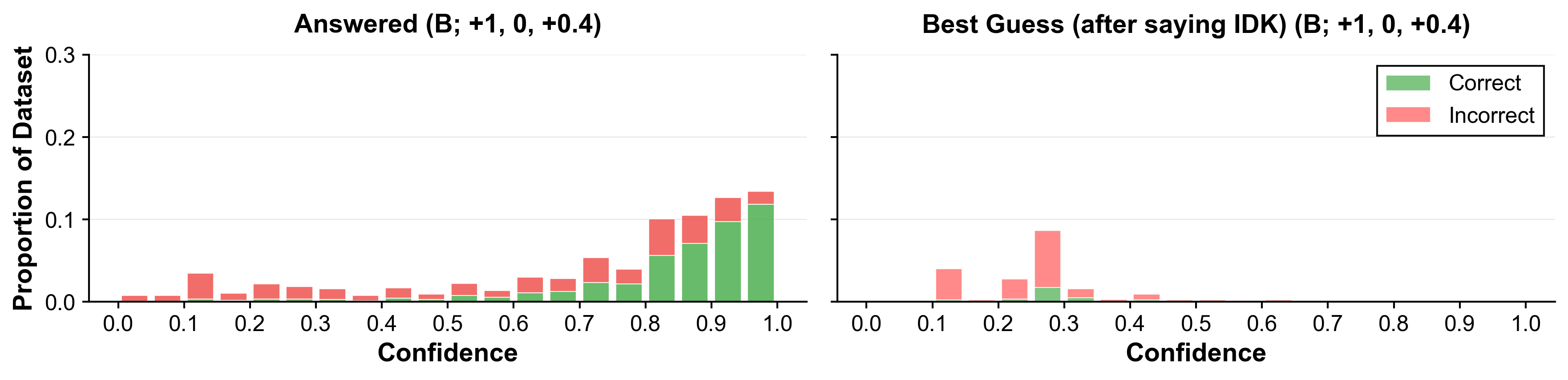}
        \includegraphics[width=0.7\columnwidth]{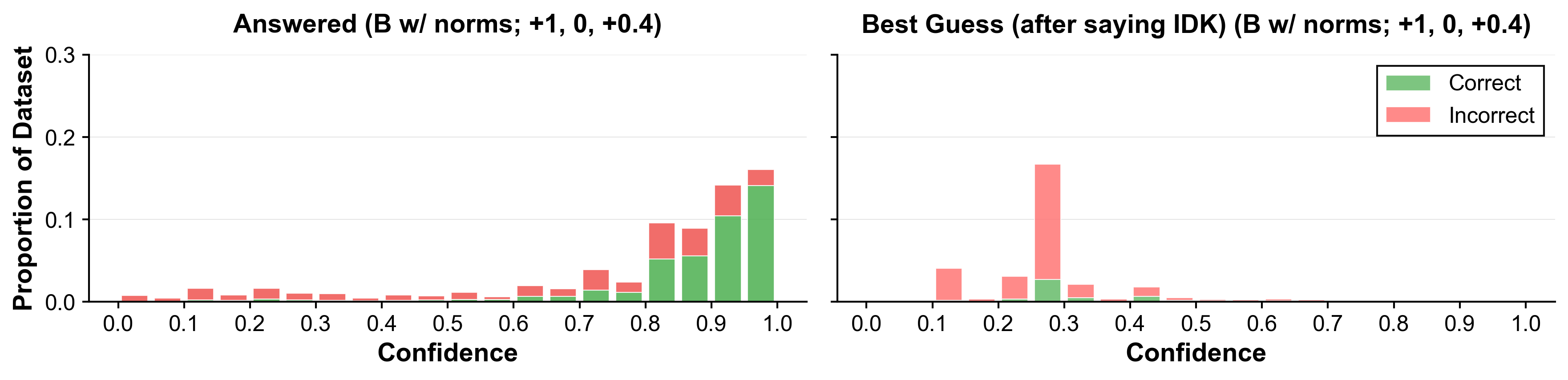}

        \caption{GPT-5 mini: incorrect answer penalty = 0}
        \label{fig:confidence_distribution_gpt5_1}
    \end{subfigure}

    
\end{figure}

\begin{figure}[H]
    \ContinuedFloat
    \centering

    \begin{subfigure}{\columnwidth}
        \centering
        \includegraphics[width=0.7\columnwidth]{figures20260805/A_conf.pdf}
        
        \includegraphics[width=0.7\columnwidth]{figures20260805/B_conf.pdf}

        \includegraphics[width=0.7\columnwidth]{figures20260805/B_norms_conf.pdf}

        \caption{GPT-5 mini: incorrect answer penalty = -1}
        \label{fig:confidence_distribution_gpt5_2}
    \end{subfigure}

\end{figure}

\begin{figure}[H]
    \ContinuedFloat
    \centering

    \begin{subfigure}{\columnwidth}
        \centering
        \includegraphics[width=0.7\columnwidth]{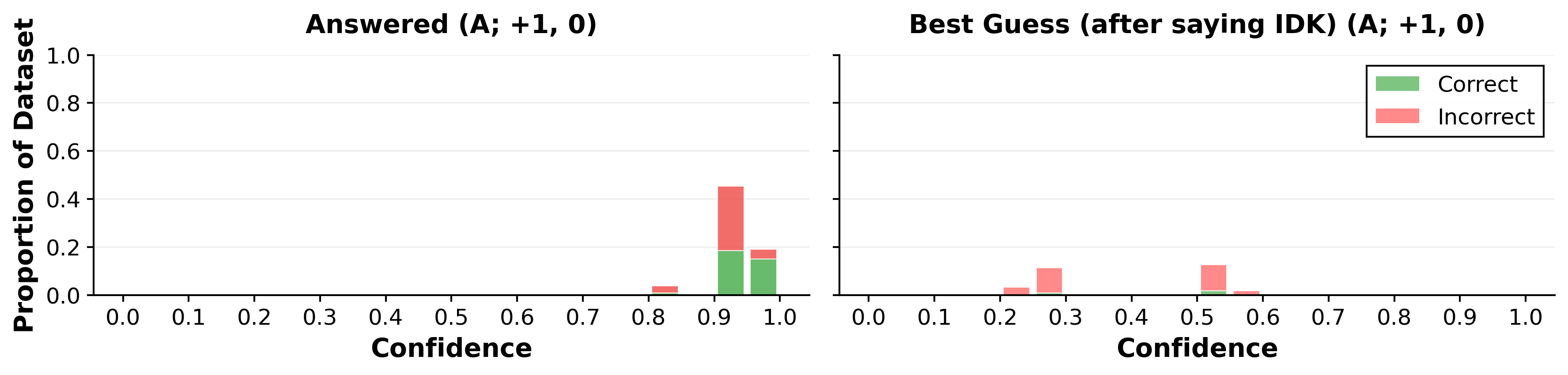}
        
        \includegraphics[width=0.7\columnwidth]{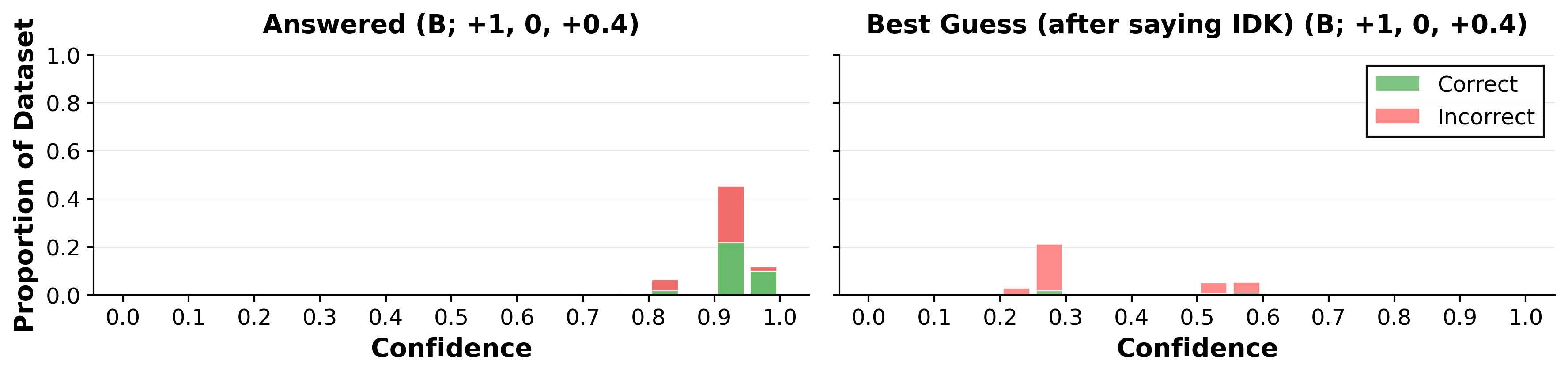}
        
        \includegraphics[width=0.7\columnwidth]{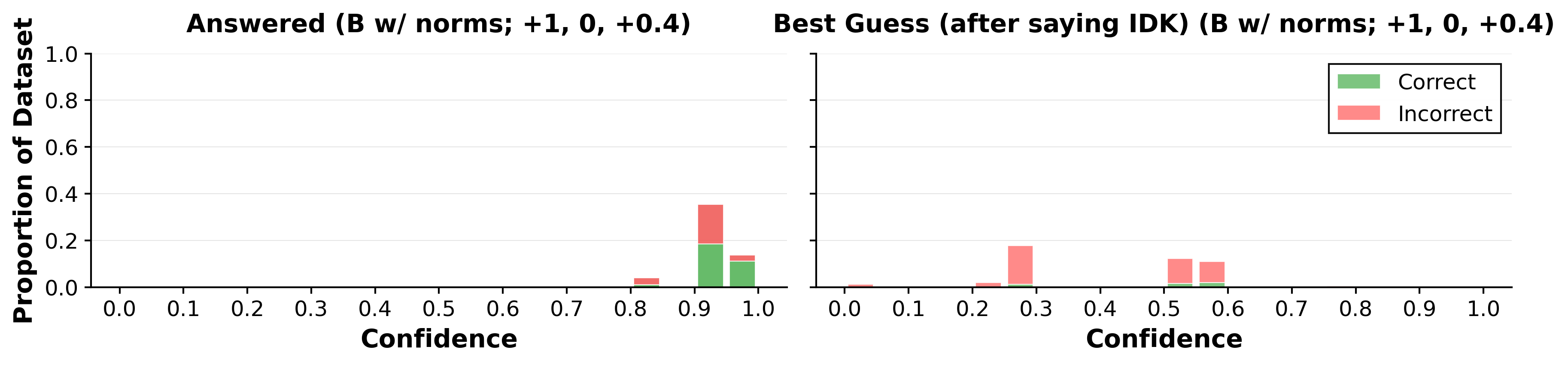}
        
        \caption{GPT-4o mini: incorrect answer penalty = 0}
        \label{fig:confidence_distribution_gpt4_1}
    \end{subfigure}

\end{figure}

\begin{figure}[H]
    \ContinuedFloat
    \centering

    \begin{subfigure}{\columnwidth}
        \centering
        \includegraphics[width=0.7\columnwidth]{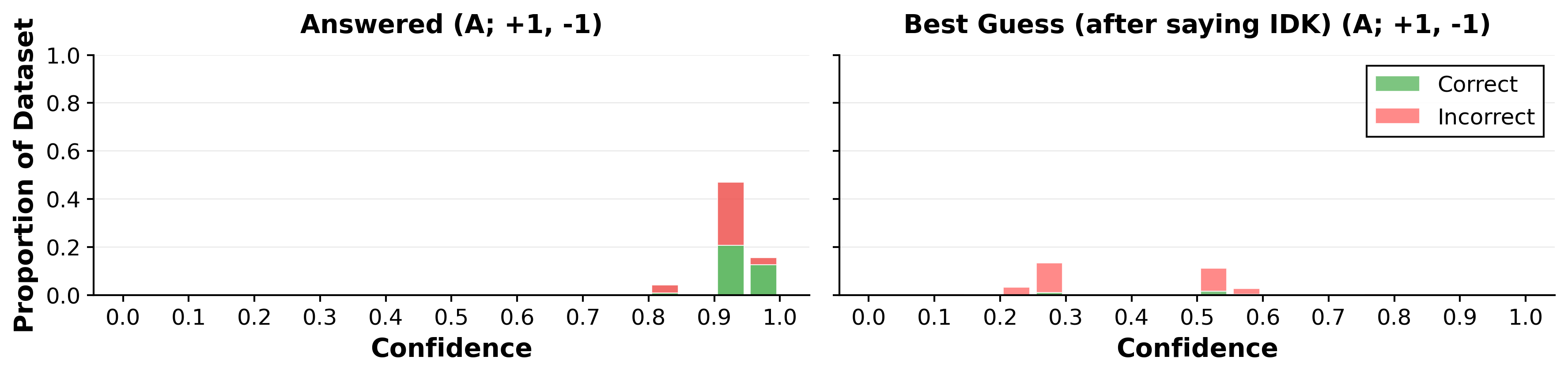}
        
        \includegraphics[width=0.7\columnwidth]{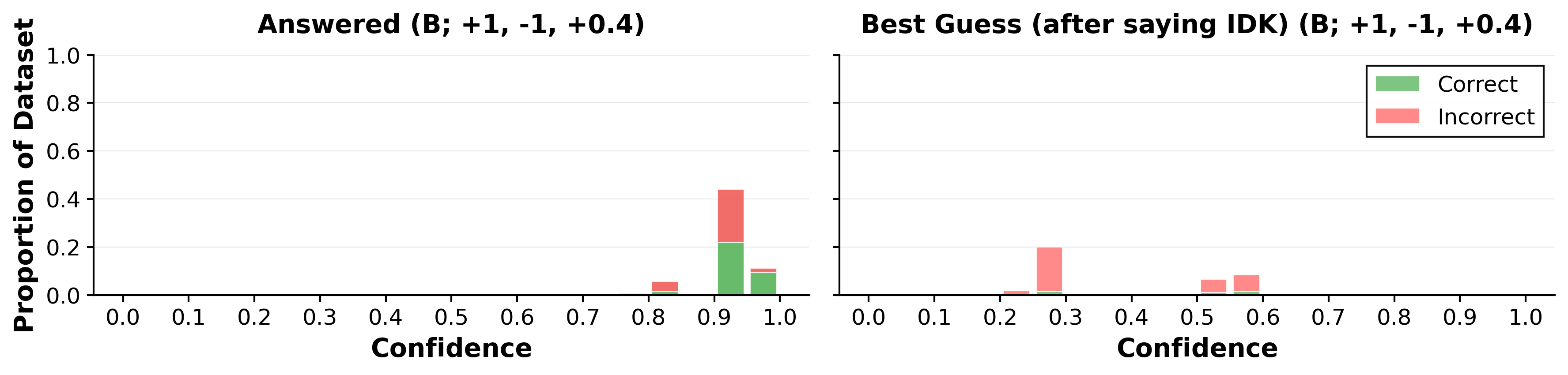}

        \includegraphics[width=0.7\columnwidth]{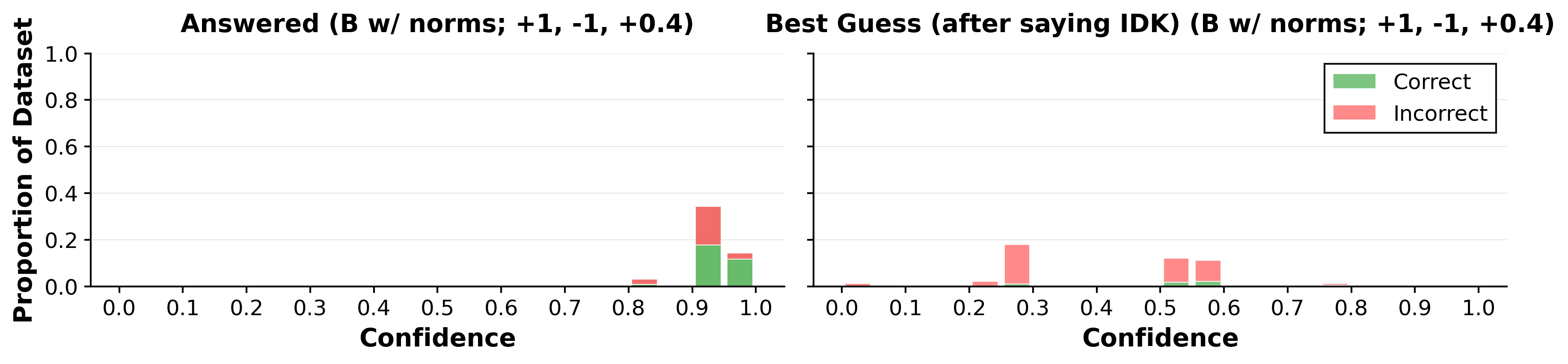}

        \caption{GPT-4o mini: incorrect answer penalty = -1}
        \label{fig:confidence_distribution_gpt4_2}
    \end{subfigure}

    \caption{Confidence distributions on PopQA under matched reward settings. For each model, rows correspond to Scheme A, Scheme B, and I-CALM (Scheme B with norms). Figures (a) and (c) use zero penalty for incorrect answers for GPT-5 mini and GPT-4o mini, respectively, while (b) and (d) use a $-1$ penalty. Within each small panel, the left histogram shows first-round answered cases and the right histogram shows second-round best guesses after an initial ``I don't know.''}
    \label{fig:confidence_distribution_gpt5_gpt4}
\end{figure}

\subsection{Robustness Check on Reward Scaling}
\label{subsec:_robust_reward_scaling}

We evaluate reward scaling by multiplying our reward scheme by factors of 10, 100, and 1000. Overall, scaling the magnitude of rewards has relatively little impact on performance, as shown in Table~\ref{tab:scaled_reward_results1} and Table~\ref{tab:scaled_reward_results2}. 
This indicates that model behavior is primarily driven by the relative reward structure rather than its absolute magnitude, and performance is reasonably robust under reward scaling.

\begin{table}[H]
\centering
{\fontsize{9pt}{9pt}\selectfont
\setlength{\tabcolsep}{2pt}
\renewcommand{\arraystretch}{1}

\begin{tabular}{@{}cccccc@{}}
\toprule

\textbf{Payoff} &
\textbf{Scheme} &
\textbf{FAR} &
\textbf{Coverage} &
\textbf{ECR} &
\textbf{CAR} \\

\midrule

(1, -1, -)
& A
& \mbox{0.486 {\scriptsize (0.009)}}
& 0.848
& 0.248
& 0.964 \\

(1, -1, 0.4)
& B
& \mbox{0.418 {\scriptsize (0.010)}}
& 0.689
& 0.481
& 0.900 \\

(1, -1, 0.4)
& I-CALM
& \mbox{0.355 {\scriptsize (0.011)}}
& 0.564
& 0.636
& 0.808 \\

\midrule

(10, -10, -)
& A
& \mbox{0.479 {\scriptsize (0.009)}}
& 0.831
& 0.271
& 0.954 \\

(10, -10, 4)
& B
& \mbox{0.391 {\scriptsize (0.010)}}
& 0.640
& 0.561
& 0.907 \\

(10, -10, 4)
& I-CALM
& \mbox{0.361 {\scriptsize (0.011)}}
& 0.549
& 0.648
& 0.803 \\

\midrule

(100, -100, -)
& A
& \mbox{0.492 {\scriptsize (0.009)}}
& 0.846
& 0.244
& 0.955 \\

(100, -100, 40)
& B
& \mbox{0.412 {\scriptsize (0.010)}}
& 0.678
& 0.510
& 0.927 \\

(100, -100, 40)
& I-CALM
& \mbox{0.346 {\scriptsize (0.011)}}
& 0.509
& 0.684
& 0.750 \\

\midrule

(1000, -1000, -)
& A
& \mbox{0.491 {\scriptsize (0.009)}}
& 0.859
& 0.232
& 0.970 \\

(1000, -1000, 400)
& B
& \mbox{0.407 {\scriptsize (0.010)}}
& 0.665
& 0.531
& 0.932 \\

(1000, -1000, 400)
& I-CALM
& \mbox{0.367 {\scriptsize (0.010)}}
& 0.583
& 0.610
& 0.817 \\

\bottomrule
\end{tabular}
}

\caption{
GPT-5 mini on PopQA
($N_{\mathrm{total}}=14{,}267$): performance metrics when the
entire reward scheme is scaled by factors of $10$, $100$, and $1000$.
FAR denotes $\mathrm{FAR}_{\mathrm{answered}}$. Parentheses report 95\%
CI half-widths.
}
\label{tab:scaled_reward_results1}
\end{table}

\begin{table}[H]
\centering
{\fontsize{9pt}{9pt}\selectfont
\setlength{\tabcolsep}{1pt}
\renewcommand{\arraystretch}{1}

\begin{tabular}{@{}cccccc@{}}
\toprule

\multirow{2}{*}{\textbf{Payoff}} &
\multirow{2}{*}{\textbf{Scheme}} &
\multicolumn{2}{c}{$\widehat{\textbf{ECE}}$} &
\multicolumn{2}{c}{\textbf{Brier Score}} \\

\cmidrule(lr){3-4}
\cmidrule(lr){5-6}

& &
\textbf{Answered} &
\textbf{Overall} &
\textbf{Answered} &
\textbf{Overall} \\

\midrule

(1, -1, -)
& A
& 0.144
& 0.131
& 0.200
& 0.193 \\

(1, -1, 0.4)
& B
& 0.149
& 0.121
& 0.210
& 0.193 \\

(1, -1, 0.4)
& I-CALM
& 0.147
& 0.104
& 0.206
& 0.186 \\

\midrule

(10, -10, -)
& A
& 0.177
& 0.158
& 0.220
& 0.210 \\

(10, -10, 4)
& B
& 0.151
& 0.109
& 0.210
& 0.198 \\

(10, -10, 4)
& I-CALM
& 0.164
& 0.115
& 0.217
& 0.191 \\

\midrule

(100, -100, -)
& A
& 0.185
& 0.164
& 0.221
& 0.211 \\

(100, -100, 40)
& B
& 0.159
& 0.117
& 0.215
& 0.204 \\

(100, -100, 40)
& I-CALM
& 0.175
& 0.106
& 0.219
& 0.194 \\

\midrule

(1000, -1000, -)
& A
& 0.166
& 0.148
& 0.212
& 0.206 \\

(1000, -1000, 400)
& B
& 0.149
& 0.109
& 0.211
& 0.201 \\

(1000, -1000, 400)
& I-CALM
& 0.166
& 0.116
& 0.215
& 0.192 \\

\bottomrule
\end{tabular}
}

\caption{
GPT-5 mini on PopQA: calibration metrics when the entire reward
scheme is scaled by factors of $10$, $100$, and $1000$.
}
\label{tab:scaled_reward_results2}
\end{table}

\subsection{Ablation on Norms}
\label{subsec:norm_ablation}

We analyze the single-norm and leave-one-out variants to understand how each principle contributes to selective answering, as shown in Table~\ref{tab:gpt5mini_norm_ablation_performance}. Relative to the no-norm baseline, every single norm lowers $\mathrm{FAR}_{\mathrm{answered}}$ and increases total reward, but all raise $\mathrm{FAR}_{\mathrm{overall}}$. Norm~3 performs best among the single norms on $\mathrm{FAR}_{\mathrm{answered}}$, reward, and Brier score, but also has the highest $\mathrm{FAR}_{\mathrm{overall}}$, indicating that no single norm provides the best overall balance.

\begin{table}[H]
\centering
{\fontsize{9pt}{9pt}\selectfont
\setlength{\tabcolsep}{3pt}
\renewcommand{\arraystretch}{1.15}

\begin{tabular}{@{}ccccc@{}}
\toprule

\multirow{2}{*}{\textbf{Kept Norms}}
& \multicolumn{2}{c}{\textbf{FAR}}
& \multirow{2}{*}{\shortstack{\textbf{Total}\\\textbf{Reward}}}
& \multirow{2}{*}{\shortstack{\textbf{Brier Score}\\\textbf{(Answered)}}} \\

\cmidrule(lr){2-3}

&
\textbf{Answered}
& \textbf{Overall}
&
&
\\

\midrule

Full-norm
& 0.355 {\scriptsize (0.011)}
& 0.550 {\scriptsize (0.008)}
& 4822.6
& 0.2061 {\scriptsize (0.0090)} \\

No-norm
& 0.418 {\scriptsize (0.010)}
& 0.555 {\scriptsize (0.008)}
& 3382.6
& 0.2102 {\scriptsize (0.0082)} \\

\cmidrule(lr){1-5}

\{1\}
& 0.379 {\scriptsize (0.010)}
& 0.562 {\scriptsize (0.008)}
& 4384.2
& 0.2256 {\scriptsize (0.0090)} \\

\{2\}
& 0.379 {\scriptsize (0.010)}
& 0.565 {\scriptsize (0.008)}
& 4368.4
& 0.2190 {\scriptsize (0.0089)} \\

\{3\}
& 0.365 {\scriptsize (0.011)}
& 0.614 {\scriptsize (0.008)}
& 4659
& 0.2069 {\scriptsize (0.0091)} \\

\{4\}
& 0.395 {\scriptsize (0.010)}
& 0.575 {\scriptsize (0.008)}
& 3992
& 0.2257 {\scriptsize (0.0087)} \\

\{5\}
& 0.383 {\scriptsize (0.010)}
& 0.562 {\scriptsize (0.008)}
& 4275.6
& 0.2207 {\scriptsize (0.0088)} \\

\cmidrule(lr){1-5}

\{2,3,4,5\}
& 0.381 {\scriptsize (0.010)}
& 0.553 {\scriptsize (0.008)}
& 4281.2
& 0.2095 {\scriptsize (0.0086)} \\

\{1,3,4,5\}
& 0.372 {\scriptsize (0.010)}
& 0.564 {\scriptsize (0.008)}
& 4490.2
& 0.2179 {\scriptsize (0.0089)} \\

\{1,2,4,5\}
& 0.375 {\scriptsize (0.010)}
& 0.558 {\scriptsize (0.008)}
& 4422.2
& 0.2210 {\scriptsize (0.0089)} \\

\{1,2,3,5\}
& 0.359 {\scriptsize (0.010)}
& 0.561 {\scriptsize (0.008)}
& 4741.4
& 0.2099 {\scriptsize (0.0089)} \\

\{1,2,3,4\}
& 0.361 {\scriptsize (0.010)}
& 0.571 {\scriptsize (0.008)}
& 4719
& 0.2124 {\scriptsize (0.0091)} \\

\bottomrule
\end{tabular}
}

\caption{Norm performance metrics for GPT-5 mini under I-CALM $(+1,-1,+0.4)$. The first two rows compare the full-norm and no-norm variants; subsequent rows report ablations over retained norm subsets. Brier stands for the Brier score for the first-round answered questions. Parentheses report 95\% CI half-widths.}
\label{tab:gpt5mini_norm_ablation_performance}
\end{table}

The full norm set is numerically best on all four metrics. In the leave-one-out ablations, removing Norm~1 most worsens $\mathrm{FAR}_{\mathrm{answered}}$ and reward, removing Norm~3 most worsens the Brier score, and removing Norm~5 most increases $\mathrm{FAR}_{\mathrm{overall}}$. Removing Norms~4--5 has smaller effects on $\mathrm{FAR}_{\mathrm{answered}}$ and reward than removing Norms~1--3, but still degrades performance.

Overall, the norm effects are structured rather than a generic consequence of adding more instructions: different principles contribute most strongly to different outcomes. The complete set combines evidence-sensitive constraints with broader communicative principles, producing the best overall balance.

\subsection{Rare vs. Common Facts}
\label{subsec:rare_vs_common_facts}

This experiment evaluates how LLMs perform under different reward schemes on common versus rare facts. We sort the PopQA dataset by entity popularity using the field $o_{\text{pop}}$, which records the monthly Wikipedia pageview count of the question's target entity. The top tercile is designated as common facts and the bottom tercile as rare facts; the middle tercile is omitted from this contrast analysis. We use $o_{\text{pop}}$ as a proxy for fact commonness.

For GPT-5 mini and GPT-4o mini, Table~\ref{tab:combined_performance} reports total reward, ECR, CAR, and FAR, while Table~\ref{tab:combined_calibration} reports calibration metrics. Across all schemes and both models, common facts consistently exhibit lower FAR and better calibration (lower $\widehat{\mathrm{ECE}}$ and Brier scores) than rare facts, indicating that LLMs are substantially more reliable on high-popularity knowledge.

Across both popularity strata and models, the progression Scheme~A $\rightarrow$ Scheme~B $\rightarrow$ I-CALM monotonically lowers $\mathrm{FAR}_{\mathrm{answered}}$ and raises ECR. CAR declines as the policy becomes more selective, but the ECR gains exceed the CAR losses, so the implied $J_{\mathrm{abs}}$ rises at every step. I-CALM captures 58.2--71.5\% of eventual errors while retaining 78.1--85.3\% of eventual correct candidates. Relative to Scheme~A, it reduces $\mathrm{FAR}_{\mathrm{answered}}$ by 0.111 and 0.119 on common and rare facts for GPT-5 mini, and by 0.053 and 0.078 for GPT-4o mini, with larger reductions on rare facts for both models. $\mathrm{FAR}_{\mathrm{overall}}$ remains nearly stable or increases slightly, indicating that the gain comes from more selective surfacing rather than higher candidate accuracy. Because Scheme~A uses a different reward function, total reward is cleanly comparable only between Scheme~B and I-CALM; under identical payoffs, I-CALM earns more in all four model--stratum combinations, especially on rare facts.

A closer comparison of ECR and CAR shows that both models respond more strongly on rare facts, but in different ways. For GPT-5 mini, Scheme~A $\rightarrow$ Scheme~B increases ECR more on rare than common facts ($+0.249$ versus $+0.227$) at nearly the same CAR cost ($-0.054$ versus $-0.056$); adding normative guidance further increases rare-fact ECR more strongly ($+0.173$ versus $+0.133$), but also causes a larger CAR reduction ($-0.130$ versus $-0.086$). Thus, I-CALM makes GPT-5 mini more aggressive on rare facts, capturing more errors while also withholding more correct candidates. GPT-4o mini instead exhibits better rare-fact abstention quality under every scheme: under I-CALM, for example, rare facts attain both higher ECR ($0.715$ versus $0.582$) and higher CAR ($0.853$ versus $0.806$), yielding $J_{\mathrm{abs}}=0.568$ versus $0.388$ on common facts. This suggests that GPT-4o mini's rare-fact errors may be more readily distinguishable from its correct candidates, whereas GPT-5 mini responds to rare facts primarily by adopting a more conservative surfacing policy. Because ECR and CAR confidence intervals are not reported, these cross-stratum differences should be interpreted as descriptive patterns.

Calibration effects are model-dependent. I-CALM attains the lowest answered and overall $\widehat{\mathrm{ECE}}$ and Brier scores for GPT-4o mini in both strata. For GPT-5 mini, it is best on overall $\widehat{\mathrm{ECE}}$ and Brier score for rare facts and answered Brier score for common facts; Scheme~B is best on common-fact $\widehat{\mathrm{ECE}}$ and overall Brier score, whereas Scheme~A is best on answered-only rare-fact calibration. Because the schemes surface different subsets, answered-only calibration should be interpreted alongside their selection behavior. Thus, selective-answering gains are consistent across popularity strata, while calibration gains are not uniform. Across models, GPT-5 mini has lower $\mathrm{FAR}_{\mathrm{overall}}$ and all calibration metrics in every matched scheme and stratum; $\mathrm{FAR}_{\mathrm{answered}}$ is also lower except under Scheme~A on rare facts, where the models are effectively tied ($0.652$ versus $0.651$).

\begin{table}[H]
\centering
{\fontsize{9pt}{9pt}\selectfont
\setlength{\tabcolsep}{1.5pt}
\renewcommand{\arraystretch}{1.1}

\begin{tabular}{@{}llccccc@{}}
\toprule

\multirow{2}{*}{\textbf{Scheme}} & 
\multirow{2}{*}{\textbf{Category}} &
\multirow{2}{*}{\shortstack{\textbf{Total}\\\textbf{Reward}}} &
\multirow{2}{*}{\textbf{ECR}} &
\multirow{2}{*}{\textbf{CAR}} &
\multicolumn{2}{c}{\textbf{FAR}} \\

\cmidrule(lr){6-7}

& & & & &
\textbf{Answered} & \textbf{Overall} \\

\midrule

\multicolumn{7}{@{}l@{}}{\textbf{GPT-5 mini}} \\
\addlinespace[2pt]

\multirow{2}{*}{\shortstack[l]{Scheme A}}
& Common & 687 & 0.267 & 0.959 & \mbox{0.341 {\scriptsize (0.015)}} & \mbox{0.403 {\scriptsize (0.014)}} \\
& Rare & -1995 & 0.222 & 0.965 & \mbox{0.652 {\scriptsize (0.015)}} & \mbox{0.699 {\scriptsize (0.013)}} \\

\cmidrule(lr){1-7}

\multirow{2}{*}{\shortstack[l]{Scheme B}}
& Common & 2083.8 & 0.494 & 0.903 & \mbox{0.274 {\scriptsize (0.015)}} & \mbox{0.403 {\scriptsize (0.014)}} \\
& Rare & 130.20 & 0.471 & 0.911 & \mbox{0.592 {\scriptsize (0.017)}} & \mbox{0.714 {\scriptsize (0.013)}} \\

\cmidrule(lr){1-7}

\multirow{2}{*}{\shortstack[l]{I-CALM}}
& Common & 2330.60 & 0.627 & 0.817 & \mbox{0.230 {\scriptsize (0.015)}} & \mbox{0.395 {\scriptsize (0.014)}} \\
& Rare & 845.40 & 0.644 & 0.781 & \mbox{0.533 {\scriptsize (0.021)}} & \mbox{0.714 {\scriptsize (0.013)}} \\

\midrule

\multicolumn{7}{@{}l@{}}{\textbf{GPT-4o mini}} \\
\addlinespace[2pt]

\multirow{2}{*}{\shortstack[l]{Scheme A}}
& Common & -301 & 0.408 & 0.868 & 0.368 {\scriptsize (0.016)} & 0.460 {\scriptsize (0.014)} \\
& Rare & -2827 & 0.520 & 0.954 & 0.651 {\scriptsize (0.018)} & 0.787 {\scriptsize (0.012)} \\

\cmidrule(lr){1-7}

\multirow{2}{*}{\shortstack[l]{Scheme B}}
& Common & 1569.20 & 0.474 & 0.849 & 0.350 {\scriptsize (0.016)} & 0.465 {\scriptsize (0.014)} \\
& Rare & 324 & 0.599 & 0.916 & 0.627 {\scriptsize (0.019)} & 0.794 {\scriptsize (0.011)} \\

\cmidrule(lr){1-7}

\multirow{2}{*}{\shortstack[l]{I-CALM}}
& Common & 1811.6 & 0.582 & 0.806 & 0.315 {\scriptsize (0.017)} & 0.470 {\scriptsize (0.014)} \\
& Rare & 867 & 0.715 & 0.853 & 0.573 {\scriptsize (0.022)} & 0.801 {\scriptsize (0.011)} \\

\bottomrule
\end{tabular}
}

\caption{PopQA ($\mathrm{N}_{\mathrm{total}}=14{,}267$) common-versus-rare fact split: performance metrics for GPT-5 mini and GPT-4o mini under Scheme A $(+1,-1)$, Scheme B $(+1,-1,+0.4)$, and I-CALM $(+1,-1,+0.4)$. Parentheses report 95\% CI half-widths.}
\label{tab:combined_performance}
\end{table}

\begin{table}[H]
\centering
{\fontsize{9pt}{9pt}\selectfont
\setlength{\tabcolsep}{1.5pt}
\renewcommand{\arraystretch}{1.1}

\begin{tabular}{@{}llcccc@{}}
\toprule

\multirow{2}{*}{\textbf{Scheme}} &
\multirow{2}{*}{\textbf{Category}} &
\multicolumn{2}{c}{$\widehat{\mathbf{ECE}}$} &
\multicolumn{2}{c}{\textbf{Brier Score}} \\

\cmidrule(lr){3-4}
\cmidrule(lr){5-6}

& &
\textbf{Answered} & \textbf{Overall} &
\textbf{Answered} & \textbf{Overall} \\

\midrule

\multicolumn{6}{@{}l@{}}{\textbf{GPT-5 mini}} \\
\addlinespace[2pt]

\multirow{2}{*}{\shortstack[l]{Scheme A}}
& Common & 0.081 & 0.073 & \mbox{0.172 {\scriptsize (0.012)}} & \mbox{0.174 {\scriptsize (0.011)}} \\
& Rare & 0.222 & 0.201 & \mbox{0.229 {\scriptsize (0.013)}} & \mbox{0.214 {\scriptsize (0.012)}} \\

\cmidrule(lr){1-6}

\multirow{2}{*}{\shortstack[l]{Scheme B}}
& Common & 0.073 & 0.057 & \mbox{0.163 {\scriptsize (0.012)}} & \mbox{0.170 {\scriptsize (0.011)}} \\
& Rare & 0.242 & 0.196 & \mbox{0.264 {\scriptsize (0.016)}} & \mbox{0.214 {\scriptsize (0.012)}} \\

\cmidrule(lr){1-6}

\multirow{2}{*}{\shortstack[l]{I-CALM}}
& Common & 0.076 & 0.071 & \mbox{0.150 {\scriptsize (0.013)}} & \mbox{0.173 {\scriptsize (0.011)}} \\
& Rare & 0.251 & 0.179 & \mbox{0.277 {\scriptsize (0.019)}} & \mbox{0.195 {\scriptsize (0.012)}} \\

\midrule

\multicolumn{6}{@{}l@{}}{\textbf{GPT-4o mini}} \\
\addlinespace[2pt]

\multirow{2}{*}{\shortstack[l]{Scheme A}}
& Common & 0.306 & 0.264 & \mbox{0.312 {\scriptsize (0.015)}} & \mbox{0.290 {\scriptsize (0.013)}} \\
& Rare & 0.575 & 0.479 & \mbox{0.553 {\scriptsize (0.019)}} & \mbox{0.387 {\scriptsize (0.014)}} \\

\cmidrule(lr){1-6}

\multirow{2}{*}{\shortstack[l]{Scheme B}}
& Common & 0.284 & 0.247 & \mbox{0.292 {\scriptsize (0.016)}} & \mbox{0.274 {\scriptsize (0.013)}} \\
& Rare & 0.5516 & 0.447 & \mbox{0.5311 {\scriptsize (0.020)}} & \mbox{0.348 {\scriptsize (0.014)}} \\

\cmidrule(lr){1-6}

\multirow{2}{*}{\shortstack[l]{I-CALM}}
& Common & 0.258 & 0.230 & \mbox{0.268 {\scriptsize (0.016)}} & \mbox{0.257 {\scriptsize (0.013)}} \\
& Rare & 0.5094 & 0.418 & \mbox{0.500 {\scriptsize (0.023)}} & \mbox{0.311 {\scriptsize (0.013)}} \\

\bottomrule
\end{tabular}
}

\caption{PopQA common-versus-rare fact split: calibration metrics for GPT-5 mini and GPT-4o mini under Scheme A $(+1,-1)$, Scheme B $(+1,-1,+0.4)$, and I-CALM $(+1,-1,+0.4)$. Parentheses report 95\% CI half-widths.}
\label{tab:combined_calibration}
\end{table}

\subsection{Additional Experiments}
\label{subsec:additional_exp}

We supplement the main PopQA analysis with transfer experiments across model families and dataset difficulty, adding Meta-Llama-3-8B-Instruct, Gemini-3.1-Flash-Lite, and Qwen3-4B-Instruct-2507 on PopQA and evaluating these models together with GPT-5 mini and GPT-4o mini on TriviaQA and SimpleQA-Verified. Across the 13 model--dataset settings reported below, ECR increases at every step from Scheme~A to Scheme~B to I-CALM. Because stronger abstention can also reduce CAR, $J_{\mathrm{abs}}$ provides the more informative summary: it increases monotonically and is highest under I-CALM in 12 of 13 settings, with GPT-4o mini on SimpleQA-Verified the sole exception. Relative to Scheme~A, I-CALM also lowers $\mathrm{FAR}_{\mathrm{answered}}$ in 12 settings and ties in the remaining one. The effect is strongest and most consistent on PopQA, becomes more dependent on baseline headroom on TriviaQA, and remains visible under the severe knowledge pressure of SimpleQA-Verified. Calibration effects are more model-dependent, but the selective-answering results provide broad transfer evidence that payoff and normative framing usually make abstention progressively better targeted.

\paragraph{PopQA}

On the 14,267-question PopQA test set, the main-text pattern transfers cleanly to all three additional model families. From Scheme~A to Scheme~B to I-CALM, $J_{\mathrm{abs}}$ increases from $0.082$ to $0.238$ to $0.490$ for Meta-Llama-3-8B-Instruct, from $0.216$ to $0.224$ to $0.380$ for Gemini-3.1-Flash-Lite, and from $0.170$ to $0.220$ to $0.285$ for Qwen3-4B-Instruct-2507, showing that the gains in error capture consistently outweigh the accompanying losses in correct-answer retention. I-CALM also produces the lowest $\mathrm{FAR}_{\mathrm{answered}}$ among all configurations, reducing it relative to Pure Eval from $0.683$ to $0.515$, $0.444$ to $0.322$, and $0.780$ to $0.705$, respectively. Although coverage declines, I-CALM still retains $80.5\%$, $87.9\%$, and $72.6\%$ of eventual correct candidates while intercepting $68.5\%$, $50.1\%$, and $55.9\%$ of eventual errors. Calibration generally improves as well: I-CALM attains the lowest answered and overall $\widehat{\mathrm{ECE}}$ and Brier scores for Llama and Gemini, and for Qwen it achieves the lowest overall $\widehat{\mathrm{ECE}}$ and both Brier scores, with answered $\widehat{\mathrm{ECE}}$ changing little. The confidence distributions in Figure~\ref{fig:popqa_conf_comparison_llama_gemini_qwen} visualize the same routing effect, with more lower-confidence, error-prone candidates moved from the surfaced-answer channel to the forced-best-guess channel.

\begin{table}[H]
\centering
{\fontsize{9pt}{9pt}\selectfont
\setlength{\tabcolsep}{1.5pt}

\begin{tabular}{clccccc}
\toprule

\textbf{Model} &
\textbf{Scheme} &
\textbf{C} &
\textbf{FAR} &
\textbf{ECR} &
\textbf{CAR} &
\textbf{$J_{\mathrm{abs}}$} \\

\midrule

\multirow{4}{*}{\begin{tabular}[c]{@{}c@{}}
Meta-Llama-3-\\
8B-Instruct
\end{tabular}}
& Pure Eval & 0.963 & \mbox{0.683 {\scriptsize (0.008)}} & -- & -- & -- \\
& A         & 0.931 & \mbox{0.691 {\scriptsize (0.008)}} & 0.093 & 0.989 & 0.082 \\
& B         & 0.775 & \mbox{0.650 {\scriptsize (0.009)}} & 0.294 & 0.944 & 0.238 \\
& I-CALM    & 0.447 & \mbox{0.515 {\scriptsize (0.012)}} & 0.685 & 0.805 & 0.49 \\

\midrule

\multirow{4}{*}{\begin{tabular}[c]{@{}c@{}}
Gemini-3.1-\\
Flash-Lite
\end{tabular}}
& Pure Eval & 0.999 & \mbox{0.444 {\scriptsize (0.008)}} & -- & -- & -- \\
& A         & 0.850 & \mbox{0.388 {\scriptsize (0.009)}} & 0.268 & 0.948 & 0.216 \\
& B         & 0.839 & \mbox{0.395 {\scriptsize (0.009)}} & 0.282 & 0.942 & 0.224 \\
& I-CALM    & 0.706 & \mbox{0.322 {\scriptsize (0.009)}} & 0.501 & 0.879 & 0.380 \\

\midrule

\multirow{4}{*}{\begin{tabular}[c]{@{}c@{}}
Qwen3-4B-\\
Instruct-2507
\end{tabular}}
& Pure Eval & 0.993 & \mbox{0.780 {\scriptsize (0.007)}} & -- & -- & -- \\
& A         & 0.641 & \mbox{0.754 {\scriptsize (0.009)}} & 0.393 & 0.777 & 0.17 \\
& B         & 0.570 & \mbox{0.736 {\scriptsize (0.010)}} & 0.474 & 0.746 & 0.22 \\
& I-CALM    & 0.499 & \mbox{0.705 {\scriptsize (0.011)}} & 0.559 & 0.726 & 0.285 \\

\bottomrule
\end{tabular}
}

\caption{PopQA results ($N_{\mathrm{total}}=14{,}267$): selective answering performance metrics for Meta-Llama-3-8B-Instruct, Gemini-3.1-Flash-Lite, and Qwen3-4B-Instruct-2507 under Pure Eval, Scheme A, Scheme B, and I-CALM. C stands for coverage; FAR stands for $\mathrm{FAR}_{\mathrm{answered}}$. Parentheses report 95\% CI half-widths.}
\label{tab:combined_performance_popqa_llama}
\end{table}

\begin{table}[H]
\centering
{\fontsize{9pt}{9pt}\selectfont
\setlength{\tabcolsep}{0.5pt}

\begin{tabular}{clcccc}
\toprule

\multirow{2}{*}{\textbf{Model}} &
\multirow{2}{*}{\textbf{Scheme}} &
\multicolumn{2}{c}{$\widehat{\textbf{ECE}}$} &
\multicolumn{2}{c}{\textbf{Brier Score}} \\

\cmidrule(lr){3-4}
\cmidrule(lr){5-6}

& &
\textbf{Answered} &
\textbf{Overall} &
\textbf{Answered} &
\textbf{Overall} \\

\midrule

\multirow{3}{*}{\begin{tabular}[c]{@{}c@{}}
Meta-Llama-3-\\
8B-Instruct
\end{tabular}}
& A      & 0.666 & 0.646 & \mbox{0.663 {\scriptsize (0.008)}} & \mbox{0.637 {\scriptsize (0.008)}} \\
& B      & 0.630 & 0.563 & \mbox{0.625 {\scriptsize (0.009)}} & \mbox{0.553 {\scriptsize (0.008)}} \\
& I-CALM & 0.497 & 0.446 & \mbox{0.496 {\scriptsize (0.012)}} & \mbox{0.423 {\scriptsize (0.009)}} \\

\midrule

\multirow{3}{*}{\begin{tabular}[c]{@{}c@{}}
Gemini-3.1-\\
Flash-Lite
\end{tabular}}
& A      & 0.370 & 0.367 & \mbox{0.367 {\scriptsize (0.009)}} & \mbox{0.362 {\scriptsize (0.008)}} \\
& B      & 0.374 & 0.359 & \mbox{0.371 {\scriptsize (0.009)}} & \mbox{0.354 {\scriptsize (0.008)}} \\
& I-CALM & 0.306 & 0.331 & \mbox{0.308 {\scriptsize (0.009)}} & \mbox{0.326 {\scriptsize (0.008)}} \\

\midrule

\multirow{3}{*}{\begin{tabular}[c]{@{}c@{}}
Qwen3-4B-\\
Instruct-2507
\end{tabular}}
& A      & 0.591 & 0.468 & \mbox{0.623 {\scriptsize (0.010)}} & \mbox{0.448 {\scriptsize (0.008)}} \\
& B      & 0.596 & 0.471 & \mbox{0.605 {\scriptsize (0.011)}} & \mbox{0.421 {\scriptsize (0.008)}} \\
& I-CALM & 0.599 & 0.444 & \mbox{0.592 {\scriptsize (0.012)}} & \mbox{0.381 {\scriptsize (0.008)}} \\

\bottomrule
\end{tabular}
}

\caption{PopQA results: calibration metrics for Meta-Llama-3-8B-Instruct, Gemini-3.1-Flash-Lite, and Qwen3-4B-Instruct-2507 under Scheme A $(+1,-1)$, Scheme B $(+1,-1,+0.4)$, and I-CALM $(+1,-1,+0.4)$. Parentheses report 95\% CI half-widths.}
\label{tab:combined_calibration_popqa_llama}
\end{table}

\begin{figure}[H]
    \centering
    \begin{subfigure}{\columnwidth}
        \centering
        \includegraphics[width=0.7\columnwidth]{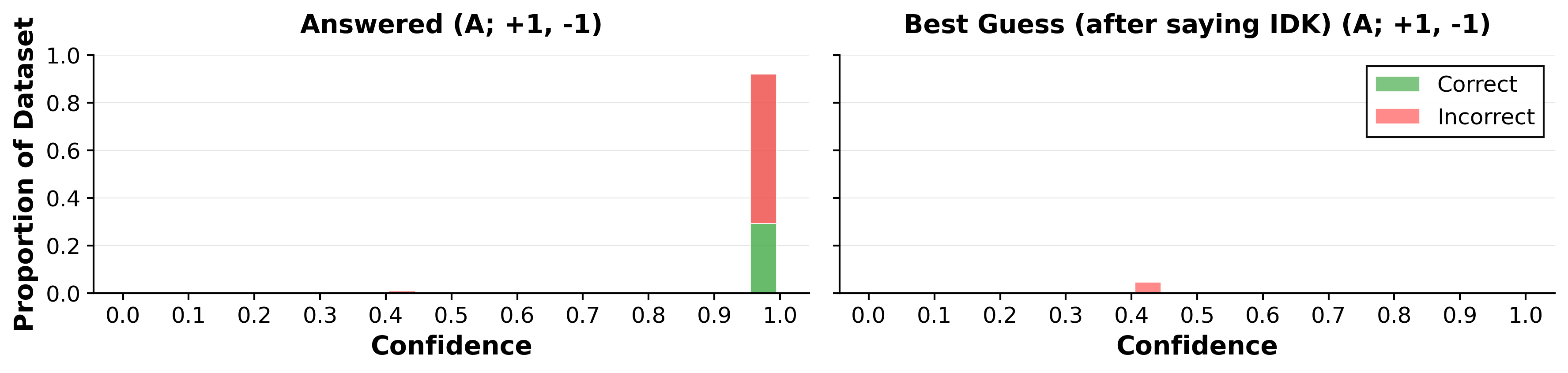}
        \includegraphics[width=0.7\columnwidth]{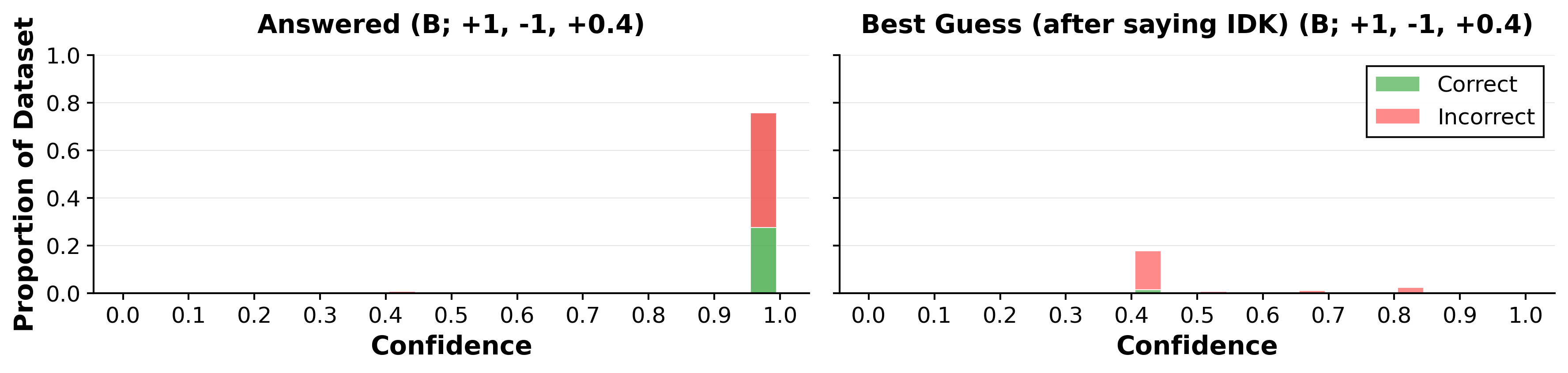}
        \includegraphics[width=0.7\columnwidth]{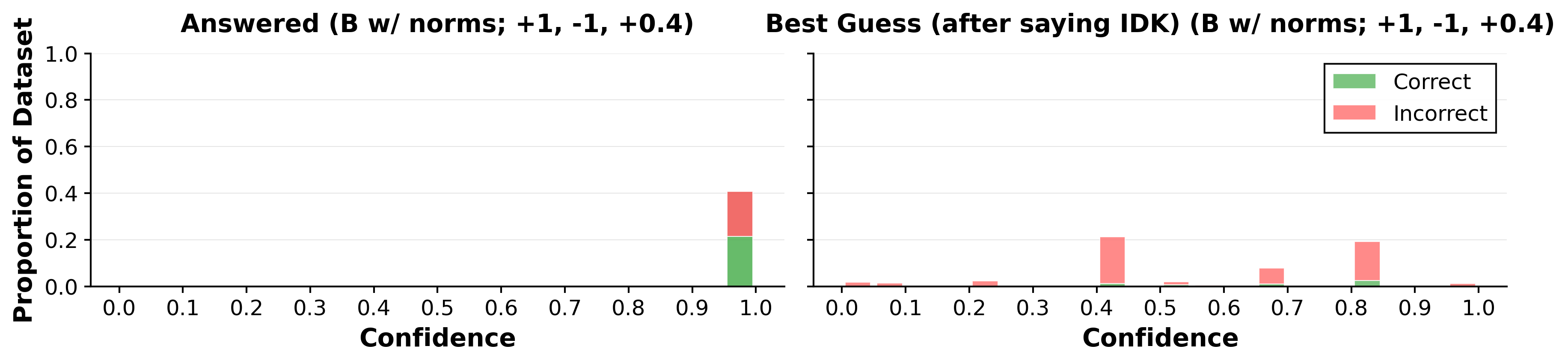}
        \caption{Meta-Llama-3-8B-Instruct}
    \end{subfigure}
\end{figure}

\begin{figure}[H]
    \ContinuedFloat
    \centering
    \begin{subfigure}{\columnwidth}
        \centering
        \includegraphics[width=0.7\columnwidth]{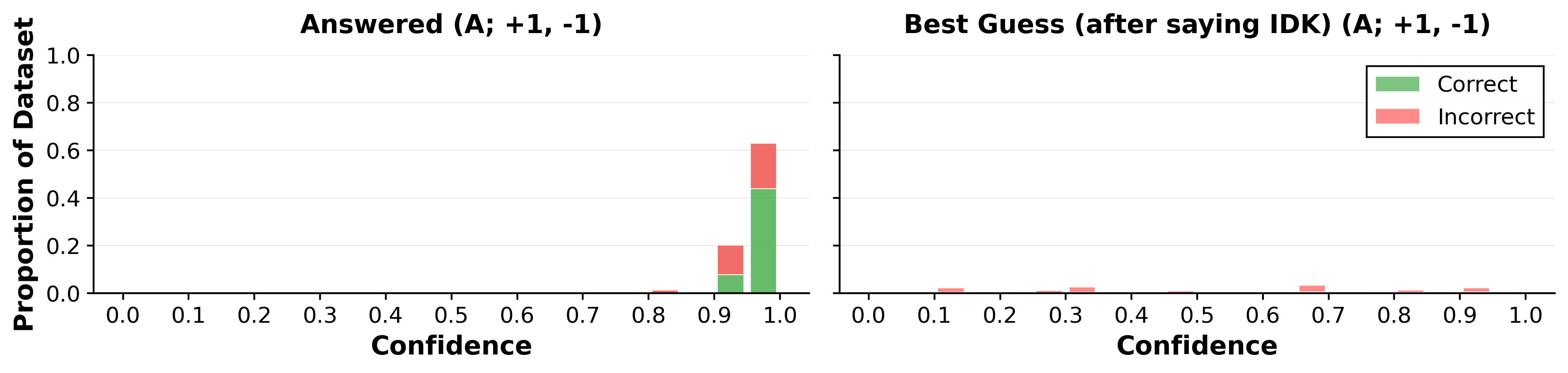}
        \includegraphics[width=0.7\columnwidth]{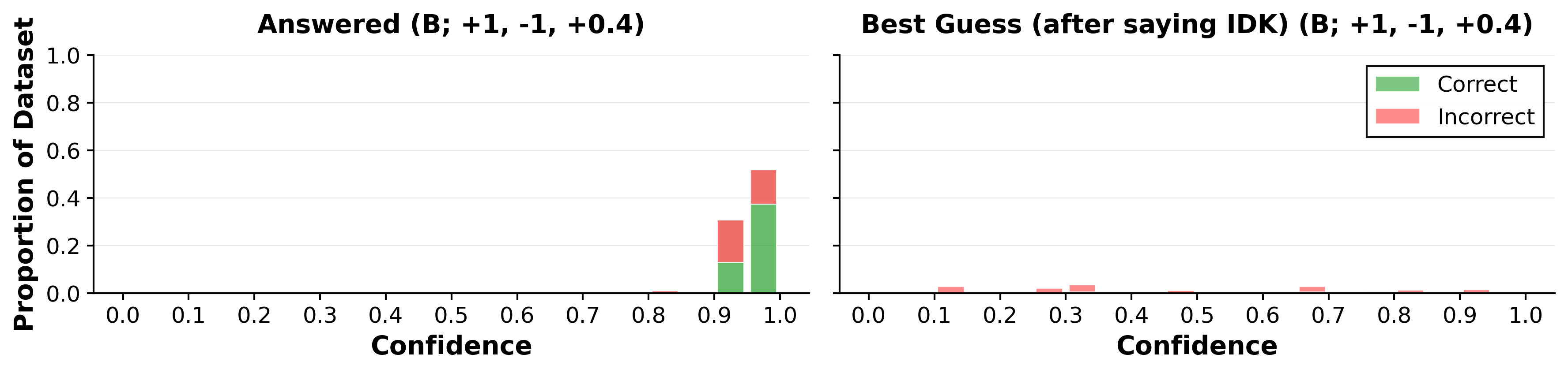}
        \includegraphics[width=0.7\columnwidth]{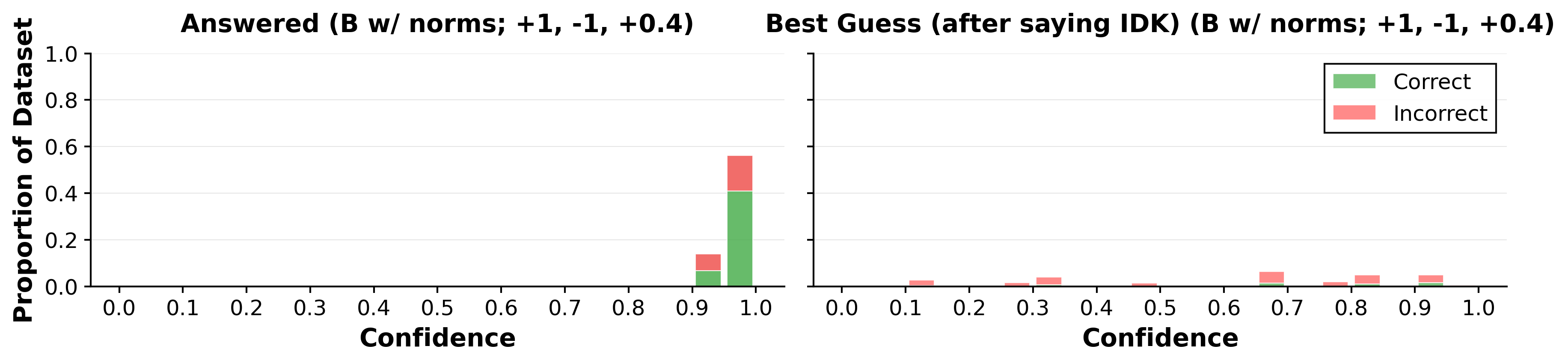}
        \caption{Gemini-3.1-Flash-Lite}
    \end{subfigure}
\end{figure}

\begin{figure}[H]
    \ContinuedFloat
    \centering
    \begin{subfigure}{\columnwidth}
        \centering
        \includegraphics[width=0.7\columnwidth]{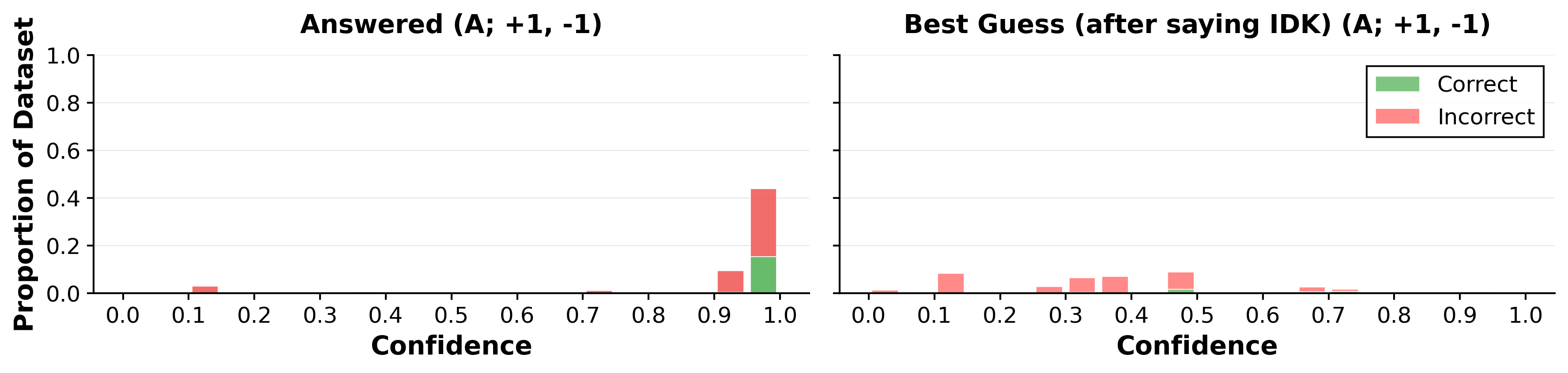}
        \includegraphics[width=0.7\columnwidth]{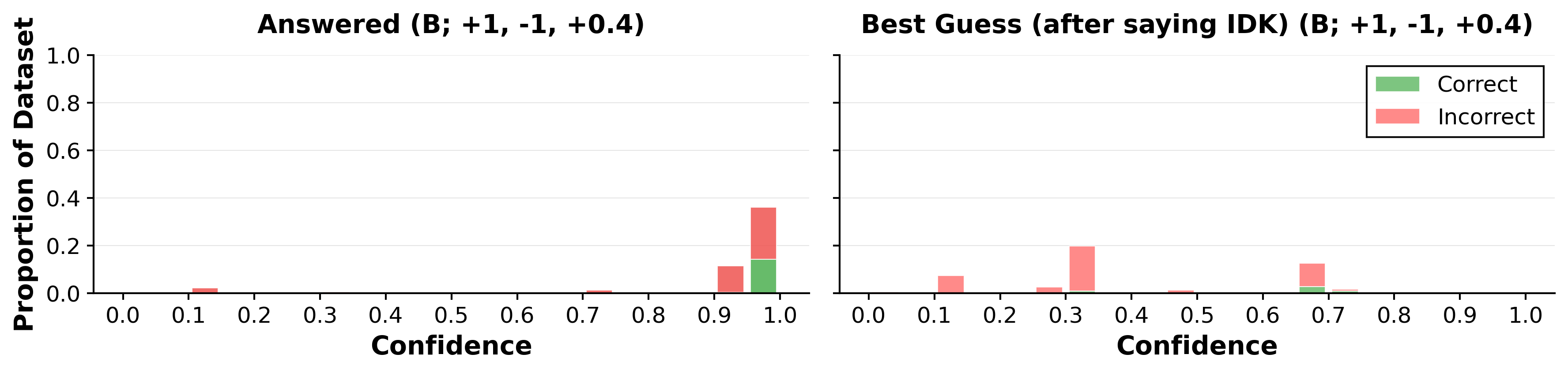}
        \includegraphics[width=0.7\columnwidth]{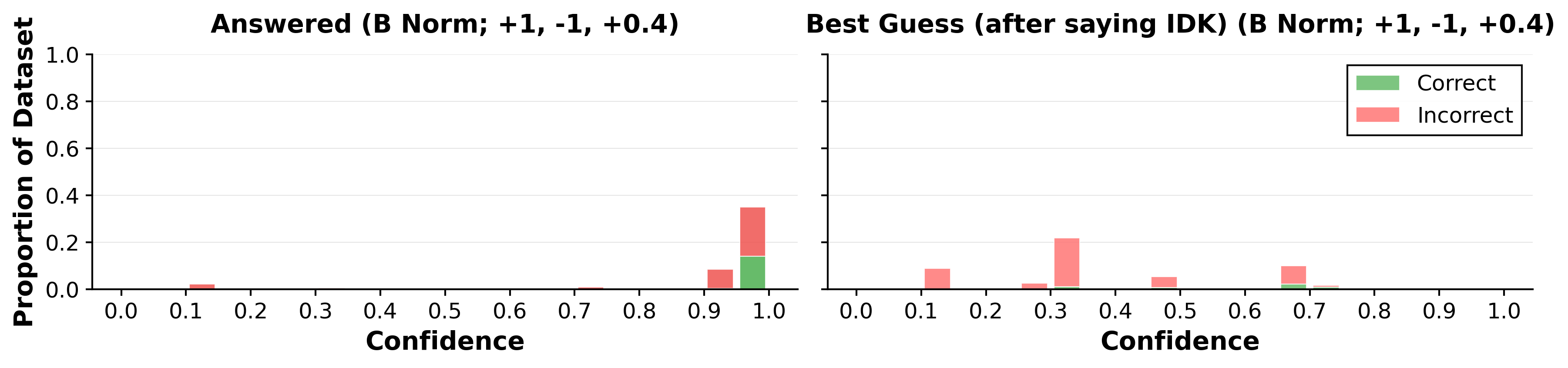}
        \caption{Qwen3-4B-Instruct-2507}
    \end{subfigure}
    
    \caption{PopQA confidence distributions under Scheme A (+1,-1), Scheme B (+1,-1,+0.4), and I-CALM (+1,-1,+0.4) (Scheme B with norms is equivalent to I-CALM.). Within each panel, the left histogram shows first-round answered cases and the right histogram shows second-round best guesses after an initial ``I don't know.''} 
    \label{fig:popqa_conf_comparison_llama_gemini_qwen}
\end{figure}

\paragraph{TriviaQA}

Because the TriviaQA test labels are unavailable, we evaluate on its 17,944-question validation set. Transfer is more dependent on baseline headroom here: GPT-5 mini, GPT-4o mini, and Gemini-3.1-Flash-Lite already have low Pure Eval $\mathrm{FAR}_{\mathrm{answered}}$ values of $0.145$, $0.148$, and $0.078$, leaving limited room for further risk reduction. Nevertheless, $J_{\mathrm{abs}}$ increases monotonically from Scheme~A to Scheme~B to I-CALM for all five models, reaching $0.167$ for GPT-5 mini, $0.108$ for GPT-4o mini, $0.104$ for Llama, $0.067$ for Gemini, and $0.181$ for Qwen. Under I-CALM, coverage remains between $0.835$ and $0.981$, while CAR remains between $0.928$ and $0.988$, indicating that the additional error capture is obtained with limited loss of correct answers. The largest surfaced-risk reductions occur for models with greater baseline error: I-CALM lowers $\mathrm{FAR}_{\mathrm{answered}}$ from $0.434$ to $0.280$ for Llama and from $0.593$ to $0.463$ for Qwen, whereas GPT-5 mini improves modestly and GPT-4o mini and Gemini remain essentially unchanged. Calibration gains are clearest for Qwen and smaller for Llama and Gemini, while GPT-5 mini and GPT-4o mini show mixed changes from already strong or stable baselines. Consistent with these results, Figure~\ref{fig:trivia_conf_comparison_all} shows little low-confidence mass to reroute for the strongest baseline models, but a clearer movement of risky candidates into the best-guess channel for Qwen and, to a lesser extent, Llama. TriviaQA therefore supports the method while showing that its largest risk reductions arise when the baseline leaves sufficient error headroom.

\begin{table}[H]
\centering
{\fontsize{9pt}{9pt}\selectfont
\setlength{\tabcolsep}{1.5pt}

\begin{tabular}{clccccc}
\toprule

\textbf{Model} &
\textbf{Scheme} &
\textbf{C} &
\textbf{FAR} &
\textbf{ECR} &
\textbf{CAR} &
\textbf{$J_{\mathrm{abs}}$} \\

\midrule

\multirow{4}{*}{GPT-5 mini}
& Pure Eval & 0.998 & \mbox{0.145 {\scriptsize (0.005)}} & -- & -- & -- \\
& A         & 0.982 & \mbox{0.140 {\scriptsize (0.005)}} & 0.087 & 0.994 & 0.081 \\
& B         & 0.963 & \mbox{0.139 {\scriptsize (0.005)}} & 0.157 & 0.986 & 0.143 \\
& I-CALM    & 0.950 & \mbox{0.136 {\scriptsize (0.005)}} & 0.191 & 0.976 & 0.167 \\

\midrule

\multirow{4}{*}{GPT-4o mini}
& Pure Eval & 1.000 & \mbox{0.148 {\scriptsize (0.005)}} & -- & -- & -- \\
& A         & 0.983 & \mbox{0.156 {\scriptsize (0.005)}} & 0.066 & 0.994 & 0.060 \\
& B         & 0.977 & \mbox{0.155 {\scriptsize (0.005)}} & 0.090 & 0.991 & 0.081 \\
& I-CALM    & 0.970 & \mbox{0.156 {\scriptsize (0.006)}} & 0.120 & 0.988 & 0.108 \\

\midrule

\multirow{4}{*}{\begin{tabular}[c]{@{}c@{}}
Meta-Llama-3-\\
8B-Instruct
\end{tabular}}
& Pure Eval & 1.000 & \mbox{0.434 {\scriptsize (0.007)}} & -- & -- & -- \\
& A         & 0.995 & \mbox{0.287 {\scriptsize (0.007)}} & 0.013 & 0.999 & 0.012 \\
& B         & 0.977 & \mbox{0.284 {\scriptsize (0.007)}} & 0.060 & 0.992 & 0.052 \\
& I-CALM    & 0.952 & \mbox{0.280 {\scriptsize (0.007)}} & 0.121 & 0.983 & 0.104 \\

\midrule

\multirow{4}{*}{\begin{tabular}[c]{@{}c@{}}
Gemini-3.1-\\
Flash-Lite
\end{tabular}}
& Pure Eval & 1.000 & \mbox{0.078 {\scriptsize (0.004)}} & -- & -- & -- \\
& A         & 0.988 & \mbox{0.082 {\scriptsize (0.004)}} & 0.050 & 0.992 & 0.042 \\
& B         & 0.989 & \mbox{0.083 {\scriptsize (0.004)}} & 0.053 & 0.993 & 0.046 \\
& I-CALM    & 0.981 & \mbox{0.080 {\scriptsize (0.004)}} & 0.080 & 0.987 & 0.067 \\

\midrule

\multirow{4}{*}{\begin{tabular}[c]{@{}c@{}}
Qwen3-4B-\\
Instruct-2507
\end{tabular}}
& Pure Eval & 0.999 & \mbox{0.593 {\scriptsize (0.007)}} & -- & -- & -- \\
& A         & 0.893 & \mbox{0.488 {\scriptsize (0.008)}} & 0.158 & 0.948 & 0.106 \\
& B         & 0.877 & \mbox{0.481 {\scriptsize (0.008)}} & 0.186 & 0.946 & 0.132 \\
& I-CALM    & 0.835 & \mbox{0.463 {\scriptsize (0.008)}} & 0.253 & 0.928 & 0.181 \\

\bottomrule
\end{tabular}
}

\caption{TriviaQA results ($N_{\mathrm{total}}=14{,}267$): selective answering performance metrics for GPT-5 mini, GPT-4o mini, Meta-Llama-3-8B-Instruct, Gemini-3.1-Flash-Lite, Qwen3-4B-Instruct-2507 under Pure Eval, Scheme A, Scheme B, and I-CALM. C stands for coverage; FAR stands for $\mathrm{FAR}_{\mathrm{answered}}$. Parentheses report 95\% CI half-widths.}
\label{tab:combined_performance_triviaqa}
\end{table}

\begin{table}[H]
\centering
{\fontsize{9pt}{9pt}\selectfont
\setlength{\tabcolsep}{0.5pt}

\begin{tabular}{clcccc}
\toprule

\multirow{2}{*}{\textbf{Model}} &
\multirow{2}{*}{\textbf{Scheme}} &
\multicolumn{2}{c}{$\widehat{\textbf{ECE}}$} &
\multicolumn{2}{c}{\textbf{Brier Score}} \\

\cmidrule(lr){3-4}
\cmidrule(lr){5-6}

& &
\textbf{Answered} &
\textbf{Overall} &
\textbf{Answered} &
\textbf{Overall} \\

\midrule

\multirow{3}{*}{GPT-5 mini}
& A      & 0.026 & 0.026 & \mbox{0.086 {\scriptsize (0.004)}} & \mbox{0.088 {\scriptsize (0.004)}} \\
& B      & 0.037 & 0.034 & \mbox{0.087 {\scriptsize (0.004)}} & \mbox{0.090 {\scriptsize (0.004)}} \\
& I-CALM & 0.041 & 0.040 & \mbox{0.087 {\scriptsize (0.004)}} & \mbox{0.092 {\scriptsize (0.004)}} \\

\midrule

\multirow{3}{*}{GPT-4o mini}
& A      & 0.118 & 0.118 & \mbox{0.132 {\scriptsize (0.005)}} & \mbox{0.131 {\scriptsize (0.005)}} \\
& B      & 0.112 & 0.113 & \mbox{0.128 {\scriptsize (0.005)}} & \mbox{0.130 {\scriptsize (0.005)}} \\
& I-CALM & 0.121 & 0.122 & \mbox{0.132 {\scriptsize (0.005)}} & \mbox{0.132 {\scriptsize (0.005)}} \\

\midrule

\multirow{3}{*}{\begin{tabular}[c]{@{}c@{}}
Meta-Llama-3-\\
8B-Instruct
\end{tabular}}
& A      & 0.270 & 0.270 & \mbox{0.276 {\scriptsize (0.007)}} & \mbox{0.276 {\scriptsize (0.007)}} \\
& B      & 0.264 & 0.264 & \mbox{0.271 {\scriptsize (0.007)}} & \mbox{0.271 {\scriptsize (0.007)}} \\
& I-CALM & 0.263 & 0.266 & \mbox{0.269 {\scriptsize (0.007)}} & \mbox{0.271 {\scriptsize (0.007)}} \\

\midrule

\multirow{3}{*}{\begin{tabular}[c]{@{}c@{}}
Gemini-3.1-\\
Flash-Lite
\end{tabular}}
& A      & 0.079 & 0.079 & \mbox{0.079 {\scriptsize (0.004)}} & \mbox{0.080 {\scriptsize (0.004)}} \\
& B      & 0.078 & 0.078 & \mbox{0.079 {\scriptsize (0.004)}} & \mbox{0.080 {\scriptsize (0.004)}} \\
& I-CALM & 0.076 & 0.077 & \mbox{0.077 {\scriptsize (0.004)}} & \mbox{0.079 {\scriptsize (0.004)}} \\

\midrule

\multirow{3}{*}{\begin{tabular}[c]{@{}c@{}}
Qwen3-4B-\\
Instruct-2507
\end{tabular}}
& A      & 0.395 & 0.374 & \mbox{0.397 {\scriptsize (0.008)}} & \mbox{0.373 {\scriptsize (0.007)}} \\
& B      & 0.394 & 0.374 & \mbox{0.393 {\scriptsize (0.008)}} & \mbox{0.368 {\scriptsize (0.007)}} \\
& I-CALM & 0.374 & 0.348 & \mbox{0.376 {\scriptsize (0.008)}} & \mbox{0.344 {\scriptsize (0.007)}} \\

\bottomrule
\end{tabular}
}

\caption{TriviaQA results: calibration metrics for GPT-5 mini, GPT-4o mini, Meta-Llama-3-8B-Instruct, Gemini-3.1-Flash-Lite, and Qwen3-4B-Instruct-2507 under Scheme A $(+1,-1)$, Scheme B $(+1,-1,+0.4)$, and I-CALM $(+1,-1,+0.4)$. Parentheses report 95\% CI half-widths.}
\label{tab:combined_calibration_triviaqa}
\end{table}

\begin{figure}[H]
    \centering
    \begin{subfigure}{\columnwidth}
        \centering
        \includegraphics[width=0.7\columnwidth]{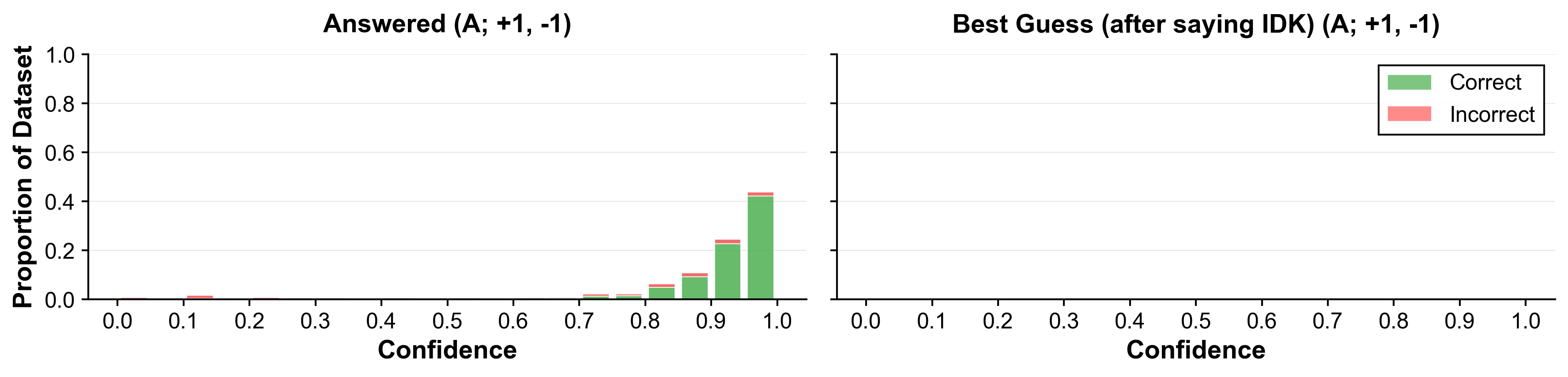}
        \includegraphics[width=0.7\columnwidth]{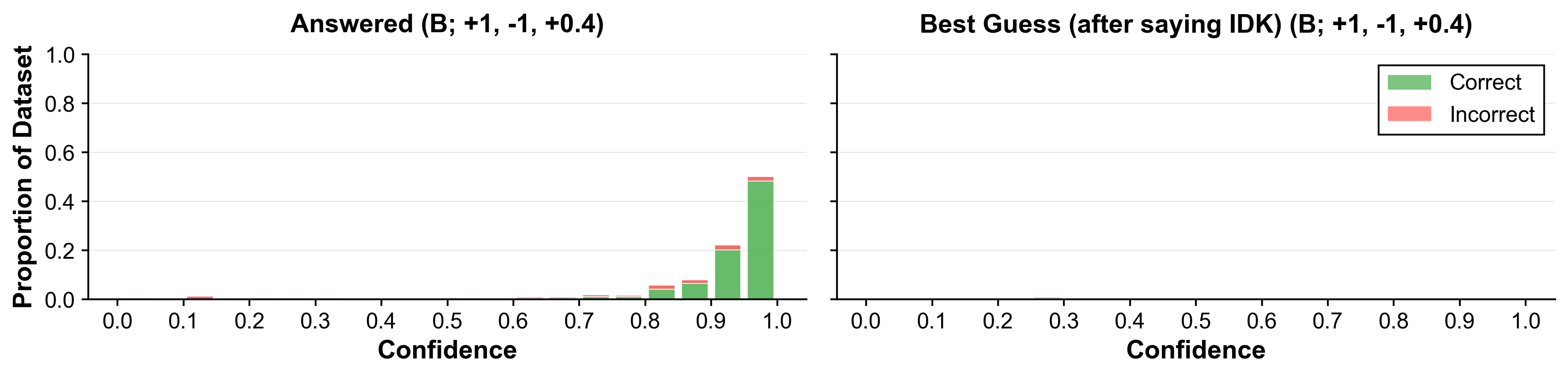}
        \includegraphics[width=0.7\columnwidth]{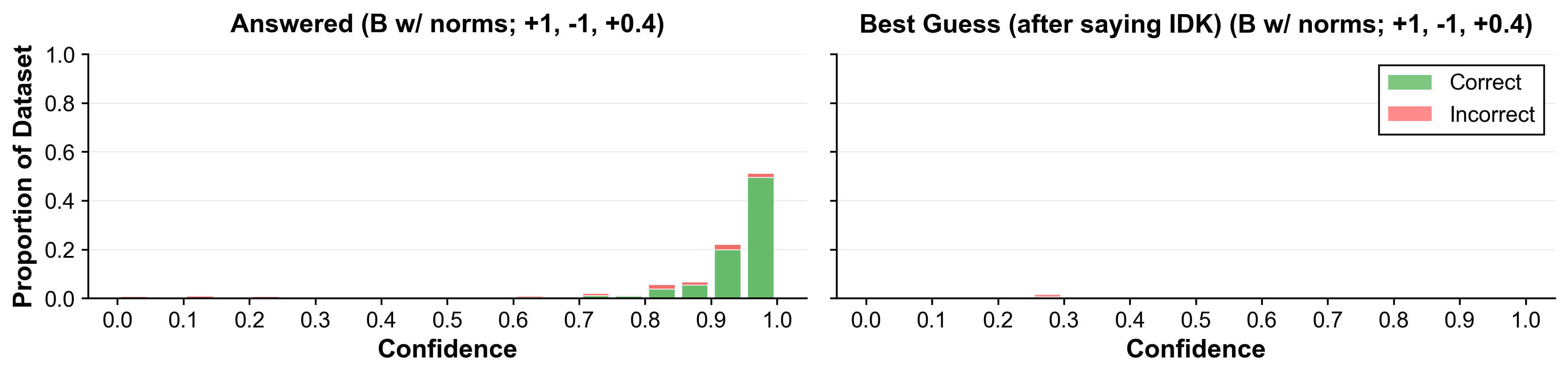}
        \caption{GPT-5 mini}
    \end{subfigure}
\end{figure}

\begin{figure}[H]
    \ContinuedFloat
    \centering
    
    \begin{subfigure}{\columnwidth}
        \centering
        \includegraphics[width=0.7\columnwidth]{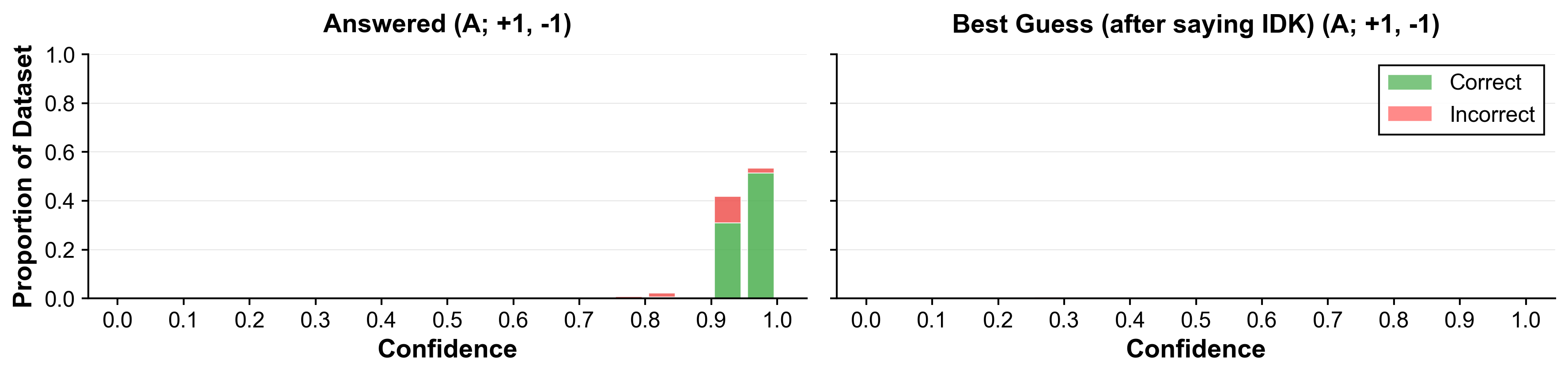}
        \includegraphics[width=0.7\columnwidth]{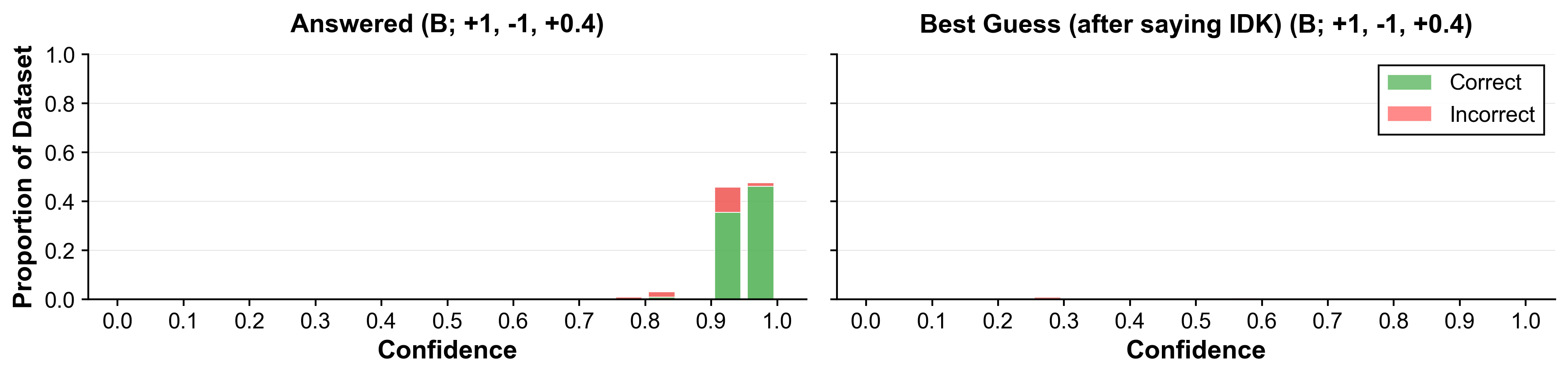}
        \includegraphics[width=0.7\columnwidth]{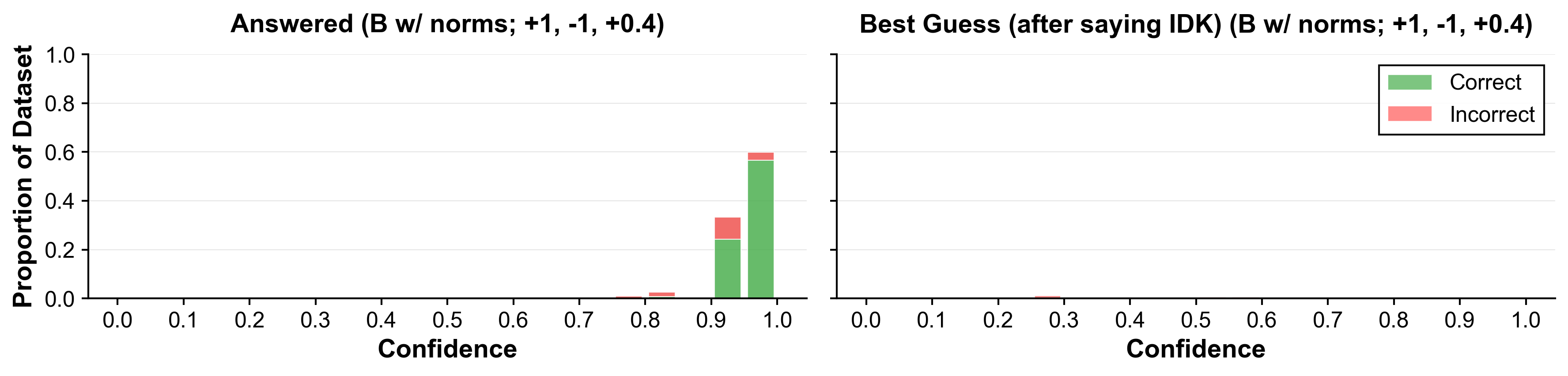}
        \caption{GPT-4o mini}
    \end{subfigure}   
\end{figure}

\begin{figure}[H]
    \ContinuedFloat
    \centering

    \begin{subfigure}{\columnwidth}
        \centering
        \includegraphics[width=0.7\columnwidth]{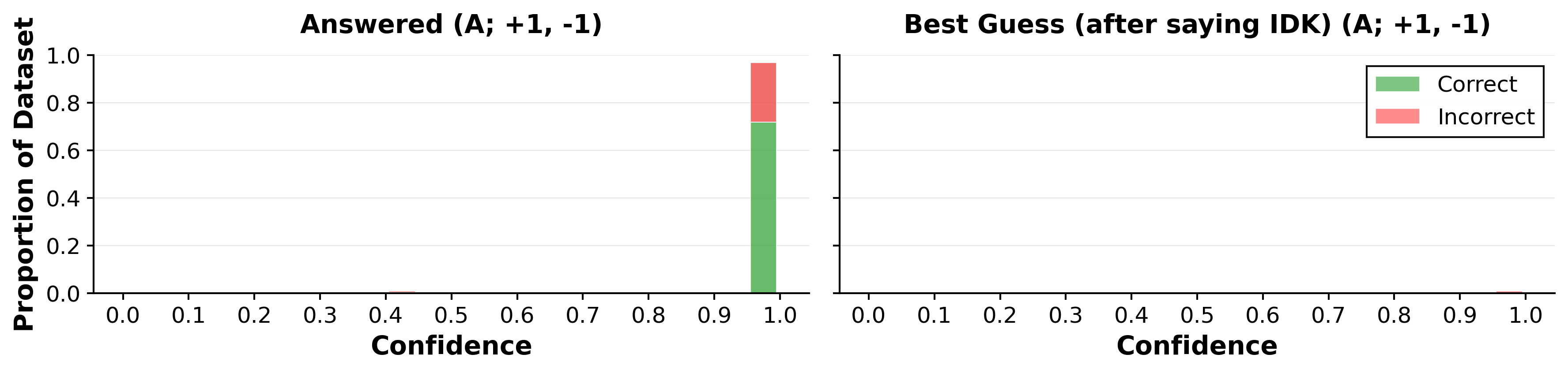}
        \includegraphics[width=0.7\columnwidth]{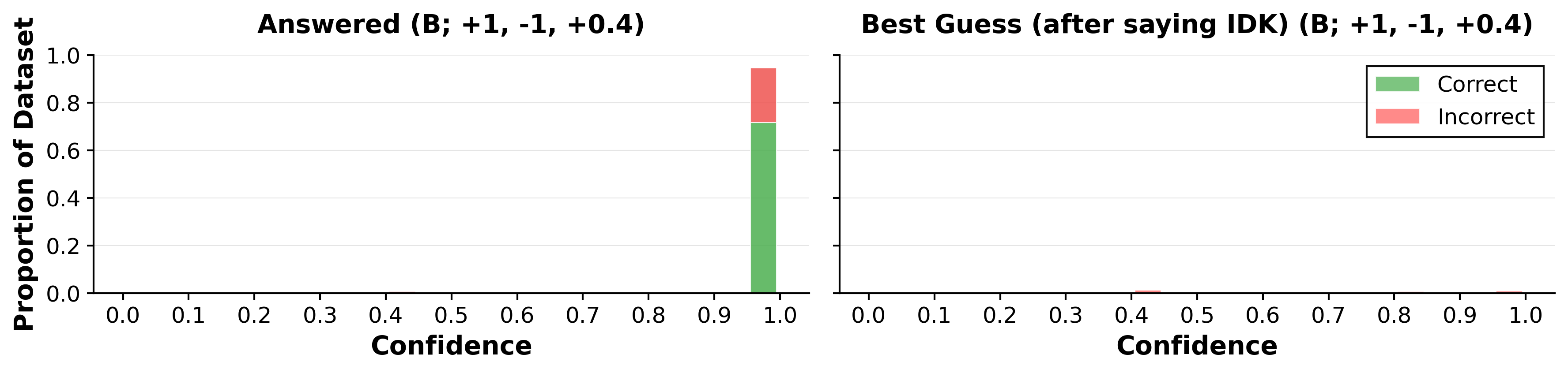}
        \includegraphics[width=0.7\columnwidth]{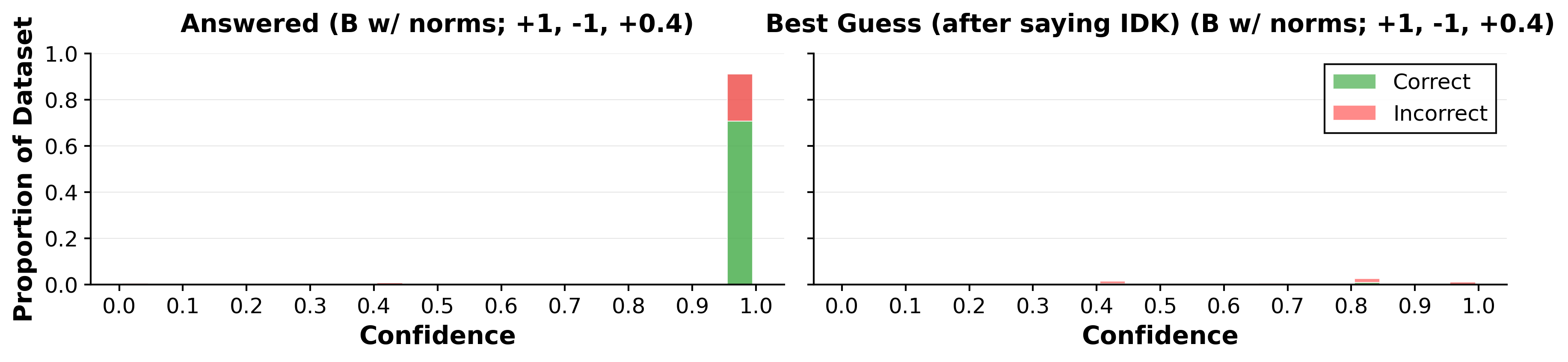}
        \caption{Meta-Llama-3-8B-Instruct}
    \end{subfigure}
\end{figure}

\begin{figure}[H]
    \ContinuedFloat
    \centering
    \begin{subfigure}{\columnwidth}
        \centering
        \includegraphics[width=0.7\columnwidth]{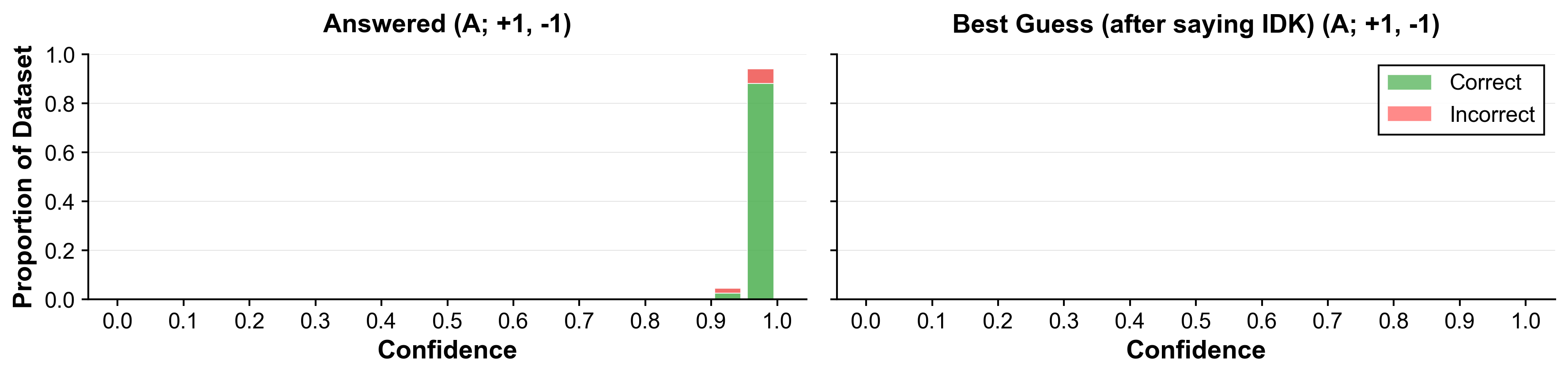}
        \includegraphics[width=0.7\columnwidth]{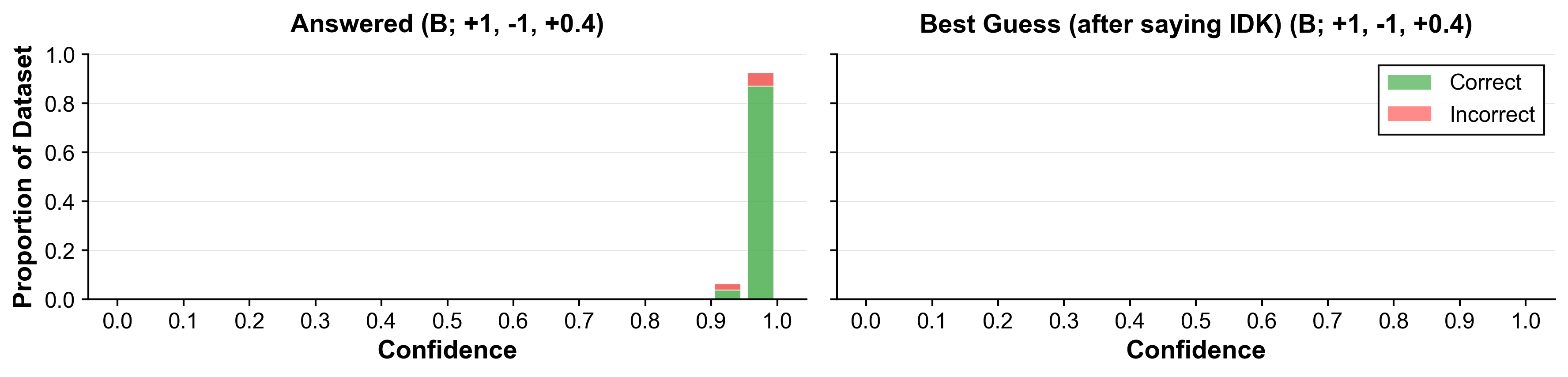}
        \includegraphics[width=0.7\columnwidth]{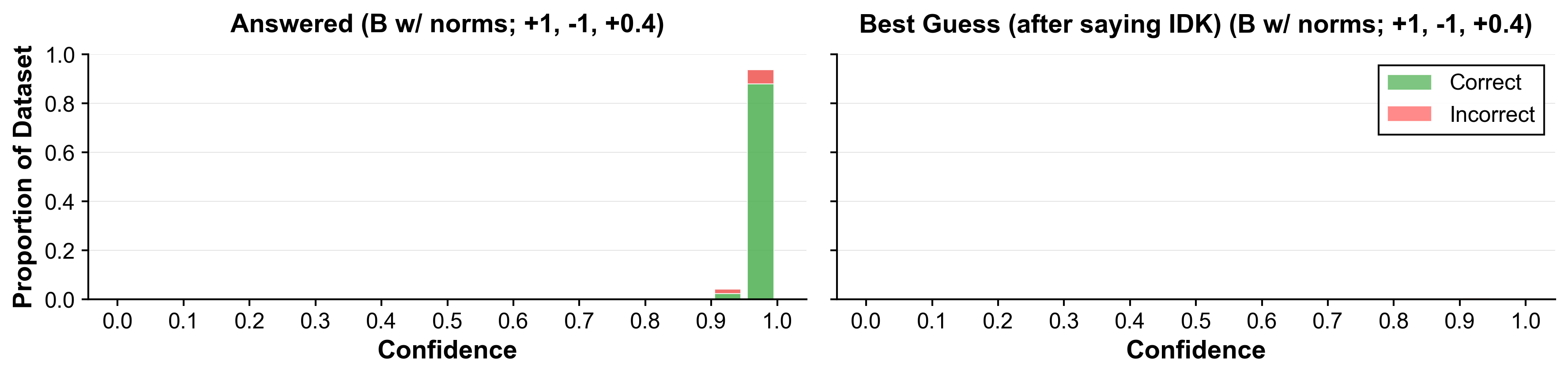}
        \caption{Gemini-3.1-Flash-Lite}
    \end{subfigure}
\end{figure}

\begin{figure}[H]
    \ContinuedFloat
    \centering

    \begin{subfigure}{\columnwidth}
        \centering
        \includegraphics[width=0.7\columnwidth]{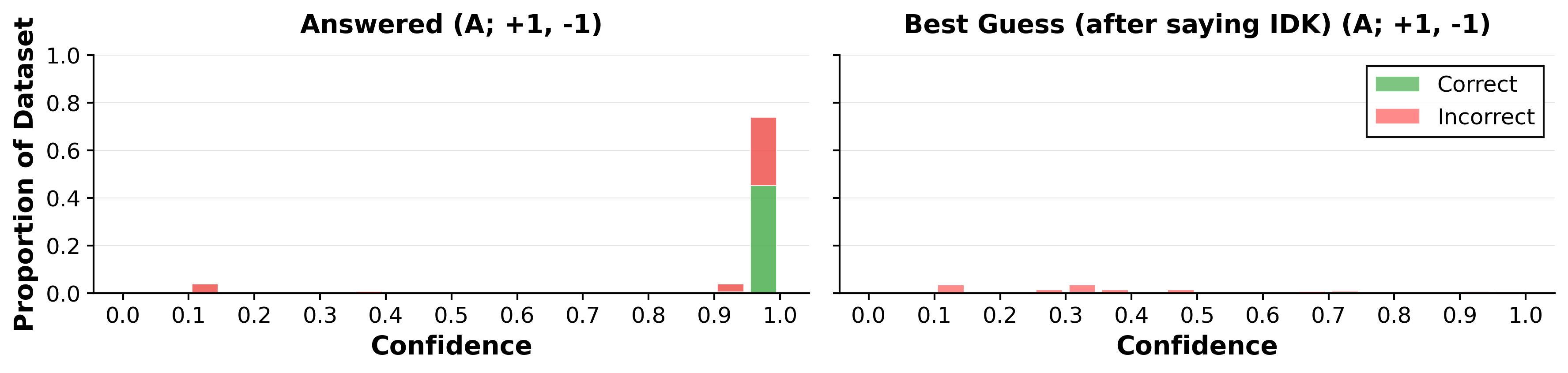}
        \includegraphics[width=0.7\columnwidth]{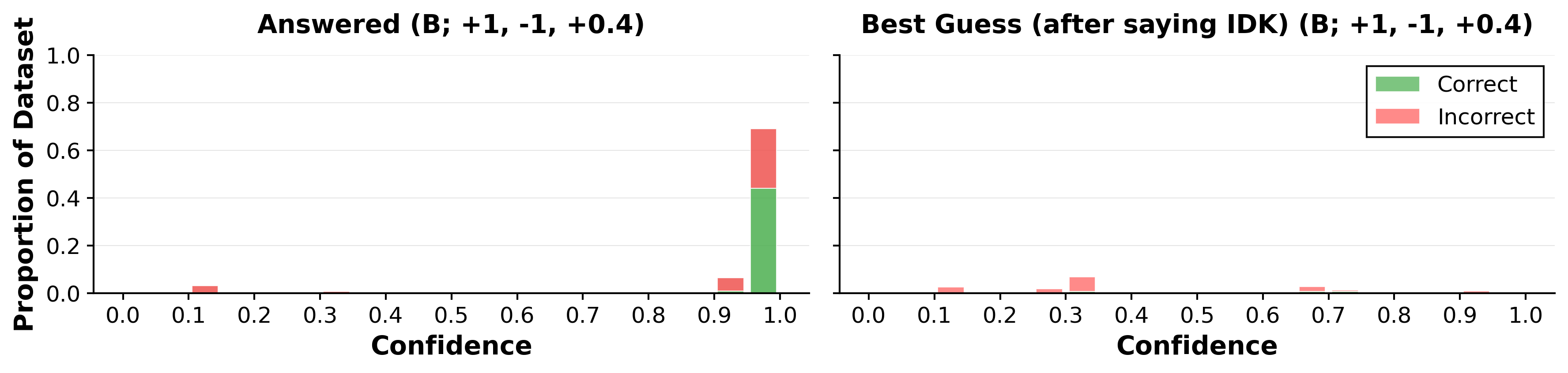}
        \includegraphics[width=0.7\columnwidth]{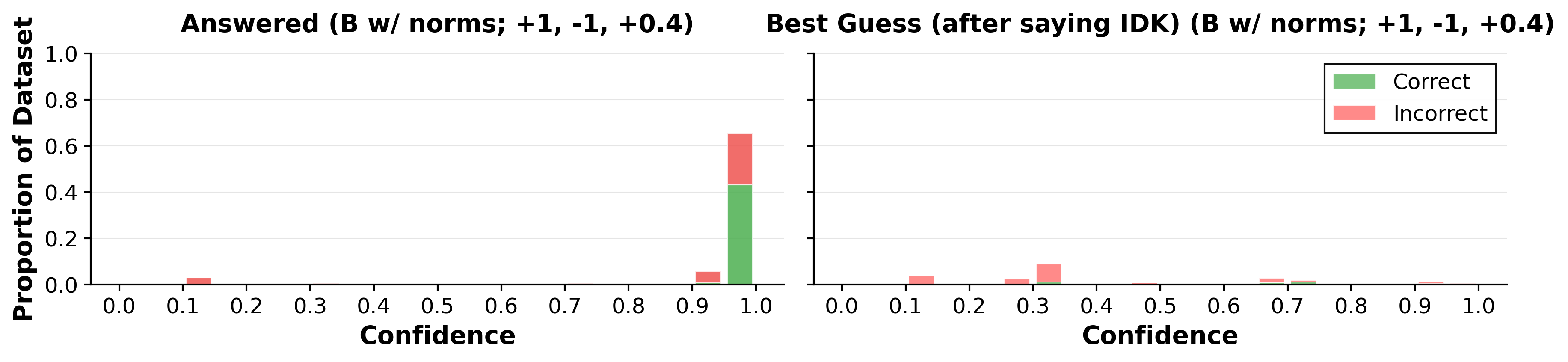}
        \caption{Qwen3-4B-Instruct-2507}
    \end{subfigure}
    \caption{TriviaQA confidence distributions under Scheme A (+1,-1), Scheme B (+1,-1,+0.4), and I-CALM (+1,-1,+0.4) (Scheme B with norms is equivalent to I-CALM.). Within each panel, the left histogram shows first-round answered cases and the right histogram shows second-round best guesses after an initial ``I don't know.''}
    \label{fig:trivia_conf_comparison_all}
\end{figure}

\paragraph{SimpleQA-Verified}

SimpleQA-Verified is the hardest transfer setting, consistent with its construction from questions selected to expose factual failures and its refinement into a challenging 1,000-question benchmark \citep{wei2024simpleqa,haas2025simpleqa}. Pure Eval $\mathrm{FAR}_{\mathrm{answered}}$ ranges from $0.726$ to $0.959$, reflecting severe underlying knowledge limitations. Even in this regime, I-CALM numerically lowers surfaced risk relative to Pure Eval for all five models: from $0.873$ to $0.871$ for GPT-5 mini, $0.917$ to $0.895$ for GPT-4o mini, $0.945$ to $0.908$ for Llama, $0.726$ to $0.699$ for Gemini, and $0.959$ to $0.934$ for Qwen. It also attains the highest ECR for every model. More importantly, $J_{\mathrm{abs}}$ increases monotonically and peaks under I-CALM for four of the five models: $0.077 \rightarrow 0.104 \rightarrow 0.108$ for GPT-5 mini, $0.060 \rightarrow 0.218 \rightarrow 0.273$ for Llama, $0.056 \rightarrow 0.078 \rightarrow 0.092$ for Gemini, and $0.034 \rightarrow 0.045 \rightarrow 0.057$ for Qwen. Llama shows the strongest result, combining the largest reduction in surfaced risk with a substantial improvement in abstention quality. GPT-4o mini is the sole exception: adding normative guidance raises ECR from $0.348$ to $0.450$, but the larger CAR decline from $0.833$ to $0.670$ reduces $J_{\mathrm{abs}}$ from $0.181$ to $0.120$, making Scheme~B the better operating point for this model--dataset pair. Overall calibration still improves from Scheme~A to I-CALM for every model; I-CALM attains the best overall $\widehat{\mathrm{ECE}}$ and Brier score for GPT-4o mini, Llama, and Qwen, whereas Scheme~B is slightly better on one or both overall metrics for GPT-5 mini and Gemini. Together with the routing patterns in Figure~\ref{fig:simpleqa-verified_conf_comparison_all}, these results show that I-CALM can improve error interception even when the model's factual competence is severely constrained, while also identifying a limit: normative guidance can become overly conservative when additional error capture requires withholding too many correct candidates.

\begin{table}[H]
\centering
{\fontsize{9pt}{9pt}\selectfont
\setlength{\tabcolsep}{1.5pt}

\begin{tabular}{clccccc}
\toprule

\textbf{Model} &
\textbf{Scheme} &
\textbf{C} &
\textbf{FAR} &
\textbf{ECR} &
\textbf{CAR} &
\textbf{$J_{\mathrm{abs}}$} \\

\midrule

\multirow{4}{*}{GPT-5 mini}
& Pure Eval & 0.999 & \mbox{0.873 {\scriptsize (0.020)}} & -- & -- & -- \\
& A         & 0.882 & \mbox{0.889 {\scriptsize (0.020)}} & 0.126 & 0.951 & 0.077 \\
& B         & 0.805 & \mbox{0.871 {\scriptsize (0.022)}} & 0.207 & 0.897 & 0.104 \\
& I-CALM    & 0.736 & \mbox{0.871 {\scriptsize (0.023)}} & 0.276 & 0.832 & 0.108 \\

\midrule

\multirow{4}{*}{GPT-4o mini}
& Pure Eval & 1.000 & \mbox{0.917 {\scriptsize (0.017)}} & -- & -- & -- \\
& A         & 0.752 & \mbox{0.902 {\scriptsize (0.021)}} & 0.260 & 0.881 & 0.141 \\
& B         & 0.668 & \mbox{0.888 {\scriptsize (0.024)}} & 0.348 & 0.833 & 0.181 \\
& I-CALM    & 0.561 & \mbox{0.895 {\scriptsize (0.025)}} & 0.450 & 0.670 & 0.120 \\

\midrule

\multirow{4}{*}{\begin{tabular}[c]{@{}c@{}}
Meta-Llama-3-\\
8B-Instruct
\end{tabular}}
& Pure Eval & 0.996 & \mbox{0.945 {\scriptsize (0.014)}} & -- & -- & -- \\
& A         & 0.898 & \mbox{0.931 {\scriptsize (0.016)}} & 0.106 & 0.954 & 0.06 \\
& B         & 0.704 & \mbox{0.916 {\scriptsize (0.023)}} & 0.310 & 0.908 & 0.218 \\
& I-CALM    & 0.476 & \mbox{0.908 {\scriptsize (0.028)}} & 0.540 & 0.733 & 0.273 \\

\midrule

\multirow{4}{*}{\begin{tabular}[c]{@{}c@{}}
Gemini-3.1-\\
Flash-Lite
\end{tabular}}
& Pure Eval & 1.000 & \mbox{0.726 {\scriptsize (0.027)}} & -- & -- & -- \\
& A         & 0.895 & \mbox{0.708 {\scriptsize (0.029)}} & 0.121 & 0.935 & 0.056 \\
& B         & 0.867 & \mbox{0.686 {\scriptsize (0.031)}} & 0.156 & 0.922 & 0.078 \\
& I-CALM    & 0.826 & \mbox{0.699 {\scriptsize (0.032)}} & 0.200 & 0.892 & 0.092 \\

\midrule

\multirow{4}{*}{\begin{tabular}[c]{@{}c@{}}
Qwen3-4B-\\
Instruct-2507
\end{tabular}}
& Pure Eval & 0.997 & \mbox{0.959 {\scriptsize (0.012)}} & -- & -- & -- \\
& A         & 0.662 & \mbox{0.935 {\scriptsize (0.018)}} & 0.340 & 0.694 & 0.034 \\
& B         & 0.558 & \mbox{0.930 {\scriptsize (0.020)}} & 0.445 & 0.600 & 0.045 \\
& I-CALM    & 0.438 & \mbox{0.934 {\scriptsize (0.023)}} & 0.565 & 0.492 & 0.057 \\

\bottomrule
\end{tabular}
}

\caption{SimpleQA Verified results ($N_{\mathrm{total}}=14{,}267$): selective answering performance metrics for GPT-5 mini, GPT-4o mini, Meta-Llama-3-8B-Instruct, Gemini-3.1-Flash-Lite, Qwen3-4B-Instruct-2507 under Pure Eval, Scheme A, Scheme B, and I-CALM. C stands for coverage; FAR stands for $\mathrm{FAR}_{\mathrm{answered}}$. Parentheses report 95\% CI half-widths.}
\label{tab:combined_performance_simpleqa}
\end{table}

\begin{table}[H]
\centering
{\fontsize{9pt}{9pt}\selectfont
\setlength{\tabcolsep}{0.5pt}

\begin{tabular}{clcccc}
\toprule

\multirow{2}{*}{\textbf{Model}} &
\multirow{2}{*}{\textbf{Scheme}} &
\multicolumn{2}{c}{$\widehat{\textbf{ECE}}$} &
\multicolumn{2}{c}{\textbf{Brier Score}} \\

\cmidrule(lr){3-4}
\cmidrule(lr){5-6}

& &
\textbf{Answered} &
\textbf{Overall} &
\textbf{Answered} &
\textbf{Overall} \\

\midrule

\multirow{3}{*}{GPT-5 mini}
& A      & 0.483 & 0.438 & \mbox{0.374 {\scriptsize (0.032)}} & \mbox{0.349 {\scriptsize (0.030)}} \\
& B      & 0.484 & 0.404 & \mbox{0.379 {\scriptsize (0.033)}} & \mbox{0.345 {\scriptsize (0.031)}} \\
& I-CALM & 0.520 & 0.414 & \mbox{0.416 {\scriptsize (0.036)}} & \mbox{0.345 {\scriptsize (0.030)}} \\

\midrule

\multirow{3}{*}{GPT-4o mini}
& A      & 0.792 & 0.664 & \mbox{0.723 {\scriptsize (0.032)}} & \mbox{0.575 {\scriptsize (0.031)}} \\
& B      & 0.777 & 0.620 & \mbox{0.707 {\scriptsize (0.034)}} & \mbox{0.521 {\scriptsize (0.031)}} \\
& I-CALM & 0.787 & 0.569 & \mbox{0.722 {\scriptsize (0.037)}} & \mbox{0.472 {\scriptsize (0.031)}} \\

\midrule

\multirow{3}{*}{\begin{tabular}[c]{@{}c@{}}
Meta-Llama-3-\\
8B-Instruct
\end{tabular}}
& A      & 0.898 & 0.835 & \mbox{0.886 {\scriptsize (0.023)}} & \mbox{0.829 {\scriptsize (0.023)}} \\
& B      & 0.892 & 0.715 & \mbox{0.880 {\scriptsize (0.024)}} & \mbox{0.714 {\scriptsize (0.029)}} \\
& I-CALM & 0.875 & 0.594 & \mbox{0.863 {\scriptsize (0.035)}} & \mbox{0.592 {\scriptsize (0.035)}} \\

\midrule

\multirow{3}{*}{\begin{tabular}[c]{@{}c@{}}
Gemini-3.1-\\
Flash-Lite
\end{tabular}}
& A      & 0.674 & 0.621 & \mbox{0.659 {\scriptsize (0.031)}} & \mbox{0.613 {\scriptsize (0.031)}} \\
& B      & 0.646 & 0.578 & \mbox{0.630 {\scriptsize (0.032)}} & \mbox{0.570 {\scriptsize (0.031)}} \\
& I-CALM & 0.666 & 0.585 & \mbox{0.654 {\scriptsize (0.032)}} & \mbox{0.581 {\scriptsize (0.030)}} \\

\midrule

\multirow{3}{*}{\begin{tabular}[c]{@{}c@{}}
Qwen3-4B-\\
Instruct-2507
\end{tabular}}
& A      & 0.723 & 0.579 & \mbox{0.797 {\scriptsize (0.032)}} & \mbox{0.559 {\scriptsize (0.032)}} \\
& B      & 0.780 & 0.572 & \mbox{0.785 {\scriptsize (0.036)}} & \mbox{0.504 {\scriptsize (0.032)}} \\
& I-CALM & 0.814 & 0.530 & \mbox{0.800 {\scriptsize (0.032)}} & \mbox{0.437 {\scriptsize (0.032)}} \\

\bottomrule
\end{tabular}
}

\caption{SimpleQA Verified results: calibration metrics for GPT-5 mini, GPT-4o mini, Meta-Llama-3-8B-Instruct, Gemini-3.1-Flash-Lite, and Qwen3-4B-Instruct-2507 under Scheme A $(+1,-1)$, Scheme B $(+1,-1,+0.4)$, and I-CALM $(+1,-1,+0.4)$. Parentheses report 95\% CI half-widths.}
\label{tab:combined_calibration_simpleqa}
\end{table}

\begin{figure}[H]
    \centering
    \begin{subfigure}{\columnwidth}
        \centering
        \includegraphics[width=0.7\columnwidth]{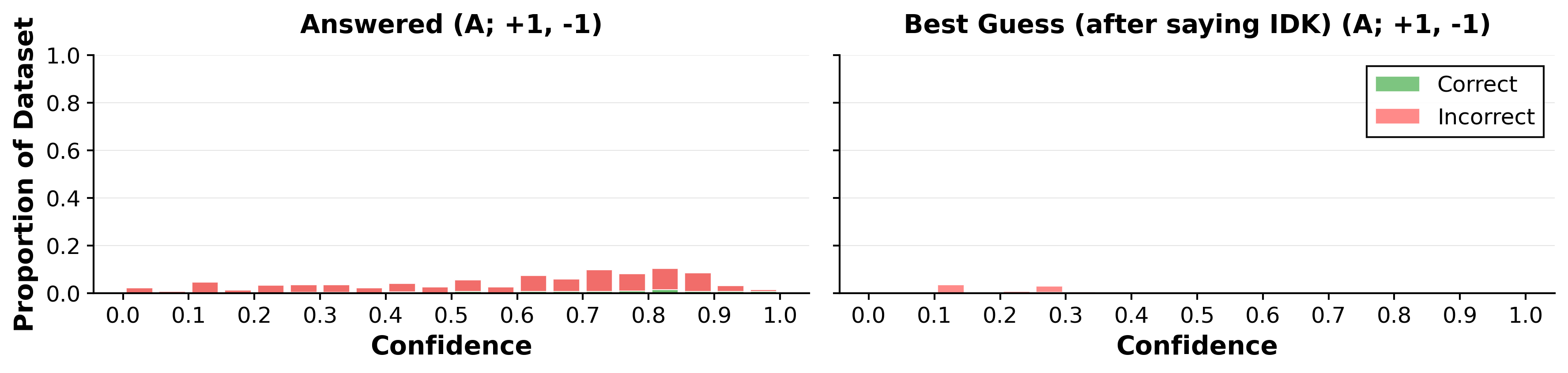}
        \includegraphics[width=0.7\columnwidth]{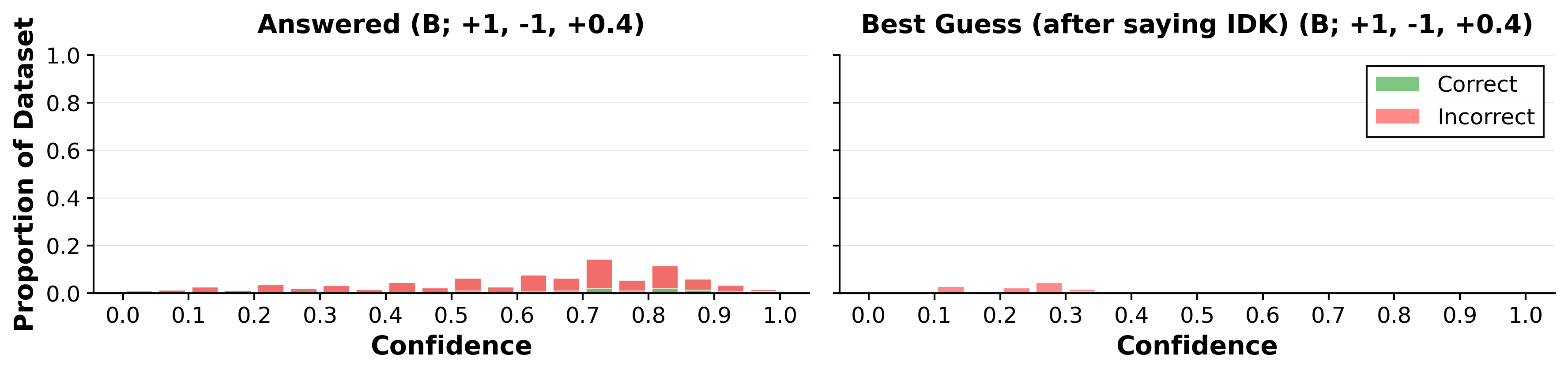}
        \includegraphics[width=0.7\columnwidth]{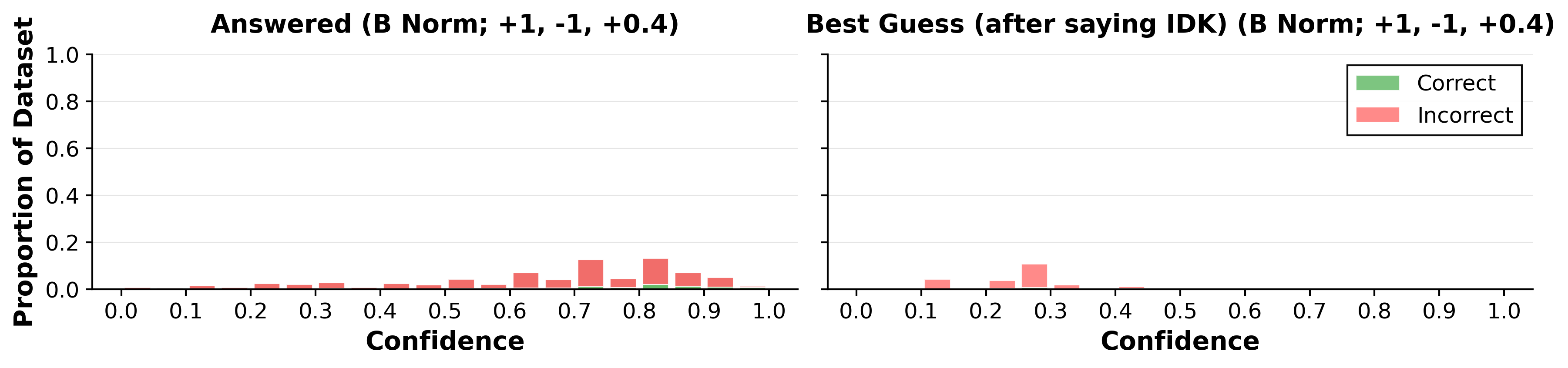}
        \caption{GPT-5 mini}
    \end{subfigure}
\end{figure}
   
\begin{figure}[H] 
    \ContinuedFloat
    \centering
    \begin{subfigure}{\columnwidth}
        \centering
        \includegraphics[width=0.7\columnwidth]{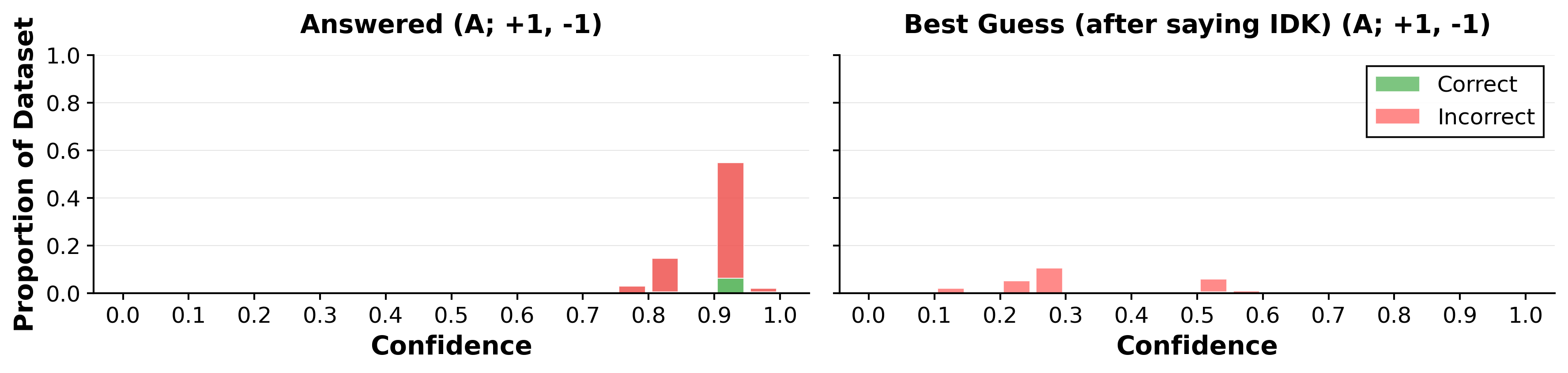}
        \includegraphics[width=0.7\columnwidth]{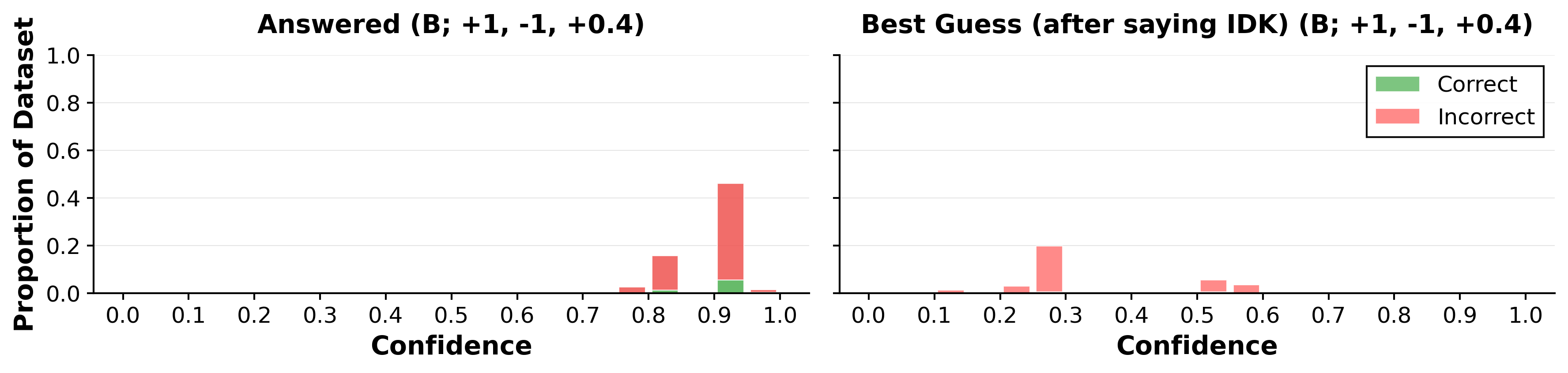}
        \includegraphics[width=0.7\columnwidth]{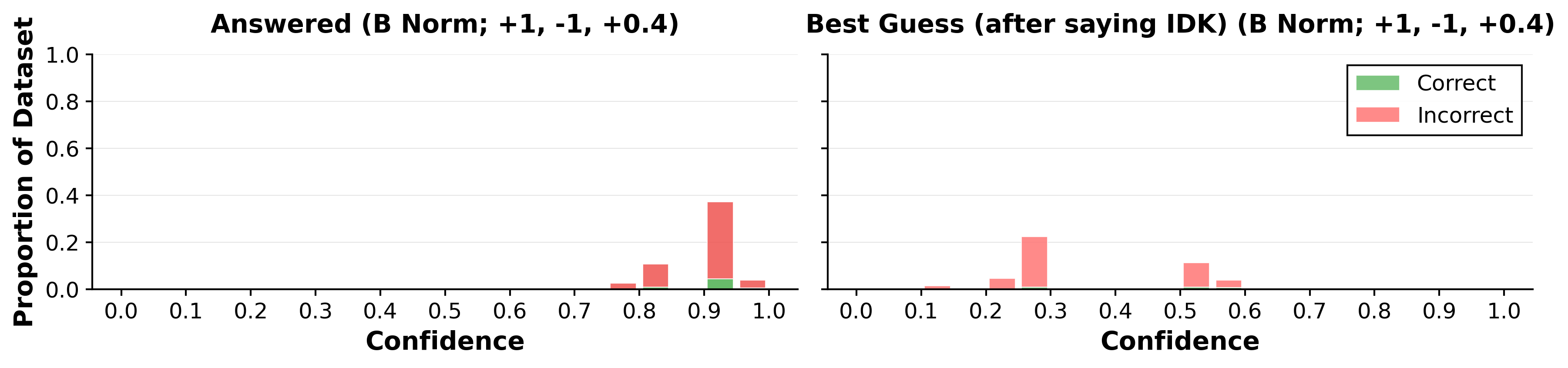}
        \caption{GPT-4o mini}
    \end{subfigure}   
\end{figure}

\begin{figure}[H]
    \ContinuedFloat
    \centering

    \begin{subfigure}{\columnwidth}
        \centering
        \includegraphics[width=0.7\columnwidth]{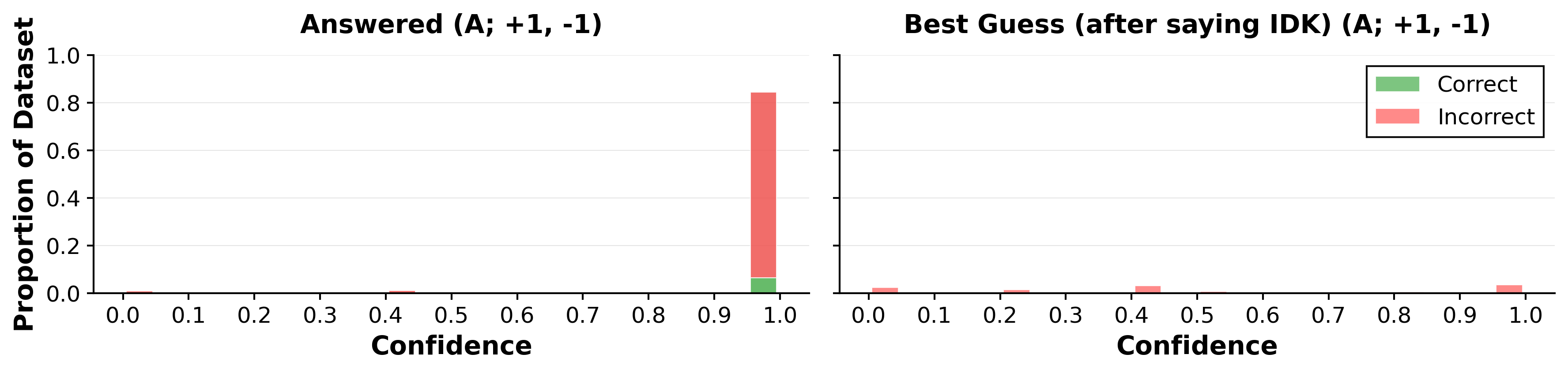}
        \includegraphics[width=0.7\columnwidth]{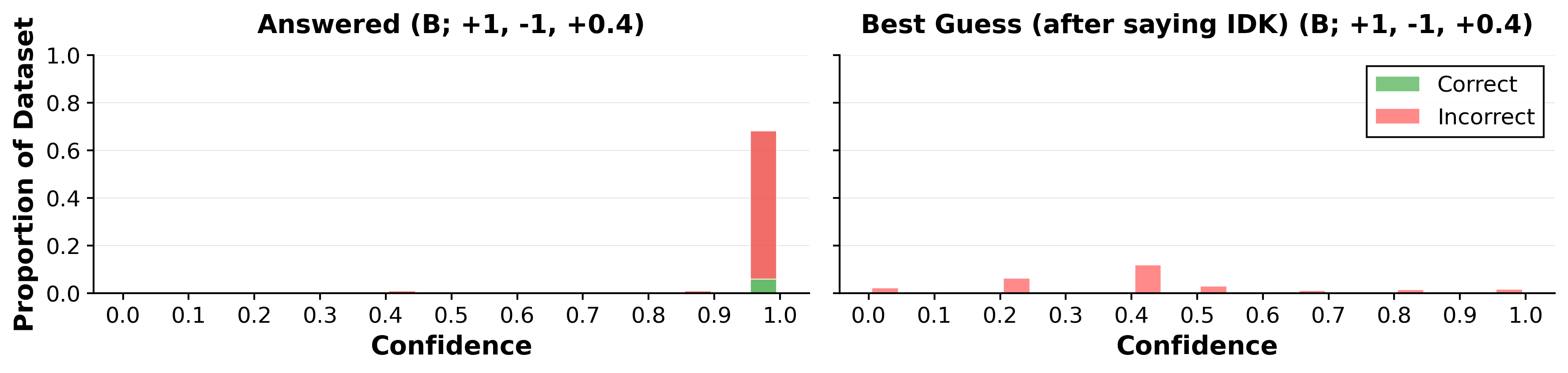}
        \includegraphics[width=0.7\columnwidth]{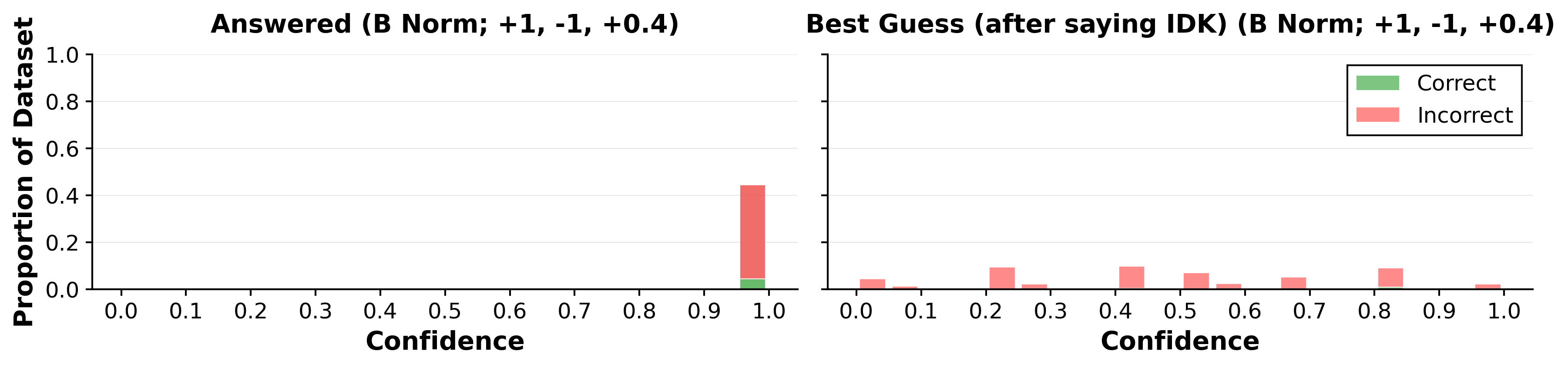}
        \caption{Meta-Llama-3-8B-Instruct}
    \end{subfigure}
\end{figure}

\begin{figure}[H]  
    \ContinuedFloat
    \centering
    \begin{subfigure}{\columnwidth}
        \centering
        \includegraphics[width=0.7\columnwidth]{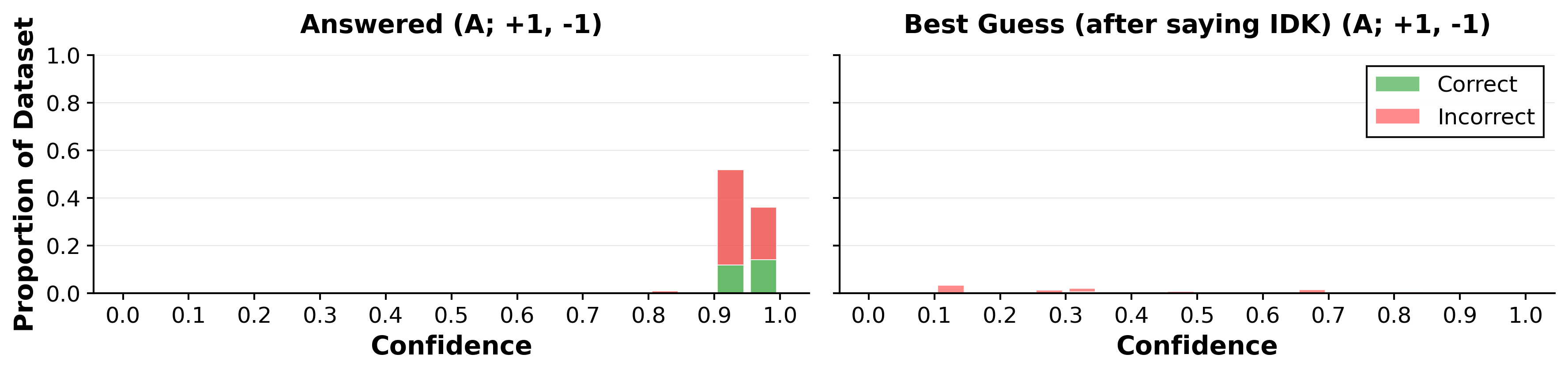}
        \includegraphics[width=0.7\columnwidth]{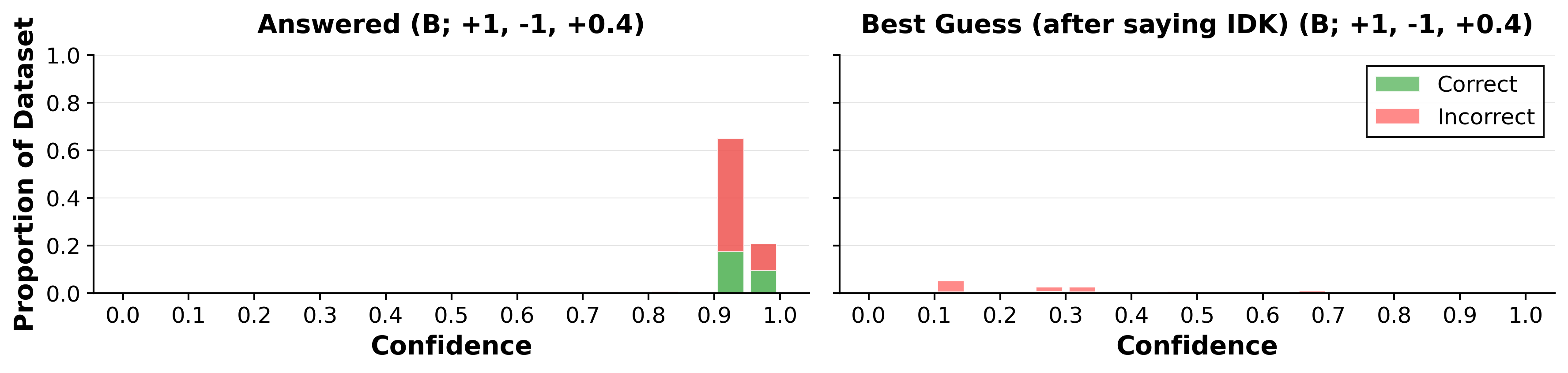}
        \includegraphics[width=0.7\columnwidth]{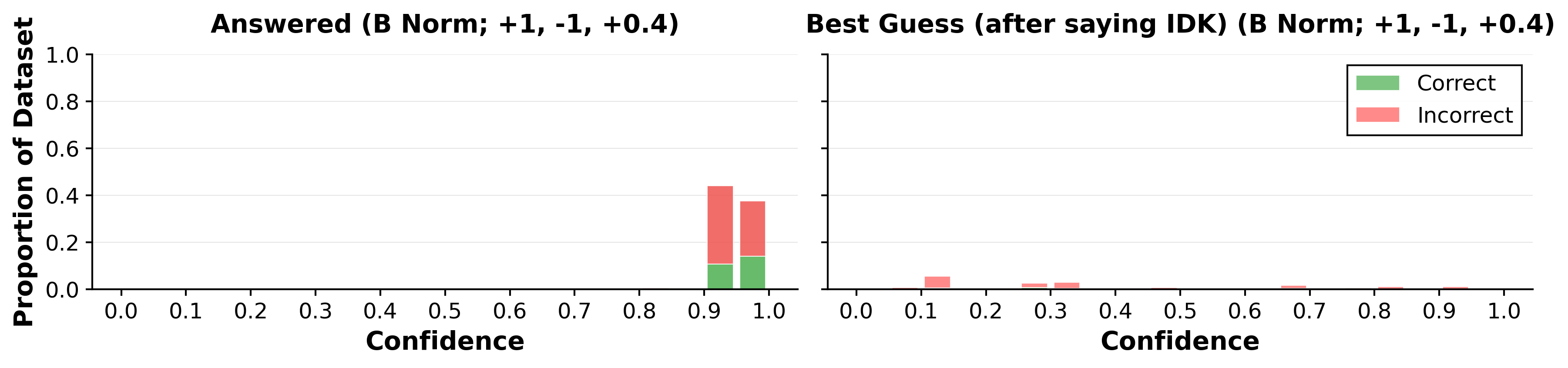}
        \caption{Gemini-3.1-Flash-Lite}
    \end{subfigure}
\end{figure}

\begin{figure}[H]
    \ContinuedFloat
    \centering

    \begin{subfigure}{\columnwidth}
        \centering
        \includegraphics[width=0.7\columnwidth]{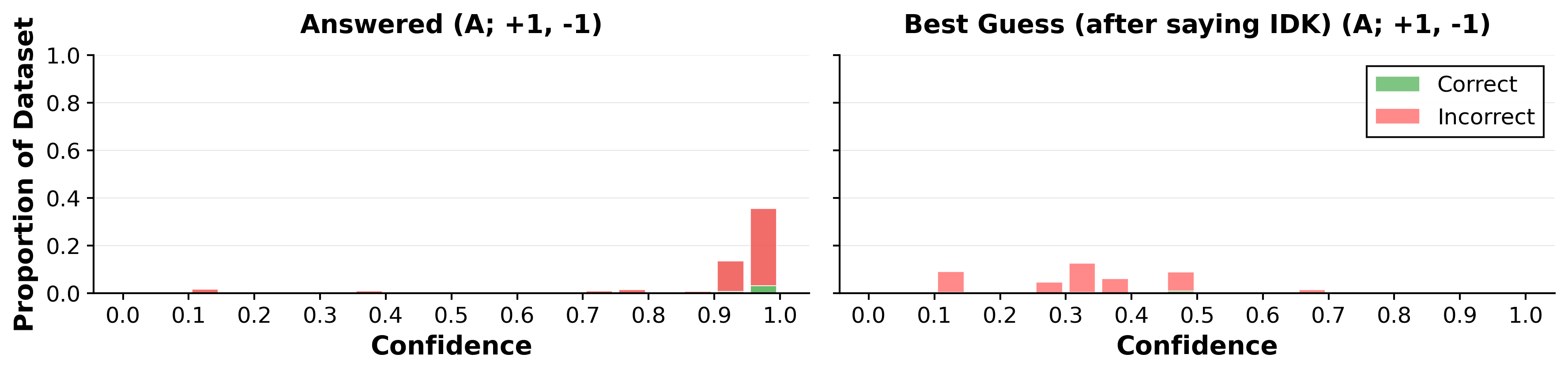}
        \includegraphics[width=0.7\columnwidth]{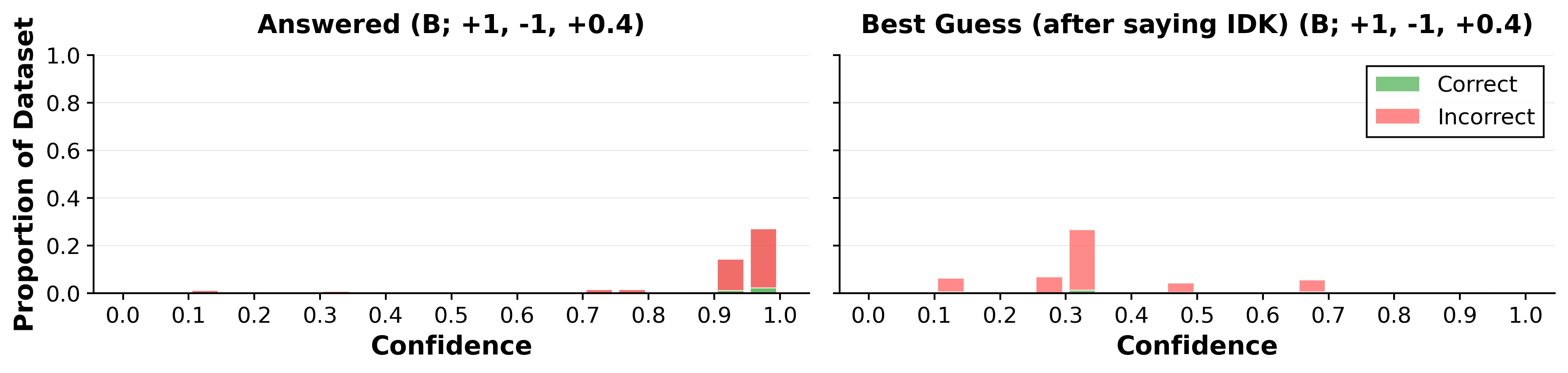}
        \includegraphics[width=0.7\columnwidth]{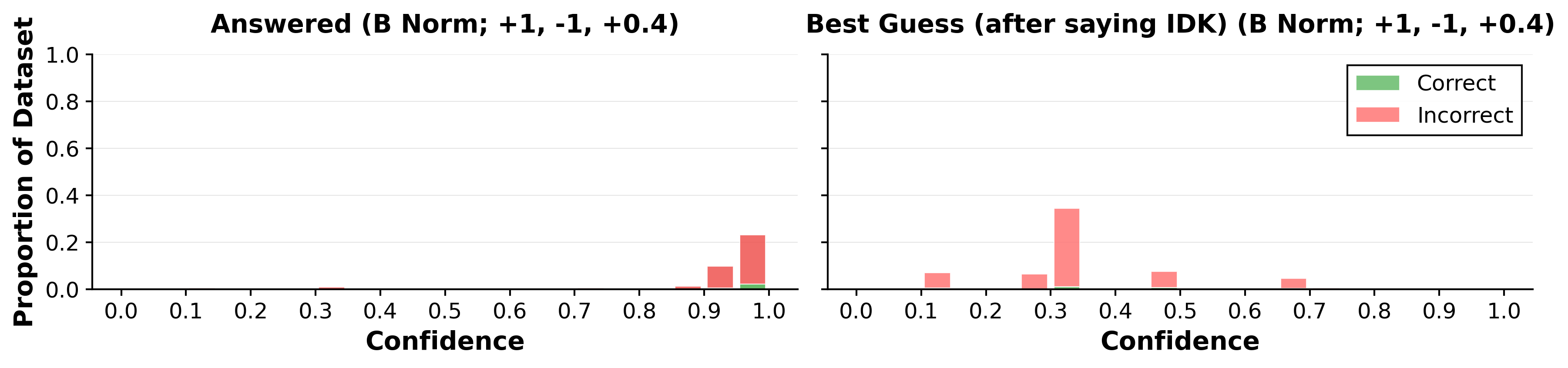}
        \caption{Qwen3-4B-Instruct-2507}
    \end{subfigure}
    \caption{SimpleQA Verified confidence distributions under Scheme A (+1,-1), Scheme B (+1,-1,+0.4), and I-CALM (+1,-1,+0.4) (Scheme B with norms is equivalent to I-CALM.). Within each panel, the left histogram shows first-round answered cases and the right histogram shows second-round best guesses after an initial ``I don't know.''}
    \label{fig:simpleqa-verified_conf_comparison_all}
\end{figure}

\subsection{Deployment Extension: Post-Hoc Confidence Thresholding with Certified FAR Control}
\label{subsec:new_false_answer_rate_control}

The goal of this subsection is to turn the model's self-reported confidence into a threshold rule that, with high probability over a held-out calibration sample, controls the population FAR among accepted answers at or below a user-specified target. When a held-out calibration/validation split is used, we randomly partition the data using a fixed random seed of 1.
We use the cumulative false-answer rate (CFAR) curve as a descriptive diagnostic, and consider two finite-sample certified rules based on one-sided Clopper--Pearson upper confidence bounds (CP-UCB) \citep{clopper1934use}: a conservative Bonferroni CP-UCB baseline and a multistart fixed-sequence ordered-testing alternative.
This places the subsection in the selective-prediction and risk-controlled refusal literature, where the central deployment question is how much coverage can be obtained at a controlled error level \citep{mao2025calibrating,wang2025safer,oehri2025trusted}.

As in Section~\ref{sec:problem_protocol}, each query is paired with a final answer, a correctness label, and a self-reported confidence. When the model initially abstains, the final answer is its elicited best guess; 
thus this section studies a forced-candidate regime: every query is paired with a candidate answer and confidence score, and the interface decides whether to surface or withhold that candidate.
Let $u:=1-\tau^{self}\in[0,1]$ denote reported uncertainty. For a fixed uncertainty cutoff $u$, the system accepts all answers with reported uncertainty at most $u$, equivalently all answers with confidence at least $1-u$. On a calibration set of size $n$, define
\begin{equation}
n(u)=\sum_{i=1}^n \mathbf{1}\{u_i\le u\},
\qquad
k(u)=\sum_{i=1}^n \mathbf{1}\{u_i\le u\}e_i,
\label{eq:cp_counts}
\end{equation}
where $e_i=\mathbf{1}\{\text{the final answer is incorrect}\}$. The empirical cumulative false-answer rate (CFAR) curve is
$$
\widehat{\mathrm{CFAR}}(u)=
\begin{cases}
k(u)/n(u), & n(u)>0,\\
0, & n(u)=0.
\end{cases}
$$
Equivalently, $\widehat{\mathrm{CFAR}}(u)$ is the empirical FAR among answers that would be accepted by the uncertainty threshold $u$.
When plotting $\widehat{\mathrm{CFAR}}(u)$, we additionally show pointwise 95\% confidence intervals as descriptive uncertainty bands. These intervals are not used directly for threshold selection: because they are pointwise and two-sided, their upper endpoints do not provide a valid post-selection guarantee after scanning multiple thresholds.

\subsubsection{Bonferroni CP-UCB threshold selection}

To obtain a certified rule, we fix a prespecified finite grid of uncertainty thresholds
$$
\mathcal U=\{0,0.01,\ldots,1.00\},
$$
independent of the calibration data, and we set the overall failure probability to $\delta=0.05$. 
The certification is with respect to this prespecified grid; a finer grid gives finer threshold resolution but induces a slightly more conservative Bonferroni correction.
For each $u\in\mathcal U$, we compute the one-sided CP-UCB
\begin{equation}
\mathrm{UCB}_{\mathrm{CP}}(u)=
\bigl\{
\begin{array}{@{}l@{}}
1, \text{ if } k(u)=n(u),\\[2pt]
\operatorname{Beta}^{-1}\!\Bigl(1-\delta/|\mathcal U|;\,
k(u)+1,\\
\hspace{3.5em} n(u)-k(u)\Bigr),
\text{ if } k(u)<n(u).
\end{array}
\label{eq:cp_ucb}
\end{equation}
which yields a Bonferroni-corrected simultaneous upper band over the entire grid. 
$\operatorname{Beta}^{-1}(q;a,b)$ denotes the $q$--quantile of the $\mathrm{Beta}(a,b)$ distribution.
Proposition~\ref{prop:cp_ucb} gives the guarantee. 
For any target FAR $r\in[0,1]$, we then choose the most permissive certified threshold
$$
\hat u_r=\max\{u\in\mathcal U:\mathrm{UCB}_{\mathrm{CP}}(u)\le r\},
$$
whenever the feasible set is nonempty; otherwise the rule rejects all answers. We define the corresponding confidence threshold as $\hat\tau_r=1-\hat u_r$. At deployment time, the system accepts the model output iff $\tau^{self}\ge\hat\tau_r$; otherwise the answer is rejected or held for further review. The CP-UCB calibration procedure is summarized in Algorithm 1 below.

\begin{proposition}[Finite-sample FAR control via CP-UCB]
\label{prop:cp_ucb}
Let $(U,E)$ denote the reported uncertainty and error indicator, respectively, for a fresh example drawn from the same distribution, where $E=1$ iff the final answer is incorrect. 
Assume the calibration examples and future examples are i.i.d. draws from a common distribution over $(U,E)$.
Let $\mathcal U=\{u_1,\dots,u_M\}\subset[0,1]$ be a prespecified finite grid of uncertainty thresholds, independent of the calibration data. For each calibration example $i=1,\dots,n$, let $u_i=1-\tau_i^{self}$ be the reported uncertainty and let $e_i\in\{0,1\}$ indicate whether the corresponding final answer is incorrect. For each $u\in\mathcal U$, define
$$
n(u)=\sum_{i=1}^n \mathbf{1}\{u_i\le u\},
\qquad
k(u)=\sum_{i=1}^n \mathbf{1}\{u_i\le u\}e_i.
$$
Define
$$
R(u)=\Pr(E=1\mid U\le u),
$$
with the convention $R(u)=0$ when $\Pr(U\le u)=0$.
Define the one-sided Clopper--Pearson upper bound
$$
\mathrm{UCB}_{\mathrm{CP}}(u)
=
\left\{
\begin{array}{@{}l@{}}
1,\quad \text{if } k(u)=n(u),\\[2pt]
\operatorname{Beta}^{-1}\!\Bigl(1-\delta/M;\,k(u)+1,\,n(u)-k(u)\Bigr),\\
\qquad \text{if } k(u)<n(u).
\end{array}
\right.
$$
For a target FAR $r\in[0,1]$, let
$\hat u_r=\max\{u\in\mathcal U:\mathrm{UCB}_{\mathrm{CP}}(u)\le r\}$,
whenever the feasible set is nonempty; otherwise the rule rejects all answers. Then
$$
\Pr\!\Big(\forall u\in\mathcal U,\ R(u)\le \mathrm{UCB}_{\mathrm{CP}}(u)\Big)\ge 1-\delta.
$$
Consequently, whenever the feasible set is nonempty, the selected threshold satisfies
$R(\hat u_r)\le r$.
If the feasible set is empty, the reject-all rule trivially satisfies the risk constraint.
\end{proposition}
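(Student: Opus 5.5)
The argument has three parts: a pointwise coverage statement for each grid point, proved by conditioning on the number of accepted calibration examples; a union bound over the $M$ grid points; and a short deduction for the selected threshold.

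\textbf{Step 1: pointwise coverage.} Fix $u\in\mathcal U$ and let $p_u=\Pr(U\le u)$. If $p_u=0$, then $R(u)=0\le\mathrm{UCB}_{\mathrm{CP}}(u)$ trivially. Otherwise, by the i.i.d.\ assumption, conditional on $n(u)=m$ the accepted calibration examples are i.i.d.\ draws from the law of $(U,E)$ given $U\le u$. Hence $k(u)\mid n(u)=m\sim\mathrm{Bin}(m,R(u))$. The grid is fixed in advance, so there is no selection issue at this step.

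The one-sided Clopper--Pearson bound at level $1-\delta/M$ then satisfies
\[
\Pr\bigl(R(u)>\mathrm{UCB}_{\mathrm{CP}}(u)\mid n(u)=m\bigr)\le\delta/M
\quad\text{for every } m .
\]
To justify this, I would use the standard Beta--Binomial duality
\[
\Pr\bigl(\mathrm{Bin}(m,p)\le k\bigr)=\Pr\bigl(\mathrm{Beta}(k+1,m-k)\ge p\bigr).
\]
This gives $R(u)>\mathrm{UCB}$ if and only if $\Pr_{R(u)}(\mathrm{Bin}\le k(u))<\delta/M$. The event $\{\Pr_{R(u)}(\mathrm{Bin}\le k(u))<\delta/M\}$ is a lower-tail event of a binomial, so its probability is at most $\delta/M$ by the usual stochastic-dominance argument for the binomial CDF.

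Three edge cases need separate checks:
\begin{itemize}
\item $k(u)=n(u)$: the bound is set to $1$, so coverage is automatic.
\item $m=0$: here $k=n=0$, so this falls into the previous case.
\item $k<n$: this is exactly the regime in which the Beta parameters are valid.
\end{itemize}
Averaging over $m$ removes the conditioning and gives unconditional pointwise coverage $1-\delta/M$.

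\textbf{Step 2: union bound.} Summing the failure probabilities over the $M$ grid points bounds the probability that some $u\in\mathcal U$ has $R(u)>\mathrm{UCB}_{\mathrm{CP}}(u)$ by $\delta$. No independence across $u$ is needed, which matters because the events for different $u$ are nested and strongly dependent.

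\textbf{Step 3: consequence for the selected threshold.} On the simultaneous event from Step 2, $\hat u_r$ is a (data-dependent) element of $\mathcal U$. Since the bound holds for every grid point at once, $R(\hat u_r)\le\mathrm{UCB}_{\mathrm{CP}}(\hat u_r)\le r$. The simultaneity is what handles the data-dependent choice of $\hat u_r$. The statement's "consequently" should be read as holding on this event, i.e.\ with probability at least $1-\delta$. If the feasible set is empty, no answer is accepted, so the surfaced FAR is vacuously zero.

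\textbf{Main obstacle.} The delicate step is the rigorous conditional coverage of the Clopper--Pearson bound in Step 1. This requires two things: verifying the Beta--Binomial tail identity with the exact parameterization $\mathrm{Beta}(k+1,n-k)$ used in the statement, and showing that conditioning on the random count $n(u)$ preserves the binomial structure. I would handle both by conditioning on the set of indices with $u_i\le u$, which makes the two points explicit.
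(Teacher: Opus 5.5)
Your proposal is correct and follows essentially the same route as the paper's proof sketch: the conditional binomial structure of $k(u)$ given $n(u)$, pointwise one-sided Clopper--Pearson coverage at level $1-\delta/M$, a union bound over the prespecified grid, and the deduction $R(\hat u_r)\le \mathrm{UCB}_{\mathrm{CP}}(\hat u_r)\le r$ on the simultaneous event. Your additions fill in steps the paper leaves implicit without changing the argument: the Beta--Binomial tail identity, the edge cases $k(u)=n(u)$ and $n(u)=0$, and reading the ``consequently'' as holding only on the probability-$(1-\delta)$ event.
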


\begin{proof}[Proof sketch]
For a fixed $u$, let
$$
I_i(u)=\mathbf{1}\{U_i \le u\}.
$$
Since $(U_i,E_i)$ are i.i.d. and $I_i(u)$ depends only on $U_i$, we have
$$
E_i \mid I_i(u)=1 \sim \mathrm{Bernoulli}(R(u)).
$$
Therefore, conditional on there being $m$ accepted examples at threshold $u$, the corresponding error indicators are i.i.d. $\mathrm{Bernoulli}(R(u))$, so the number of accepted errors is distributed as $\mathrm{Binomial}(m,R(u))$.
The one-sided Clopper--Pearson construction gives
$$
\Pr\!\bigl(R(u)\le \mathrm{UCB}_{\mathrm{CP}}(u)\bigr)\ge 1-\delta/M.
$$
A union bound over the $M$ prespecified grid points yields the simultaneous event with probability at least $1-\delta$. The selection rule only chooses thresholds whose certified upper bound is at most $r$, which yields $R(\hat u_r)\le r$ whenever the feasible set is nonempty.
\end{proof}

\begin{tcolorbox}[
  enhanced jigsaw,
  colback=gray!5,
  colframe=gray!40,
  title=Algorithm 1: CP-UCB threshold selection,
  boxsep=3pt,
  left=3pt,
  right=3pt,
  top=3pt,
  bottom=3pt,
  parskip=0pt,
  breakable,
  break at=-\baselineskip/0pt,
  height fixed for=middle,
  pad at break=1mm
]
\textbf{Input:} calibration set $\{(u_i,e_i)\}_{i=1}^n$, target FAR $r\in[0,1]$, overall failure probability $\delta\in(0,1)$, prespecified grid $\mathcal U=\{0,0.01,\ldots,1.00\}$.

\textbf{For each} $u\in\mathcal U$:
\begin{enumerate}[leftmargin=*,itemsep=0.2em]
    \item Compute the number of accepted answers:
    \[
    n(u)=\sum_{i=1}^n \mathbf{1}\{u_i\le u\}.
    \]
    
    \item Compute the number of incorrect accepted answers:
    \[
    k(u)=\sum_{i=1}^n \mathbf{1}\{u_i\le u\}e_i.
    \]
    
    \item Set the per-threshold significance level:
    \[
    \alpha=\delta/|\mathcal U|.
    \]
    
    \item Compute the one-sided CP upper bound:
    \[
    \mathrm{UCB}_{\mathrm{CP}}(u)=1
    \quad \text{if } k(u)=n(u),
    \]
    otherwise,
    \[
    \mathrm{UCB}_{\mathrm{CP}}(u)=
    \operatorname{Beta}^{-1}\!\Bigl(
      1-\alpha;\,k(u)+1,\,
      n(u)-k(u)
    \Bigr).
    \]
\end{enumerate}

\textbf{Return the calibrated threshold}
$$
\hat u_r=\max\{u\in\mathcal U:\mathrm{UCB}_{\mathrm{CP}}(u)\le r\},
$$
if the feasible set is nonempty; otherwise return a reject-all rule.

Define the corresponding confidence threshold as
$\hat\tau_r=1-\hat u_r$
whenever $\hat u_r$ exists.

\textbf{Deployment rule:} accept the model output iff $\tau^{self}\ge \hat\tau_r$ (equivalently, $u\le \hat u_r$); otherwise reject or hold the answer for further review.
\end{tcolorbox}

The guarantee is a population statement obtained entirely from the calibration split; the validation split is used only as a holdout check on how conservative the certified rule is in finite samples. Figure~\ref{fig:gpt5_penalty} is descriptive only and uses the full PopQA dataset for GPT-5 mini. As $u$ increases, the acceptance set expands to include more uncertain outputs, so the empirical CFAR curves generally rise. The curves are close at very low uncertainty, separate over much of the mid-uncertainty range---where  I-CALM is typically lowest, followed by Scheme B and then Scheme A---and reconverge near $u=1$, where $\widehat{\mathrm{CFAR}}(1)$ equals the overall forced-answer FAR. This is consistent with Appendix~\ref{subsec:E2_full_result}, which shows similar $\text{FAR}_{\mathrm{overall}}$ across schemes.

\begin{figure}[H]
\centering
\includegraphics[width=0.7\linewidth]{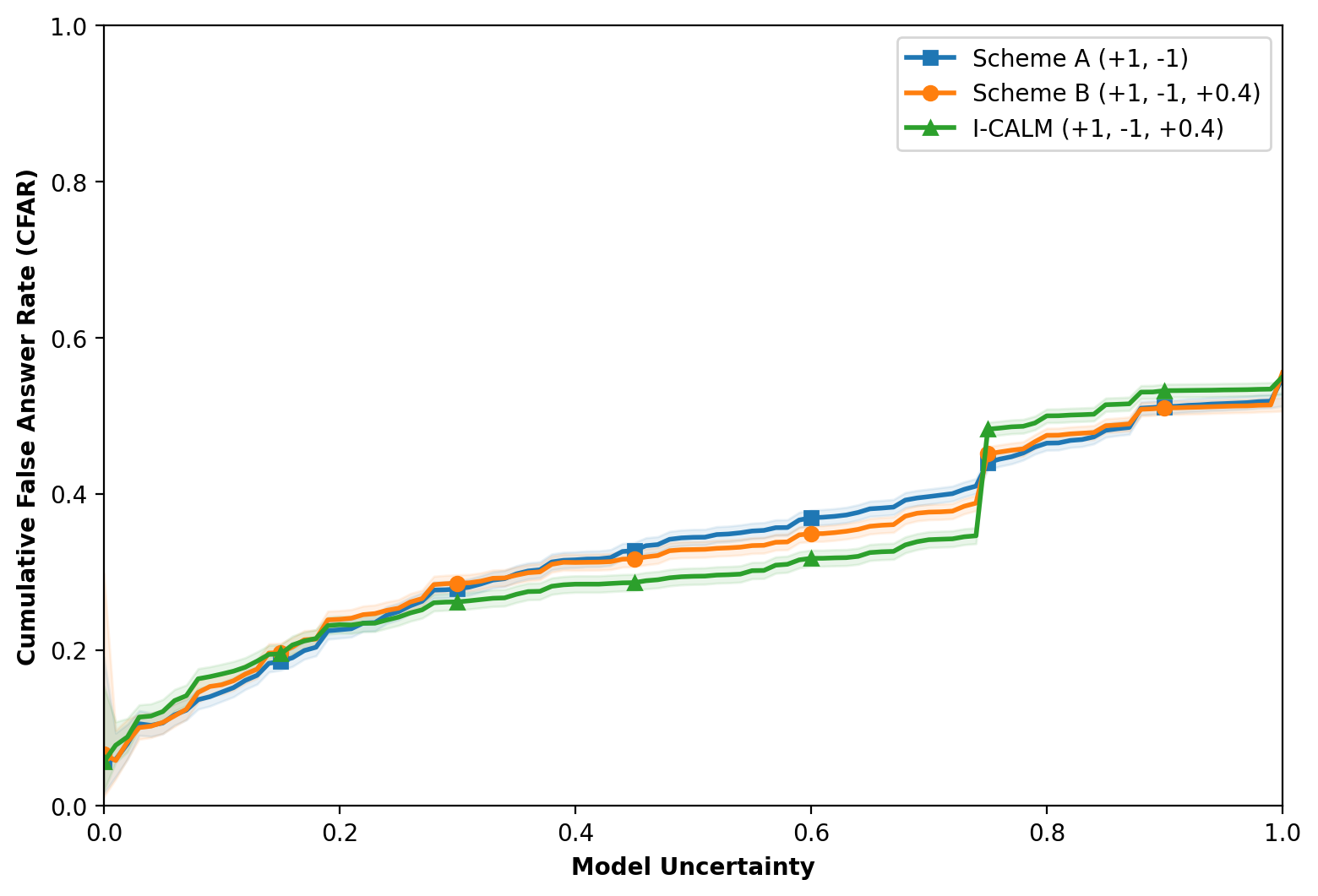}
\caption{Empirical CFAR as a function of reported uncertainty $u = 1 - \tau^{\mathrm{self}}$ for GPT-5 mini on the full PopQA dataset under Scheme A, Scheme B, and  I-CALM. Shaded bands show pointwise 95\% confidence intervals.}
\label{fig:gpt5_penalty}
\end{figure}

For actual certification, Figure~\ref{fig:cfar_train} plots the calibration-split empirical CFAR curves (solid) together with their Bonferroni CP-UCB envelopes (dashed). For the illustrative target $r=0.3$, the selected thresholds are
$$
\hat u_{0.3}^A=0.24,\qquad
\hat u_{0.3}^B=0.25,\qquad
\hat u_{0.3}^{B\text{ w/ norms}}=0.26,
$$
and the validation markers all lie below the target line. Table~\ref{tab:risk_targets} extends this comparison to $r\in\{0.1,0.2,0.3,0.4\}$. On this split, all validation FAR values among accepted answers fall below their targets. The $r=0.1$ row returns reject-all for all schemes, reflecting the conservativeness induced by the 20\% calibration split together with the Bonferroni correction.

\begin{figure}[H]
\centering
\includegraphics[width=0.7\linewidth]{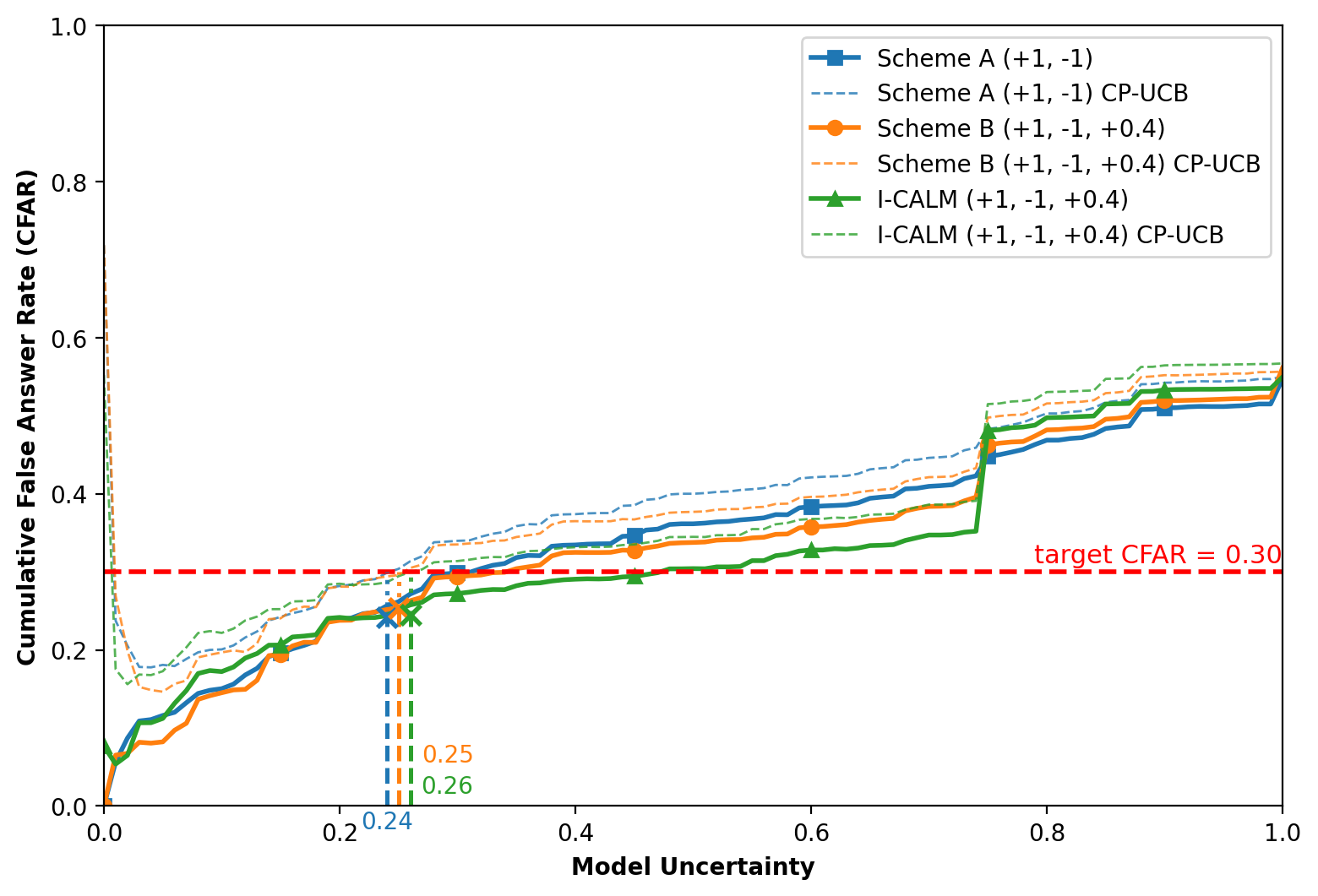}
\caption{Calibration-split empirical CFAR curves (solid) and Bonferroni CP-UCB envelopes (dashed) for GPT-5 mini on PopQA at the illustrative risk target $r=0.3$. Vertical dashed lines denote the selected uncertainty thresholds $\hat u_r$; x-shaped markers show the validation FAR among accepted answers at those thresholds.}
\label{fig:cfar_train}
\end{figure}

Under this formulation, the relevant deployment comparison is coverage at fixed risk. Both the conditional risk curve $R(u)$ and the distribution of reported uncertainty $U$ matter: lowering the CFAR/CP-UCB curve can support a larger certified threshold within a scheme, but cross-scheme coverage also depends on how much mass the prompting intervention shifts across uncertainty levels. In Table~\ref{tab:risk_targets}, Scheme B achieves the highest validation acceptance at each nontrivial risk target, while  I-CALM often matches or slightly exceeds the certified threshold without increasing acceptance. 
The practical takeaway is that prompt-level incentives change the coverage--risk operating point of the confidence filter, so threshold size alone is not a sufficient proxy for deployment utility.

\begin{table}[H]
\centering
{\fontsize{9pt}{9pt}\selectfont
\setlength{\tabcolsep}{4pt}

\begin{tabular}{ccccc}
\toprule

\shortstack{\textbf{Risk}\\\textbf{Target}}
& \shortstack{\textbf{Calibrated}\\\textbf{Uncertainty}\\\textbf{Threshold}}
& \shortstack{\textbf{Calibration}\\\textbf{Acceptance}\\\textbf{Rate}}
& \shortstack{\textbf{Validation}\\\textbf{Acceptance}\\\textbf{Rate}}
& \shortstack{\textbf{Validation}\\$\widehat{\textbf{CFAR}}$} \\

\midrule

\multicolumn{5}{l}{\textbf{Scheme A (+1, -1)}} \\

\midrule

0.1 & reject-all & 0.00 & \mbox{0.00 {\scriptsize (0.00)}} & -- \\
0.2 & 0.08 & 0.21 & \mbox{0.20 {\scriptsize (0.01)}} & \mbox{0.13 {\scriptsize (0.01)}} \\
0.3 & 0.24 & 0.44 & \mbox{0.43 {\scriptsize (0.01)}} & \mbox{0.24 {\scriptsize (0.01)}} \\
0.4 & 0.48 & 0.62 & \mbox{0.60 {\scriptsize (0.01)}} & \mbox{0.34 {\scriptsize (0.01)}} \\

\midrule

\multicolumn{5}{l}{\textbf{Scheme B (+1, -1, +0.4)}} \\

\midrule

0.1 & reject-all & 0.00 & \mbox{0.00 {\scriptsize (0.00)}} & -- \\
0.2 & 0.12 & 0.26 & \mbox{0.25 {\scriptsize (0.01)}} & \mbox{0.17 {\scriptsize (0.01)}} \\
0.3 & 0.25 & 0.44 & \mbox{0.44 {\scriptsize (0.01)}} & \mbox{0.25 {\scriptsize (0.01)}} \\
0.4 & 0.63 & 0.62 & \mbox{0.61 {\scriptsize (0.01)}} & \mbox{0.35 {\scriptsize (0.01)}} \\

\midrule

\multicolumn{5}{l}{\textbf{I-CALM (+1, -1, +0.4)}} \\

\midrule

0.1 & reject-all & 0.00 & \mbox{0.00 {\scriptsize (0.00)}} & -- \\
0.2 & 0.06 & 0.17 & \mbox{0.16 {\scriptsize (0.01)}} & \mbox{0.14 {\scriptsize (0.02)}} \\
0.3 & 0.26 & 0.44 & \mbox{0.43 {\scriptsize (0.01)}} & \mbox{0.24 {\scriptsize (0.01)}} \\
0.4 & 0.74 & 0.60 & \mbox{0.58 {\scriptsize (0.01)}} & \mbox{0.34 {\scriptsize (0.01)}} \\

\bottomrule
\end{tabular}
}

\caption{Bonferroni CP-UCB threshold selection for GPT-5 mini on PopQA using a 20\%/80\% calibration--validation split. Validation FAR is computed among answers accepted by the threshold selected on the calibration split.}
\label{tab:risk_targets}
\end{table}

\subsubsection{Multistart fixed-sequence CP threshold selection}

The Bonferroni baseline in Algorithm~1 permits arbitrary post-selection over the full grid $\mathcal U$, but can be conservative when neighboring thresholds have highly correlated accepted sets. In our setting the acceptance sets are nested in $u$, and the empirical CFAR curves are often approximately nondecreasing: as $u$ increases, the acceptance set expands and the cumulative accepted-set FAR typically rises. This ordered structure motivates a fixed-sequence ordered-testing alternative \citep{wiens2003fixed,angelopoulos2025learn}. Rather than scanning only from the strictest threshold $u_1$, we use a multistart version that prespecifies several starting indices and runs a forward scan from each one.

Write the grid in increasing order as $\mathcal U=\{u_1<\cdots<u_M\}$, so that $u_1$ is the smallest uncertainty level and therefore the strictest acceptance rule. For a target FAR $r\in[0,1]$, define the null hypotheses
$$
H_j:R(u_j)\ge r,\qquad j=1,\ldots,M,
$$
where $R(u)=\Pr(E=1\mid U\le u)$ is as in Proposition~\ref{prop:cp_ucb}. The ordered-testing idea is to scan thresholds from lower to higher uncertainty and stop along a given path at the first non-rejection. To reduce sensitivity to poorly supported thresholds near $u=0$, we prespecify a small set of starting indices
$$
\mathcal J=\{j_1<\cdots<j_L\}\subseteq \{1,\ldots,M\},
$$
for example a coarse evenly spaced subset of the grid, and split the overall failure probability equally across starts. From each start $j_\ell\in\mathcal J$, we test the path
$$
u_{j_\ell},u_{j_\ell+1},\ldots,u_M
$$
at level $\delta/L$, continuing only while rejections occur.

For each $u_j$, the exact one-sided binomial p-value is
\begin{equation}
p_j=
\begin{cases}
1, & n(u_j)=0,\\
\Pr(B\le k(u_j)), & n(u_j)>0,
\end{cases}
\label{eq:ms_pvalue}
\end{equation}
\[
B\sim\mathrm{Binomial}(n(u_j),r).
\]
where $n(u_j)$ and $k(u_j)$ are defined in \eqref{eq:cp_counts}. Small $p_j$ indicates that the observed number of accepted errors is unusually low under the null $R(u_j)\ge r$, and therefore provides evidence that threshold $u_j$ is safe.

As in the Bonferroni baseline, the same decision can be expressed through
one-sided CP upper bounds. Since the one-sided CP interval inverts the exact
binomial test, define
\[
U_j := \mathrm{UCB}_{\mathrm{CP}}^{\mathrm{MS}}(u_j).
\]
Then $p_j\le \delta/L$ is equivalent to the following:
if $k(u_j)=n(u_j)$, then
\begin{equation}
U_j = 1;
\end{equation}
if $k(u_j)<n(u_j)$, then
\begin{multline}
U_j =
\operatorname{Beta}^{-1}\!\Bigl(
1-\delta/L;\, k(u_j)+1,\\
n(u_j)-k(u_j)
\Bigr).
\label{eq:ms_cp_ucb}
\end{multline}
A threshold $u_j$ is certified when $U_j\le r$.

Let $C_\ell(r)\subseteq \mathcal U$ denote the set of thresholds certified
along start $j_\ell$. The multistart certified set is
\[
\mathcal C_r^{\mathrm{MS}} = \bigcup_{\ell=1}^L C_\ell(r),
\]
and the selected deployment threshold is
\[
\hat u_r^{\mathrm{MS}} = \max \mathcal C_r^{\mathrm{MS}}.
\]
whenever the certified set is nonempty; otherwise the rule rejects all answers. The corresponding confidence threshold is $\hat\tau_r^{\mathrm{MS}}=1-\hat u_r^{\mathrm{MS}}$.

Because different starts can certify different forward segments, $\mathcal C_r^{\mathrm{MS}}$ need not form a single contiguous prefix of the grid. This does not affect validity: the deployment rule depends only on the selected threshold $\hat u_r^{\mathrm{MS}}$, and that selected threshold is itself one of the certified thresholds.

Under the idealized monotone-risk picture $R(u_1)\le \cdots \le R(u_M)$, each start explores the same safe-to-unsafe ordering, but from a different initialization point. Exact monotonicity is not required for validity; it only explains why multistart scans can recover useful thresholds when the earliest grid points are too sparse to support a single forward scan from $u_1$.

\begin{proposition}[Finite-sample FAR control via multistart fixed-sequence CP]
\label{prop:ms_cp}
Assume the setting of Proposition~\ref{prop:cp_ucb}, and write the prespecified grid as $\mathcal U=\{u_1<\cdots<u_M\}$. Let $\mathcal J=\{j_1<\cdots<j_L\}\subseteq\{1,\ldots,M\}$ be a prespecified set of starting indices, independent of the calibration data. For a target FAR $r\in[0,1]$, define $H_j:R(u_j)\ge r$, where $R(u_j)=\Pr(E=1\mid U\le u_j)$. For each start $j_\ell$, run fixed-sequence testing on the path
$$
H_{j_\ell},H_{j_\ell+1},\ldots,H_M
$$
at level $\delta/L$, continuing only while rejections occur, and let $C_\ell(r)$ be the set of thresholds certified along that path. Let
$$
\mathcal C_r^{\mathrm{MS}}=\bigcup_{\ell=1}^{L} C_\ell(r),
\qquad
\hat u_r^{\mathrm{MS}}=\max \mathcal C_r^{\mathrm{MS}}
$$
whenever $\mathcal C_r^{\mathrm{MS}}\neq\varnothing$, with reject-all otherwise. Then
$$
\Pr\!\bigl(\forall u\in \mathcal C_r^{\mathrm{MS}},\ R(u)\le r\bigr)\ge 1-\delta.
$$
Consequently, whenever a threshold is selected,
$$
R(\hat u_r^{\mathrm{MS}})\le r.
$$
If no threshold is selected, the reject-all rule trivially satisfies the risk constraint.
\end{proposition}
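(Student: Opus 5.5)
The plan is to show that the multistart procedure controls the family-wise error rate (FWER) over the true nulls $\{H_j : R(u_j)\ge r\}$ at level $\delta$. On the complementary event, every certified threshold satisfies $R(u)\le r$; indeed $R(u)<r$, since certification means $H_j$ was rejected. Specifically, let $\mathcal T=\{j: R(u_j)\ge r\}$ be the set of true nulls. A threshold in $\mathcal C_r^{\mathrm{MS}}$ with $R(u)>r$ can occur only if some true null is rejected along some path. It therefore suffices to bound
\[
\Pr\bigl(\exists \ell:\ C_\ell(r)\cap\{u_j:j\in\mathcal T\}\neq\varnothing\bigr)\le\delta .
\]
Once this bound holds, $R(\hat u_r^{\mathrm{MS}})\le r$ follows immediately, because $\hat u_r^{\mathrm{MS}}\in\mathcal C_r^{\mathrm{MS}}$. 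The reject-all case is trivial.

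\textbf{Step 1 (per-hypothesis validity).} For each fixed $j$, I reuse the conditioning argument from the proof sketch of Proposition~\ref{prop:cp_ucb}. Conditional on $n(u_j)=m$, the count $k(u_j)$ is $\mathrm{Binomial}(m,R(u_j))$. If $R(u_j)\ge r$, then by stochastic monotonicity of the binomial in its success probability, $\Pr(B\le k)$ with $B\sim\mathrm{Binomial}(m,r)$ is a super-uniform p-value:
\[
\Pr\bigl(p_j\le t\mid n(u_j)=m\bigr)\le t \quad\text{for all } t\in[0,1].
\]
The case $m=0$ gives $p_j=1$ and is trivial. Averaging over $m$ removes the conditioning. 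I will also note the CP-inversion equivalence $p_j\le\delta/L \iff U_j\le r$, which shows that the certification rule written via \eqref{eq:ms_cp_ucb} is exactly the test.

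\textbf{Step 2 (single path).} Fix a start $j_\ell$. Along the path $H_{j_\ell},H_{j_\ell+1},\dots$, testing stops at the first non-rejection. If any true null is rejected on this path, let $j^*$ be the first index $\ge j_\ell$ with $j^*\in\mathcal T$. The path cannot reach any later true null without first rejecting $H_{j^*}$, so the event is contained in $\{p_{j^*}\le\delta/L\}$. The index $j^*$ is deterministic: it depends only on the population function $R$ and the prespecified start, not on the data. Step 1 then gives probability at most $\delta/L$. This is the standard fixed-sequence argument, and it requires no monotonicity of $R$ and no independence across $j$.

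\textbf{Step 3 (union over starts).} A union bound over the $L$ prespecified starts, which are data-independent, gives total probability at most $L\cdot\delta/L=\delta$, completing the argument.

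I expect the main obstacle to be Step 1: making the super-uniformity rigorous when $n(u_j)$ is random and when $R(u_j)\ge r$ holds with strict inequality. I would handle this by conditioning on the accepted-set size, using the i.i.d. structure exactly as in the proof sketch of Proposition~\ref{prop:cp_ucb}, and invoking the binomial monotone-likelihood property to pass from the null boundary $R=r$ to the whole composite null.
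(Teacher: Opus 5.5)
Your proposal is correct and follows the paper's proof sketch. Both arguments use three pieces: the per-hypothesis validity of the exact binomial p-value conditional on $n(u_j)$; the fixed-sequence argument that any false rejection on a path forces rejection of the data-independent first true null on that path; and a union bound over the $L$ prespecified starts. Your explicit handling of the composite null $R(u_j)\ge r$ via binomial stochastic monotonicity fills in a detail that the paper's sketch leaves implicit.
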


\begin{proof}[Proof sketch]
Fix a start $j_\ell$. If all null hypotheses on that path are false, then every threshold on the path is safe and there can be no false rejection. Otherwise, let $j_\ell^\star$ be the index of the first true null on the path
$$
H_{j_\ell},H_{j_\ell+1},\ldots,H_M.
$$
For each fixed $j$, conditional on $n(u_j)$, we have
$$
k(u_j)\mid n(u_j)\sim \mathrm{Binomial}(n(u_j),R(u_j)),
$$
so $p_j$ is a valid one-sided p-value for $H_j$. Under fixed-sequence testing along the $\ell$-th path, any false rejection implies rejection of $H_{j_\ell^\star}$. Therefore
$$
\Pr(\text{any false rejection on path } \ell)\le \Pr(p_{j_\ell^\star}\le \delta/L)\le \delta/L.
$$
Applying a union bound across the $L$ prespecified starts yields
$$
\Pr(\text{any false rejection on any path})
\le
\sum_{\ell=1}^{L} \delta/L
=
\delta.
$$
Hence, with probability at least $1-\delta$, every threshold certified on any path is safe, which implies $R(\hat u_r^{\mathrm{MS}})\le r$ whenever a threshold is selected.
\end{proof}

\begin{tcolorbox}[
  enhanced jigsaw,
  colback=gray!5,
  colframe=gray!40,
  title=Algorithm 2: Multistart fixed-sequence CP threshold selection,
  boxsep=3pt,
  left=3pt,
  right=3pt,
  top=3pt,
  bottom=3pt,
  parskip=0pt,
  breakable,
  break at=-\baselineskip/0pt,
  height fixed for=middle,
  pad at break=1mm
]
\textbf{Input:} calibration set $\{(u_i,e_i)\}_{i=1}^n$, target FAR $r\in[0,1]$, overall failure probability $\delta\in(0,1)$, ordered grid $\mathcal U=\{u_1<\cdots<u_M\}$, and prespecified start set $\mathcal J=\{j_1<\cdots<j_L\}\subseteq\{1,\ldots,M\}$.

Initialize the certified set as $\mathcal C\leftarrow \varnothing$.

\textbf{For each start} $j_\ell\in\mathcal J$:
\begin{enumerate}[leftmargin=1.8em,itemsep=0.2em]
    \item Set $j\leftarrow j_\ell$.
    \item While $j\le M$:
    \begin{enumerate}[leftmargin=1.8em,itemsep=0.2em]
        \item Compute the number of accepted calibration answers
        $n(u_j)=\sum_{i=1}^n \mathbf{1}\{u_i\le u_j\}$.
        
        \item Compute the number of incorrect accepted calibration answers
        $k(u_j)=\sum_{i=1}^n \mathbf{1}\{u_i\le u_j\}e_i$.
        
        \item Compute either the exact one-sided p-value $p_j$ in \eqref{eq:ms_pvalue} or, equivalently, $\mathrm{UCB}_{\mathrm{CP}}^{\mathrm{MS}}(u_j)$ in \eqref{eq:ms_cp_ucb}.
        \item If $p_j\le \delta/L$ (equivalently, $\mathrm{UCB}_{\mathrm{CP}}^{\mathrm{MS}}(u_j)\le r$), add $u_j$ to $\mathcal C$ and set $j\leftarrow j+1$.
        \item Otherwise stop the current path and move to the next start.
    \end{enumerate}
\end{enumerate}

\textbf{Return}
$$
\hat u_r^{\mathrm{MS}}=\max \mathcal C
$$
if $\mathcal C\neq\varnothing$; otherwise return a reject-all rule.

Define the corresponding confidence threshold as
$\hat\tau_r^{\mathrm{MS}}=1-\hat u_r^{\mathrm{MS}}$
whenever $\hat u_r^{\mathrm{MS}}$ exists.

\textbf{Deployment rule:} accept the model output iff $\tau^{self}\ge \hat\tau_r^{\mathrm{MS}}$ (equivalently, $u\le \hat u_r^{\mathrm{MS}}$); otherwise reject or hold the answer for further review.
\end{tcolorbox}

Relative to Algorithm~1, the multistart rule trades arbitrary post-selection over the full grid for a family of prespecified ordered scans. It can recover certified thresholds even when the smallest uncertainty levels are too sparsely supported to justify a forward scan from $u_1$. The trade-off is an additional start-wise correction $\delta/L$: as $L$ increases, the rule becomes more conservative, although for $L\ll M$ it is still typically much less conservative than the Bonferroni baseline in Algorithm~1. In practice, a small coarse set of starts (for example, 5--10 roughly evenly spaced grid points) is a reasonable default, and we report one representative choice below.

Because the underlying empirical CFAR curves are unchanged from Figures~\ref{fig:gpt5_penalty}--\ref{fig:cfar_train}, we summarize the multistart rule in table form rather than adding a second calibration plot. Table~\ref{tab:msf_risk_targets} reports results for a representative choice $L=10$ on the same 20\%/80\% calibration--validation split as Table~\ref{tab:risk_targets}. This provides a moderate number of starts without making the per-start level $\delta/L$ too small.

On this split, the validation FAR among accepted answers remains below the target in every nontrivial row. Relative to the Bonferroni baseline in Table~\ref{tab:risk_targets}, the multistart rule increases coverage most clearly at stricter risk targets for Scheme A and Scheme B. For example, under Scheme A at $r=0.2$, the validation acceptance rate increases from $0.1965$ to $0.2322$; under Scheme B at $r=0.2$, it increases from $0.2522$ to $0.2697$. At $r=0.3$ and $r=0.4$, the gains are smaller but still generally positive.  I-CALM benefits less from multistart scanning: at $r=0.2$ it still returns reject-all, and at $r=0.4$ it matches the Bonferroni threshold exactly. The practical takeaway is that multistart scanning can recover additional low-risk coverage when the strictest uncertainty region is statistically thin, but its benefit depends on the shape of the risk curve and on how much calibration mass lies just beyond the earliest grid points.

\begin{table}[H]
\centering
{\fontsize{9pt}{9pt}\selectfont
\setlength{\tabcolsep}{4pt}

\begin{tabular}{ccccc}
\toprule

\shortstack{\textbf{Risk}\\\textbf{Target}}
& \shortstack{\textbf{Calibrated}\\\textbf{Uncertainty}\\\textbf{Threshold}}
& \shortstack{\textbf{Calibration}\\\textbf{Acceptance}\\\textbf{Rate}}
& \shortstack{\textbf{Validation}\\\textbf{Acceptance}\\\textbf{Rate}}
& \shortstack{\textbf{Validation}\\$\widehat{\textbf{CFAR}}$} \\

\midrule

\multicolumn{5}{l}{\textbf{Scheme A (+1, -1)}} \\

\midrule

0.1 & reject-all & 0.00 & \mbox{0.00 {\scriptsize (0.00)}} & -- \\
0.2 & 0.11 & 0.24 & \mbox{0.23 {\scriptsize (0.01)}} & \mbox{0.15 {\scriptsize (0.01)}} \\
0.3 & 0.25 & 0.45 & \mbox{0.44 {\scriptsize (0.01)}} & \mbox{0.25 {\scriptsize (0.01)}} \\
0.4 & 0.56 & 0.63 & \mbox{0.62 {\scriptsize (0.01)}} & \mbox{0.35 {\scriptsize (0.01)}} \\

\midrule

\multicolumn{5}{l}{\textbf{Scheme B (+1, -1, +0.4)}} \\

\midrule

0.1 & reject-all & 0.00 & \mbox{0.00 {\scriptsize (0.00)}} & -- \\
0.2 & 0.13 & 0.27 & \mbox{0.27 {\scriptsize (0.01)}} & \mbox{0.18 {\scriptsize (0.01)}} \\
0.3 & 0.26 & 0.45 & \mbox{0.45 {\scriptsize (0.01)}} & \mbox{0.26 {\scriptsize (0.01)}} \\
0.4 & 0.67 & 0.64 & \mbox{0.63 {\scriptsize (0.01)}} & \mbox{0.36 {\scriptsize (0.01)}} \\

\midrule

\multicolumn{5}{l}{\textbf{I-CALM (+1, -1, +0.4)}} \\

\midrule

0.1 & reject-all & 0.00 & \mbox{0.00 {\scriptsize (0.00)}} & -- \\
0.2 & reject-all & 0.00 & \mbox{0.00 {\scriptsize (0.00)}} & -- \\
0.3 & 0.27 & 0.45 & \mbox{0.43 {\scriptsize (0.01)}} & \mbox{0.25 {\scriptsize (0.01)}} \\
0.4 & 0.74 & 0.60 & \mbox{0.58 {\scriptsize (0.01)}} & \mbox{0.34 {\scriptsize (0.01)}} \\

\bottomrule
\end{tabular}
}

\caption{Multistart fixed-sequence CP threshold selection with $L = 10$ for GPT-5 mini on PopQA using a 20\%/80\% calibration--validation split. Validation FAR is computed among answers accepted by the threshold selected on the calibration split.}
\label{tab:msf_risk_targets}
\end{table}

\end{document}